\newtheorem{theorem}{Theorem}[section]
\newtheorem{lemma}[theorem]{Lemma}
\newtheorem{corollary}[theorem]{Corollary}
\newtheorem{definition}[theorem]{Definition} 
\newtheorem{remark}{Remark}[section]
\newtheorem{proposition}[theorem]{Proposition}
\newcommand\ex{{\mathbb{E}}}
\DeclareMathOperator*{\argmin}{arg\,min}
\DeclareMathOperator*{\argmax}{arg\,max}
\newcommand{\blue}[1]{{\color{black}{#1}}}
\newcommand{\ignore}[1]{}
\begin{document}

\title{Weighted Distillation with Unlabeled Examples}

\author{
  Fotis Iliopoulos \\
  Google Research\\
\footnotesize{  \texttt{fotisi@google.com} }\\
 \and
  Vasilis Kontonis \\
  Google Research\\
 \footnotesize{ \texttt{kontonis@google.com} } \\
  \and
  Cenk Baykal \\
  Google Research\\
\footnotesize{  \texttt{baykalc@google.com} } \\
      \and
  Gaurav Menghani \\
  Google Research
  \\
 \footnotesize{ \texttt{gmenghani@google.com}} \\
  \and
  Khoa Trinh \\
  Google Research
  \\
\footnotesize{  \texttt{khoatrinh@google.com} }\\
    \and
  Erik Vee \\
  Google Research
  \\
\footnotesize{  \texttt{erikvee@google.com}} \\
}

\maketitle

\begin{abstract}
Distillation with unlabeled examples is a popular and powerful method for training deep neural networks in settings where the amount of labeled data  is limited: A large ``teacher'' neural network is trained on the labeled data available, and then it is used to generate labels on an unlabeled dataset (typically much larger in size). These labels are then utilized to train the smaller ``student'' model which will
actually be deployed.  Naturally, the success of the approach depends on the quality of the teacher's labels, since the student could be confused if trained on inaccurate data.
This paper proposes a principled approach for addressing this issue based on  \blue{a ``debiasing" reweighting of the student's loss function} tailored to the distillation training paradigm. Our method is hyper-parameter free,  data-agnostic, and simple to implement. We demonstrate significant improvements on popular academic datasets and we accompany our results with a theoretical analysis which rigorously justifies the performance of our method in certain settings.

\end{abstract}

\section{Introduction}
\label{introduction}

In many modern applications of deep neural networks, where the amount of labeled examples for training is limited, \emph{distillation with unlabeled examples}~\cite{chen2020big, distillation} has been enormously successful.
In this two-stage training paradigm 
a larger and more sophisticated  ``teacher model'' — typically non-deployable for the purposes of the application — is trained to learn from the limited amount of available training data in the first stage.
In the second stage, the teacher model is used to generate labels on an unlabeled dataset, which is usually much larger in size than the original dataset the teacher itself was trained on. These labels are then utilized to train the ``student model'', namely the model which will  actually be deployed. 
\blue{Notably, distillation with unlabeled examples is the most commonly used training-paradigm in applications where one finetunes and distills from very large-scale foundational models such as BERT~\cite{devlin2018bert} and  GPT-3~\cite{brown2020language} and, additionally,  it can be used to significantly improve distillation on supervised examples only (see e.g.~\cite{xie2020self}).}


While this
has proven to be a very powerful approach in practice, its success depends on the quality of labels provided by the teacher model. Indeed,  often times the teacher model generates inaccurate labels on a non-negligible portion of the unlabeled dataset, confusing the student. As  deep neural networks are susceptible to overfitting to corrupted labels~\cite{zhang2021understanding}, training the student on the teacher's noisy labels can lead to degradation in its generalization performance. \blue{As an example, Figure~\ref{oracle_and_noise} depicts an instance based on CIFAR-10 where filtering out the teacher's \emph{noisy} labels is quite beneficial for the student's performance.}



\blue{
We address this shortcoming by introducing a natural ``noise model'' for the teacher which allows us to modify the student's loss function in a principled way in order to produce an unbiased estimate of the student's clean objective. From a practical standpoint, this produces a fully ``plug-and-play'' 
method which adds minimal implementation overhead, and which is composable with essentially every other known distillation technique.}

The idea of compressing a teacher model into a smaller student model by matching the predictions of the teacher was initially introduced by Bucilu\v{a}, Caruana and Niculescu-Mizil \cite{bucilua} and, since then,  variations of this method~\cite{distillation,   lee2013pseudo, muller2020subclass,  pham2021meta,  riloff1996automatically, scudder1965probability, yarowsky1995unsupervised}  have been applied in a wide variety of contexts~\cite{radosavovic2018data, yalniz2019billion, zoph2020rethinking}. (Notably, some of these applications go beyond compression — see e.g.~\cite{chen2020big,  xie2020self,yalniz2019billion, zou2019confidence} for reference.) In the simplest form of the method~\cite{lee2013pseudo},  and using classification as a canonical example, the labels produced by the teacher are one-hot vectors that represent the class which has the maximum predicted probability — this method is often referred to as ``hard''-distillation. More generally, Hinton et. al.~\cite{chen2020big, distillation} have shown that it is often beneficial to train the student  so that it minimizes the cross-entropy (or KL-divergence)  with the probability distribution produced by the teacher while also potentially using a temperature higher than $1$ in the softmax of both models (``soft''-distillation). (Temperatures higher than $1$ are used in order to emphasize the difference between the probabilities of the classes with lower likelihood of corresponding to the correct label according to the teacher model.)


The main contribution of this work is a principled method for improving distillation with unlabeled examples by reweighting the loss function of the student. That is, we  assign importance weights to the examples labeled by the teacher so that each weight reflects (i) how likely it is that the teacher has made an inaccurate prediction regarding the label of the example and (ii) how ``distorted''  the unweighted loss function we use to train the student  is (measured with respect to using  the ground-truth label for the example instead of the teacher's label). \blue{More concretely, our  reweighting strategy is based on introducing and analyzing a certain  noise model designed to capture the behavior of the teacher  in distillation.} 
In this setting, we are able to come up with a closed-form solution for weights that ``de-noise'' the objective in the sense that (on expectation) 
they simulate having access to clean labels. 
Crucially, we empirically observe that the key characteristics of \blue{our  noise model for the teacher}   can be effectively estimated through a small validation dataset, \blue{since in practice the teacher's noise is neither random nor adversarial, and it typically correlates well with its ``confidence'' — see e.g. Figure~\ref{oracle_and_noise}}. In particular, we use the validation dataset to learn a map that takes as input the teacher's and student's  confidence for the label of a certain example and outputs estimates for the quantities mentioned in items (i) and (ii) above. Finally, we plug in these estimates to our closed-form solution for the weights  so that, overall, we obtain an automated way of computing the \blue{student's reweighted objective} in practice. A detailed description of our method can be found in Section~\ref{our_method}.

Our main findings and contributions can be summarized as follows:

\begin{itemize}
    \item We propose a \emph{principled and hyperparameter-free}  reweighting method for knowledge distillation with unlabeled examples.
    The method is efficient, data-agnostic and simple to implement.  
    

    \item We present extensive experimental results which show that our method provides significant improvements when evaluated in standard benchmarks.
    
    
    \item 

    Our  reweighting technique comes with provable guarantees:
    (i) it is information-theoretically optimal; and (ii) under standard assumptions SGD optimization of the reweighted objective
     learns a solution with nearly optimal generalization.
\end{itemize}

\begin{figure*}[!ht]

  \begin{minipage}[t]{0.5\textwidth}
  \centering
 \includegraphics[width=0.8\textwidth]{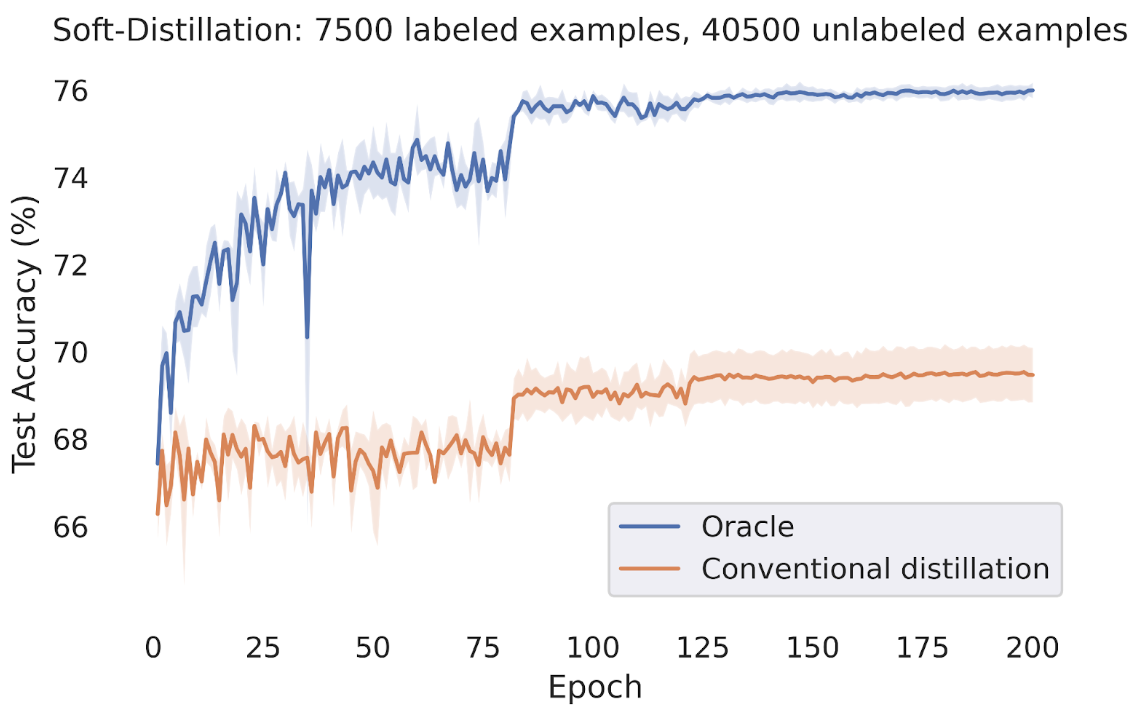} \\
  \end{minipage}%
  \hfill
    \begin{minipage}[t]{0.5\textwidth}
  \centering
 \includegraphics[width=0.8\textwidth]{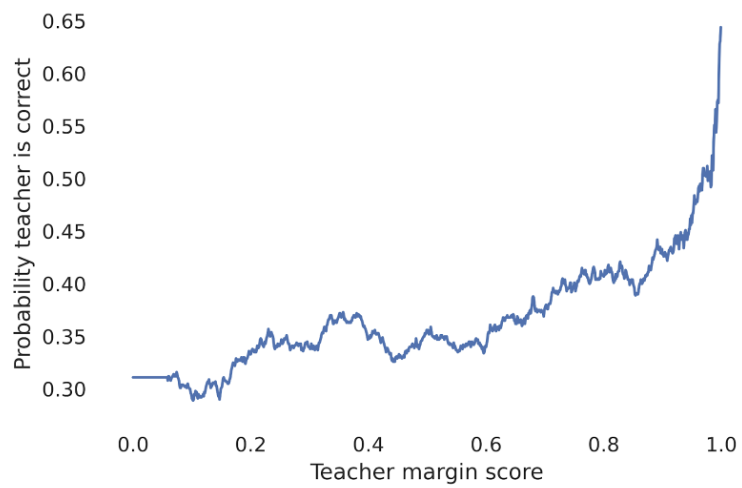}
  \end{minipage}%
  \hfill
    
\caption{\blue{\textbf{Left:} Performance comparison between a student trained using conventional distillation with unlabeled examples (orange) and a student trained only on the examples which are labeled \emph{correctly} by the teacher (blue). Here we assume access to $7500$ labeled examples and $40500$ unlabeled examples of CIFAR-10. The teacher is a MobileNet of depth multiplier $2$, while the student is a MobileNet of depth multiplier $1$.
\textbf{Right:} Plot of teacher's accuracy as a function of the margin score.}
 }
	\label{oracle_and_noise}
\end{figure*}

\subsection{Related work}\label{Related_work}\blue{
\textbf{Fidelity vs Generalization in knowledge distillation.} Conceptually, our work is related to the paper of Stanton et al.~\cite{stanton2021does} where the main message is that ``good student accuracy does not imply good distillation fidelity'', i.e., that more closely matching the teacher does not necessarily lead to better student generalization. In particular,~\cite{stanton2021does} demonstrates that when it comes to enlarging the distillation dataset beyond the teacher training data, there is a trade-off between optimization complexity and distillation data quality. Our work can be seen as a principled way of improving this trade-off.

\noindent
\textbf{Advanced distillation techniques.} Since the original paper of Hinton et. al.~\cite{distillation}, there have been several follow-up works~\cite{ahn2019variational,chen2021wasserstein, muller2020subclass, tian2019contrastive} which develop advanced distillation techniques which aim to  enforce greater consistency between the teacher and the student (typically in the context of distillation on labeled examples). These methods enforce consistency not only between the teacher's predictions and student's predictions, but also between the representations learned by the two models. However, in the context of distillation with unlabeled examples,  forcing the student to match the teacher's inaccurate predictions is still harmful, and therefore weighting the corresponding loss functions via our method is still applicable and beneficial. We demonstrate this fact by considering instances of the Variational Information Distillation for Knowledge Transfer (VID)~\cite{ahn2019variational} framework, and showing how it is indeed beneficial to combine it with our method in Section~\ref{sec:celeba}.
We also show that our approach provides benefits on top of any improvement obtained via temperature-scaling in Appendix~\ref{temperature_effect}.}

\noindent
\textbf{Learning with noisy data techniques.} 
As we have already discussed, the main conceptual contribution of our work is viewing the teacher model in distillation with unlabeled examples as the source of \blue{stochastic noise with certain characteristics}. Naturally, one may wonder what is the relationship between our method and works from  the vast literature of learning with noisy data  (e.g.~\cite{bar2021multiplicative,  frenay2013classification, gamberger1999experiments, jiang2018mentornet, kumar2021constrained, liu2015classification, majidi2021exponentiated,  natarajan2013learning, pleiss2020identifying, ren2018learning}). 
The answer is that our method \emph{does not attempt to solve the generic ``learning with noisy data''} problem, which is a highly non-trivial task in the absence of assumptions (both in theory and in practice).  \blue{Instead, our contribution is to observe and exploit the fact that the noise introduced by the teacher in distillation has  structure, as it correlates with several metrics of confidence such as the margin-score, the entropy of the predictions etc. We use and quantify this empirical observation to our advantage in order to formulate a principled method for developing a debiasing reweighting scheme (an approach inspired by the ``learning with noisy data"-literature) which comes with theoretical guarantees. An additional difference is that }works from the learning with noisy data literature typically assume that the training dataset consists of corrupted \emph{hard} labels, and often times their proposed method is not compatible with \emph{soft}-distillation (in the sense that they degrade its performance, making it much less effective) — see e.g.~\cite{muller2019does} for a study of this phenomenon in the case of label smoothing~\cite{lukasik2020does, szegedy2016rethinking}.

\noindent
\textbf{Uncertainty-based weighting schemes.} \blue{Related  to our method are approaches for semi-supervised learning
where examples are downweighted or filtered out when the teacher is ``uncertain.''
(A plethora of ways for defining and measuring ``uncertainty'' have been proposed in the literature, e.g. entropy, margin-score, dropout variance etc.) These methods are independent of the student model (they only depend on the teacher model), and so they cannot hope to remove the bias from the student's loss function. In fact, these methods can be viewed as preprocessing steps that can be combined with the student-training process we propose. To demonstrate this we both combine  and compare our method to the fidelity-based weighting scheme of~\cite{dehghani2017fidelity} in Section~\ref{fidelity_weights_comparison}}.


\subsection{Organization of the paper.} 
In Section~\ref{weighted_distillation}, we present our method in detail, while in Section~\ref{experimental_results} we present our key experimental results. In Section~\ref{theoretical_aspects}, we discuss the theoretical aspects of our work. In Section~\ref{conclusion}, we summarize our results, we discuss the benefits and limitations of our method, and  future work. Finally, we present extended experimental and theoretical results in the Appendix.


\section{Weighted distillation with unlabeled examples}
\label{weighted_distillation}

\blue{In this section we present our method. In Section~\ref{background}, we review multiclass classification. In Section~\ref{adversarial_setting}, we introduce the noise model for the teacher in distillation, which motivates our approach. And finally, in Section~\ref{our_method}, we describe our method in detail.}

\subsection{Multiclass classification}
\label{background} 
In multiclass classification, we are given a training sample $S = 
\{(x_1, y_1), (x_2, y_2), \ldots (x_n, y_n)\}$ drawn from $\mathbb{P}^n$, where $\mathbb{P}$ is an unknown distribution over instances $\mathcal{X}$ and labels $\mathcal{Y} = [L] = \{1, 2, \ldots, L \}$. Our goal is to learn a predictor $f: \mathcal{X}  \rightarrow \mathbb{R}^{L}$, namely to minimize the \emph{risk} of $f$. The latter is defined as the expected loss of $f$:
\begin{align}\label{risk}
R(f) = \ex[ \ell(y, f(x) ) ]     
\end{align}
where $(x,y)$ is drawn from $\mathbb{P}$, and $\ell: [L] \times \mathbb{R}^L \rightarrow \mathbb{R}_{+} $ is a loss function such that, for a label $y \in [L]$  and prediction vector $f(x) \in \mathbb{R}^L$, $\ell(y, f(x)) $ is the loss incurred for predicting $f(x)$ when the true label is $y$. The most common way to approximate the risk of a predictor $f$  is via the so-called \emph{empirical risk}:
\begin{align}\label{empirical_risk}
R_{S}(f) = \frac{1}{|S|} \sum_{(x,y)\in S} \ell(y, f(x) ).  
\end{align}
That is, given a hypothesis class of predictors $\mathcal{F}$, our goal is typically to find $\min_{f \in \mathcal{F}} R_S(f) $ as a way to estimate $\min_{f \in \mathcal{F}} R(f)$.

\subsection{\blue{Debiasing weights}} 
\label{adversarial_setting}

As we have already discussed, the main drawback of distillation with unlabeled examples is essentially that the empirical risk minimizer corresponding to the  dataset labeled by the teacher cannot be trusted, since the teacher may  generate inaccurate labels. To quantify this phenomenon and guide our algorithmic design, we consider the following simple but natural \blue{noise model for the teacher}.

Let  $\mathbb{X} $ be an unknown distribution over instances $\mathcal{X}$. We assume the existence of a \emph{ground-truth classifier} so that each $x \in \mathcal{X}$ is associated with a ground-truth label $f_{\mathrm{true}}(x) \in [L] = \{1, 2, \ldots, L \}$. In other words, a clean labeled example is of the form $(x, f_{\mathrm{true}}(x)) \sim \mathbb{P}$ (and $x \sim \mathbb{X}$). Additionally, we consider a \blue{stochastic} adversary that given an instance $x \in \mathcal{X}$, outputs a ``corrupted'' label $y_{\mathrm{adv}}(x)$ with probability $p(x)$, and the ground-truth label $f_{\mathrm{true}}(x) $ with probability $1-p(x)$. Let $ \mathbb{D}$ denote the induced adversarial distribution over instances and labels.

It is not hard to see that the empirical risk with respect to a  predictor $f$ and sample from $\mathbb{D}$ is \emph{not} an unbiased estimator of the risk  
\begin{align}\label{clean_risk}
R(f) = \ex_{ x \sim \mathbb{X} } [ \ell(f_{ \mathrm{true} }(x) , f(x) ) ] 
\end{align}
— see Proposition~\ref{expectations}.  On the other hand, the following \emph{weighted} empirical risk~\eqref{weighted_empirical_risk} is indeed an unbiased estimator of $R(f)$.
For each $x \in \mathcal{X}$ let
\begin{align}\label{weight_definition}
w_f(x) =     \frac{1}{ 1 + p(x) \left( \mathrm{distortion}_f(x) -1 \right) },
~~~
\text{where}
~~~
    \mathrm{distortion}_f(x) = \frac{ \ell(y_{\mathrm{adv}}(x) , f(x) )  }{  \ell(f_{\mathrm{true}}(x), f(x)   )    },
\end{align}
and define
\begin{align}\label{weighted_empirical_risk}
R_S^w(f)  = \frac{1}{|S|} \sum_{(x,y)\in S} w_f(x)\, \ell( y, f(x) ),     
\end{align}
where $S = \{(x_i, y_i) \}_{i=1}^n \sim \mathbb{D}^n$. Observe that the weight for each instance $x$ depends on (i) how likely it is the adversary corrupts its label; and on (ii) how this corrupted label ``distorts'' the loss we observe at instance $x$. 
In the following  proposition we establish that the standard (unweighted) empirical risk with respect to distribution $\mathbb{D}$ and a predictor $f$ is a biased estimator of the risk of $f$ under the clean distribution $\mathbb{P}$, while the weighted empirical risk~\eqref{weighted_empirical_risk} is  an unbiased one.

\begin{proposition}[Debiasing Weights]\label{expectations}
Let $S = \{(x_i, y_i) \}_{i=1}^n \sim \mathbb{D}^n$ be a sample from the adversarial distribution. \blue{Defining $\mathrm{Bias}(f) = \ex_{x\sim \mathbb{X} }  \left[   p(x) \cdot( \mathrm{distortion}_f(x) -1  ) \cdot \ell(f_{\mathrm{true}}(x), f(x))    \right]$ we have:}
\begin{align*}
\text{``reweighted''} ~~~
\ex[R_S^w( f)] =  R(f)
~~~~~~ \text{vs} ~~~~~~
\text{``standard''} ~~~ 
\ex [R_S(f) ] &=   R(f) + \mathrm{Bias}(f) \,.
\end{align*}
\end{proposition}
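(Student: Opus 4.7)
The plan is a direct conditional-expectation computation. By linearity of expectation and the fact that $S$ consists of i.i.d.\ draws from $\mathbb{D}$, both claims reduce to showing the corresponding identities for a single sample $(x,y) \sim \mathbb{D}$, namely
\[
\ex_{(x,y)\sim \mathbb{D}}[\ell(y,f(x))] = R(f) + \mathrm{Bias}(f), \qquad \ex_{(x,y)\sim \mathbb{D}}[w_f(x)\,\ell(y,f(x))] = R(f).
\]
So the whole proof is really a two-step conditioning argument, first on $x \sim \mathbb{X}$ and then on the coin flip of the stochastic adversary.

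First I would condition on $x$. By the definition of $\mathbb{D}$, given $x$ the label $y$ equals $y_{\mathrm{adv}}(x)$ with probability $p(x)$ and equals $f_{\mathrm{true}}(x)$ with probability $1 - p(x)$. Hence
\[
\ex[\ell(y,f(x)) \mid x] = p(x)\,\ell(y_{\mathrm{adv}}(x),f(x)) + (1-p(x))\,\ell(f_{\mathrm{true}}(x),f(x)).
\]
Pulling out a factor of $\ell(f_{\mathrm{true}}(x),f(x))$ and using the definition of $\mathrm{distortion}_f(x)$ rewrites the right-hand side as $\bigl(1 + p(x)(\mathrm{distortion}_f(x)-1)\bigr)\,\ell(f_{\mathrm{true}}(x),f(x))$. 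Taking expectation over $x \sim \mathbb{X}$ gives $R(f) + \mathrm{Bias}(f)$ by the definition of $\mathrm{Bias}(f)$ in the proposition statement, which is the ``standard'' identity.

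For the ``reweighted'' identity, I would observe that $w_f(x)$ depends only on $x$ (not on the adversary's coin flip), so it pulls out of the conditional expectation: $\ex[w_f(x)\,\ell(y,f(x)) \mid x] = w_f(x)\cdot\ex[\ell(y,f(x)) \mid x]$. Plugging in the expression computed above, the factor $1 + p(x)(\mathrm{distortion}_f(x)-1)$ cancels precisely against $w_f(x) = 1/(1 + p(x)(\mathrm{distortion}_f(x)-1))$ by the definition of the debiasing weight. What remains is simply $\ell(f_{\mathrm{true}}(x),f(x))$, and taking expectation over $x \sim \mathbb{X}$ yields $R(f)$.

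There is no real obstacle here; the only thing to be slightly careful about is the implicit assumption that $\ell(f_{\mathrm{true}}(x),f(x)) > 0$ almost surely, so that $\mathrm{distortion}_f(x)$ and $w_f(x)$ are well-defined, and that $1 + p(x)(\mathrm{distortion}_f(x)-1) \ne 0$. Both are automatic in any standard setting (e.g.\ strictly positive surrogate losses like cross-entropy on soft predictions); if the paper wants to be thorough one could note that on the measure-zero set where the loss vanishes both sides of each identity simply contribute $0$ and the cancellation above goes through verbatim.
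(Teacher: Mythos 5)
Your proof is correct and follows essentially the same route as the paper's: reduce to a single sample by i.i.d.\ linearity, condition on $x$ to write the conditional expectation as the mixture $p(x)\ell(y_{\mathrm{adv}}(x),f(x)) + (1-p(x))\ell(f_{\mathrm{true}}(x),f(x))$, factor out $\ell(f_{\mathrm{true}}(x),f(x))$ via the distortion, and observe that $w_f(x)$ cancels the resulting factor. Your added caveat about $\ell(f_{\mathrm{true}}(x),f(x))>0$ being needed for the distortion and weight to be well-defined is a reasonable point the paper leaves implicit, but it does not change the argument.
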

Notice that, as expected, the bias of the unweighted predictor is a function of the ``power'' of the adversary, i.e., a function of how often they can corrupt the label of an instance, and the ``distortion'' this corruption causes to the loss we observe. The proof of Proposition~\ref{expectations} follows from simple, direct calculations and it can be found in Appendix~\ref{main_proposition_proof}.

Intuitively, given a sufficiently large sample $ S \sim \mathbb{D}^n$, optimizing an unbiased estimator for the risk should provide a better approximation for $\min_{f \in F} R(f)$ compared to optimizing an estimator with constant bias. We formalize this intuition in Section~\ref{theoretical_aspects} and in Appendix~\ref{statistical_perspective}.
\subsection{Our method}
\label{our_method}

We consider the standard setting for distillation with unlabeled examples where we are given a  dataset $S_{\ell} = \{ (x_i, y_i) \}_{i=1}^m$ of  $m$ labeled examples from an unknown distribution $\mathbb{P}$, and a dataset   $S_u = \{ x_i \}_{i=m+1}^{m+n}$ of  $n$ unlabeled examples — typically, $n \ge m$. We also assume the existence of a (small) clean validation dataset $S_v = \{ (x_i, y_i) \}_{i = m+ n+1 }^{ i= m+n+q}   $ of size $q$. Finally, let $\ell: \mathbb{R}^L \times \mathbb{R}^L \rightarrow \mathbb{R}_+$ be a loss function that takes as input two  vectors over the set of labels $[L]$.
We describe our method below and more formally in Algorithm~\ref{alg:weighted_distillation} \blue{in Appendix~\ref{estimate_weights_algorithm}}.

\blue{
\begin{remark}
\label{remark_on_validation} 
The only assumption we need to make about the validation set $S_v$ is that it is not in the train set of the teacher model. That is, set $S_v$ can be used in the train set of the student model if needed — we chose to present $S_v$ as a completely independent hold out set to make our presentation as conceptually clear as possible.
\end{remark}
}

\noindent
\textbf{Training the teacher.}
The teacher model is trained on dataset $S_{\ell}$, and then it is used to generate labels for the instances in $S_u$. The labels can be one-hot vectors or probability distributions on $[L]$, depending on whether we apply ``hard'' or ``soft'' distillation, respectively. 

\noindent
\textbf{Training the student.} 
We start by pretraining the student model on dataset $S_{\ell}$. Then, the idea  is to think of the teacher model as the \blue{source of noise} in the setting of Section~\ref{adversarial_setting},  compute a weight $w_f(x)$ for each example $x$ based on~\eqref{weight_definition}, and finally train the student on the union of labeled and teacher-labeled examples by minimizing the weighted empirical risk (examples from $S_{\ell}$ are assigned unit weight).

We point out two remarks. First, in order to apply~\eqref{weight_definition} to compute the weight of an example $x$, we need to have  estimates of $p(x)$ and $\mathrm{distortion}_f(x)$. To obtain these estimates we use the validation dataset $S_v$ as we describe in the next paragraph. Second, observe that, according to~\eqref{weight_definition},  the weight of an example is a function of the predictor, namely the parameters of the model in the case of neural networks.\blue{ This means that ideally we  should be updating our weights assignment every time the parameters of the student model get updated during training. However,  we empirically observe that even  estimating the weight assignment \emph{only once} during the whole training process (right after training the student model on $S_{\ell}$)  suffices for our purposes, and so our method adds minimal overhead to the standard training process. More generally, the fact that our process of computing the weights is simple and inexpensive allows us to recompute them during training (say every few epochs) to improve our approximation of the theoretically optimal weights. We explore the effect of updating the weights during training in Section~\ref{updating_the_weights}.}

\noindent
\textbf{Estimating the weights.}
\blue{We estimate the weights using the Nearest Neighbors method on $S_v$ to learn a map  that takes as input the teacher's and student's  ``confidence'' for the label of a certain example $x$, and outputs estimates for $p(x)$ and $\mathrm{distortion}_f(x)$ so we can apply~\eqref{weight_definition}. In  our experiments we measure the confidence of a model either via the \emph{margin-score}, i.e., the difference between the largest two predicted classes for the label of a given example (see e.g. the so-called “margin” uncertainty sampling variant~\cite{roth2006margin}), or via the \emph{entropy} of its prediction. However, one could use \emph{any} metric (not necessarily confidence) that correlates well with the accuracy of the corresponding models.

More concretely, we reduce the task of estimating the weights to a two-dimensional (i.e., two inputs and  two outputs) regression task over the validation set which we solve using the Nearest Neighbors method. In particular, our Nearest Neighbor data structure is constructed as follows. Each example $x$ of the validation set is assigned the two following pairs of points: (i)  (teacher confidence at $x$, student confidence at $x$) — this is the covariate of the regression task; (ii)  ($1$, distortion at ($x$)) if the teacher correctly predicts the label of $x$, or ($0$, distortion at $x$), if the teacher does not correctly predict the label of $x$ — this is the response of the regression task. The query corresponding to an unlabeled example $x'$ is of the form (teacher confidence at $x'$, student confidence at $x'$). The Nearest Neighbors data structure returns the average response over the $k$ closest in euclidean distance pairs (teacher confidence at $x$, student confidence at $x$) in the validation set. The value of $k$ is specified in the next paragraph. The pseudocode for our method can be found in Algorithm~\ref{alg:estimating_weights} in Appendix~\ref{estimate_weights_algorithm}.
}

We point out two remarks. First, the number $k$ of neighbors we use for our weights-estimation is always $k =  \frac{\sqrt{|S_v|}}{2}  $.  \blue{This is because choosing $k  = \Theta(q^{2/(2 +\mathrm{dim} ) } )  $, where $q$ is the size of the validation dataset and  $\mathrm{dim}$ is the dimension of the underlying metric space ($\mathrm{dim}=2$ in our case),}  is asymptotically optimal, and $1/2$ is a popular choice for the hidden constant used in  practice, see e.g.~\cite{cover1967nearest, hastie2009elements}.  Second, notice that~\eqref{weight_definition} implies that the weight of an example could be larger than $1$ if (and only if) the corresponding distortion value~\eqref{weight_definition} at that example is less than $1$. This could happen for example if both the student and teacher have the same (or very similar) inaccurate prediction for a certain example. In such a case, the value of the weight in~\eqref{weight_definition} informs us that the loss at this example should be larger than the (low) value the unweighted loss function suggests. However, since we do not have the ground-truth label for a point during training — but only the inaccurate prediction of the teacher — having a weight larger than $1$ in this case would most likely guide our student model to fit an inaccurate label. To avoid this phenomenon, we always project our weights onto the $[0, 1]$ interval (Line~\ref{projection} of Algorithm~\ref{alg:estimating_weights}). In Appendix~\ref{MSE} we discuss an additional  reason why it is beneficial to project the weights of examples of low distortion onto  $[0,1]$  based on a MSE analysis.

\section{Experimental results}
\label{experimental_results} 

\blue{In this section we present our  experimental results. In Section~\ref{one-shot} we consider  an experimental setup according to which the weights are estimated only once during the whole training process. In Section~\ref{updating_the_weights} we study the effect of updating the weights during training.  Finally, in Section~\ref{fidelity_weights_comparison} we demonstrate that our method can be combined with uncertainty-based weighting techniques by both combining and comparing our method to the fidelity-based weighting scheme of~\cite{dehghani2017fidelity}.
\begin{figure*}[!ht]
  \centering
  \begin{minipage}[t]{0.33\textwidth}
  \centering
 \includegraphics[width=1\textwidth]{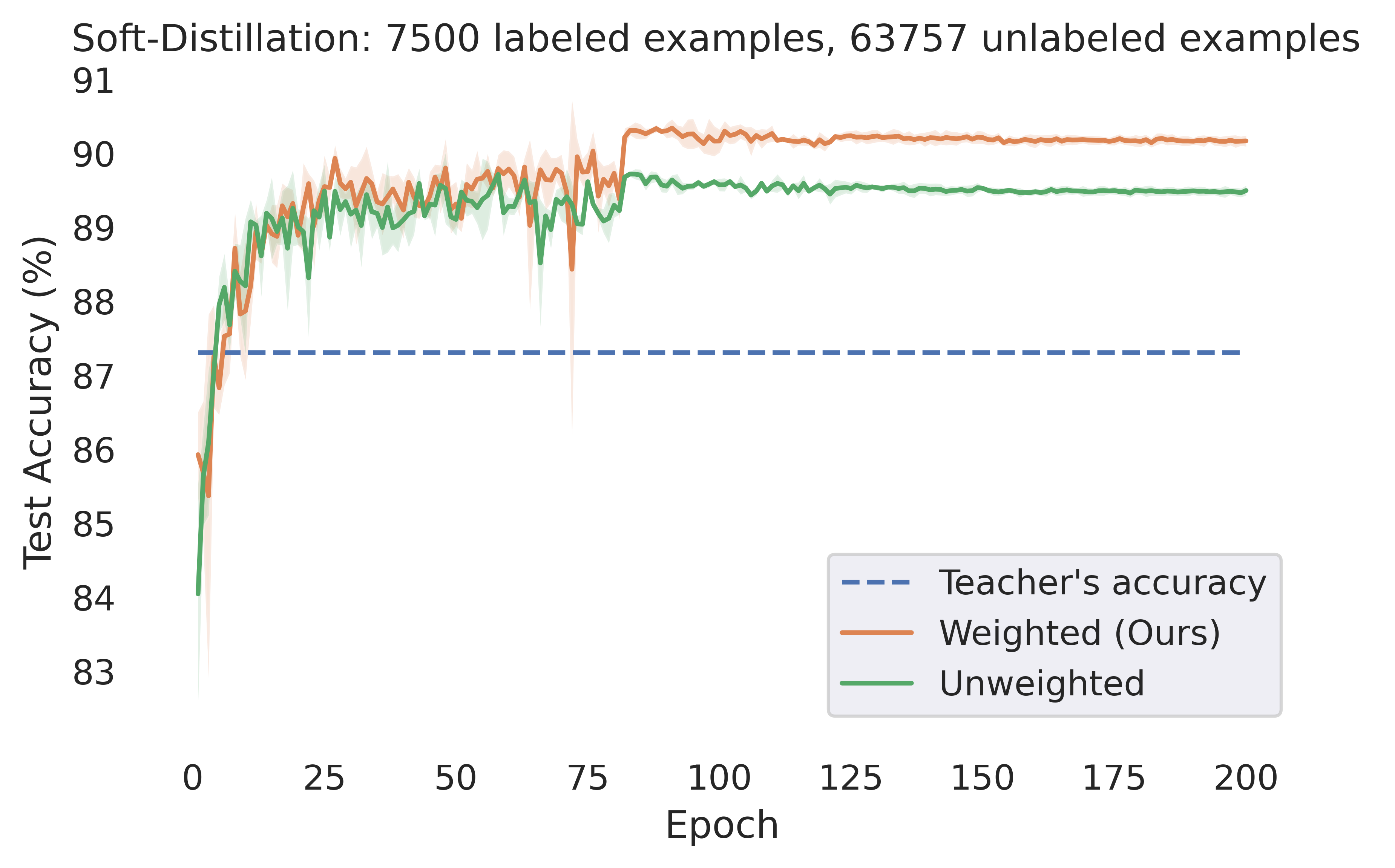} 
  \end{minipage}%
  \hfill
  \begin{minipage}[t]{0.33\textwidth}
  \centering
 \includegraphics[width=1\textwidth]{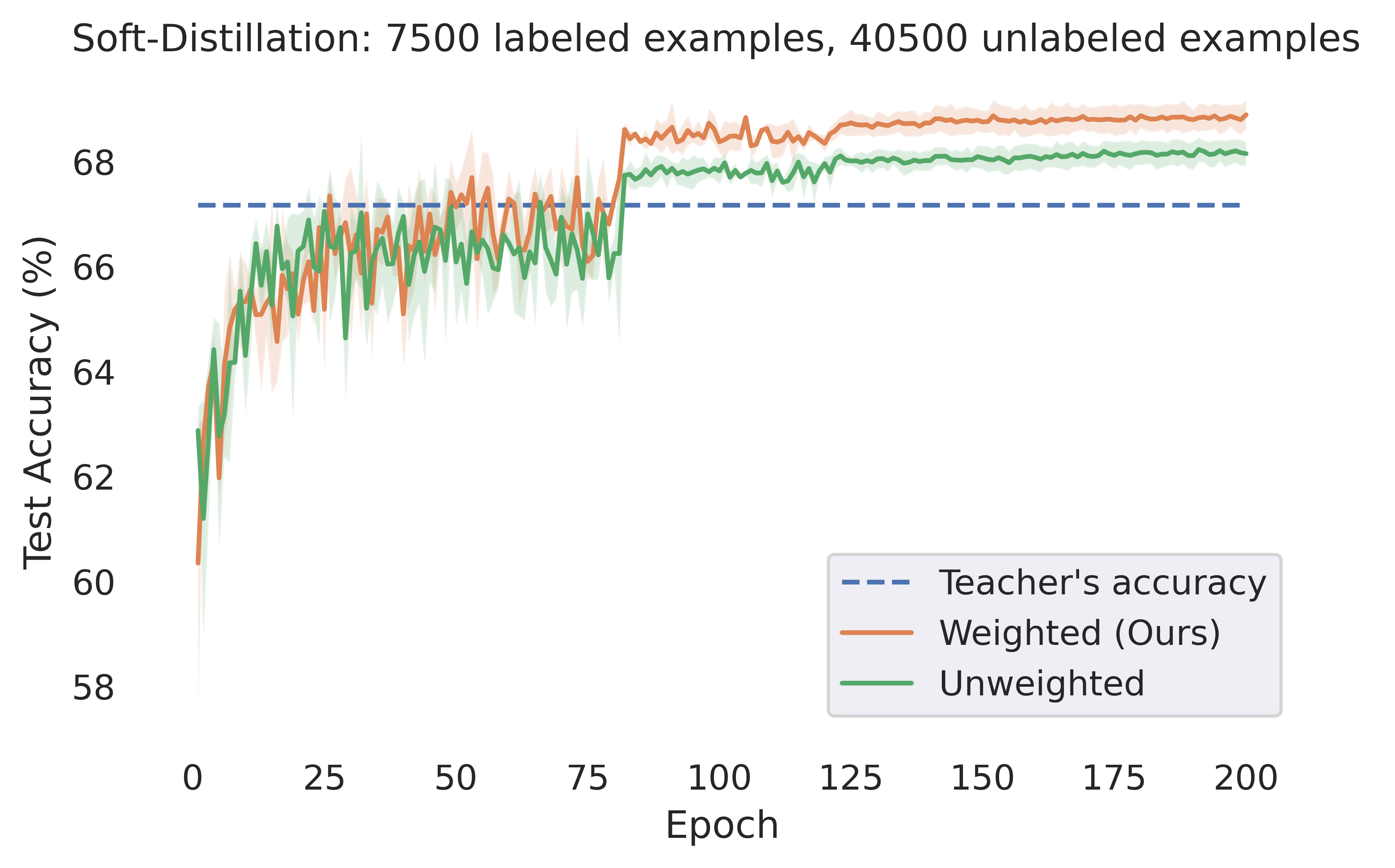} 
  \end{minipage}%
  \hfill
    \begin{minipage}[t]{0.33\textwidth}
  \centering
 \includegraphics[width=1\textwidth]{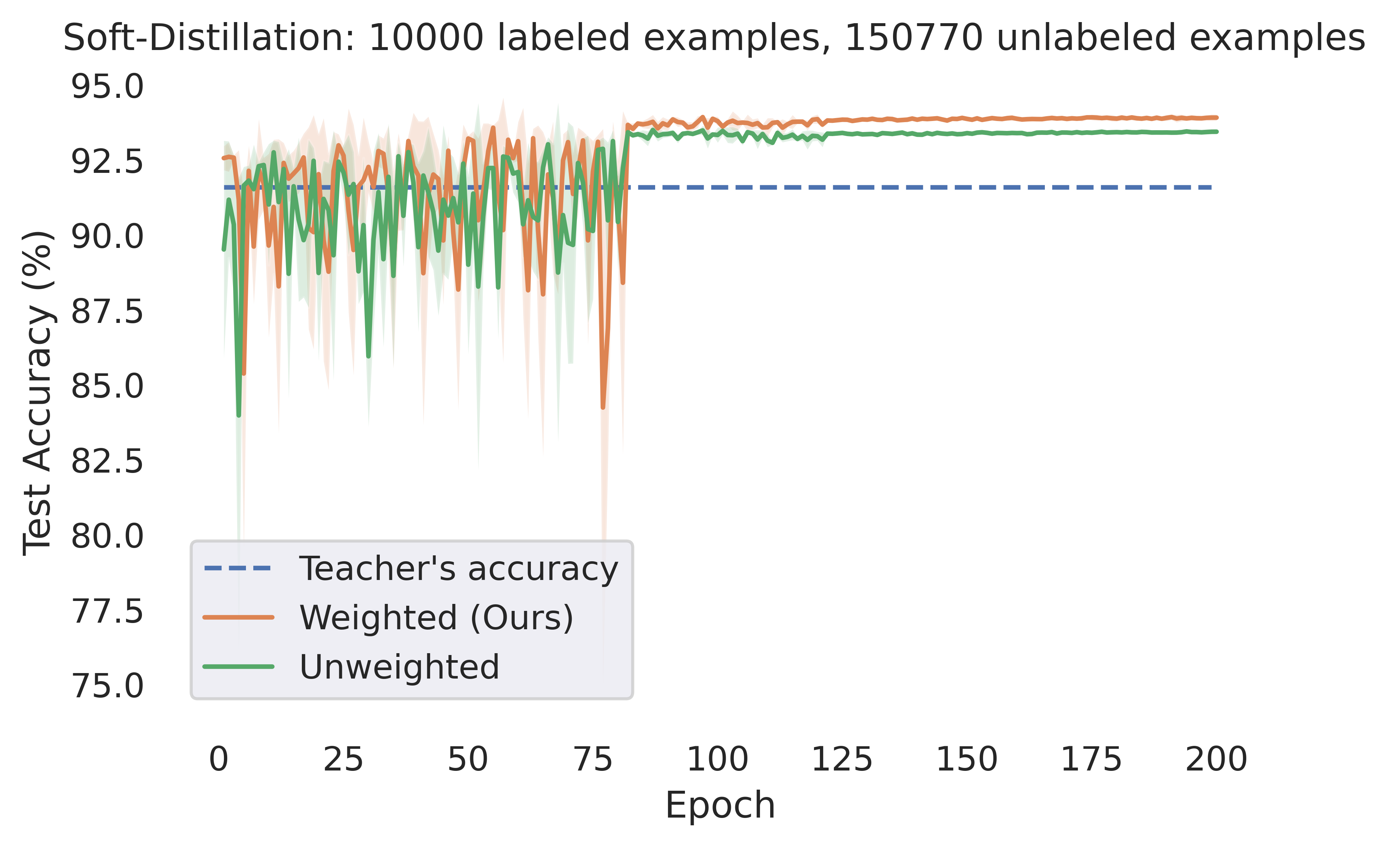}
  \end{minipage}%
  \hfill
  
  \centering
  \begin{minipage}[t]{0.33\textwidth}
  \centering
 \includegraphics[width=1\textwidth]{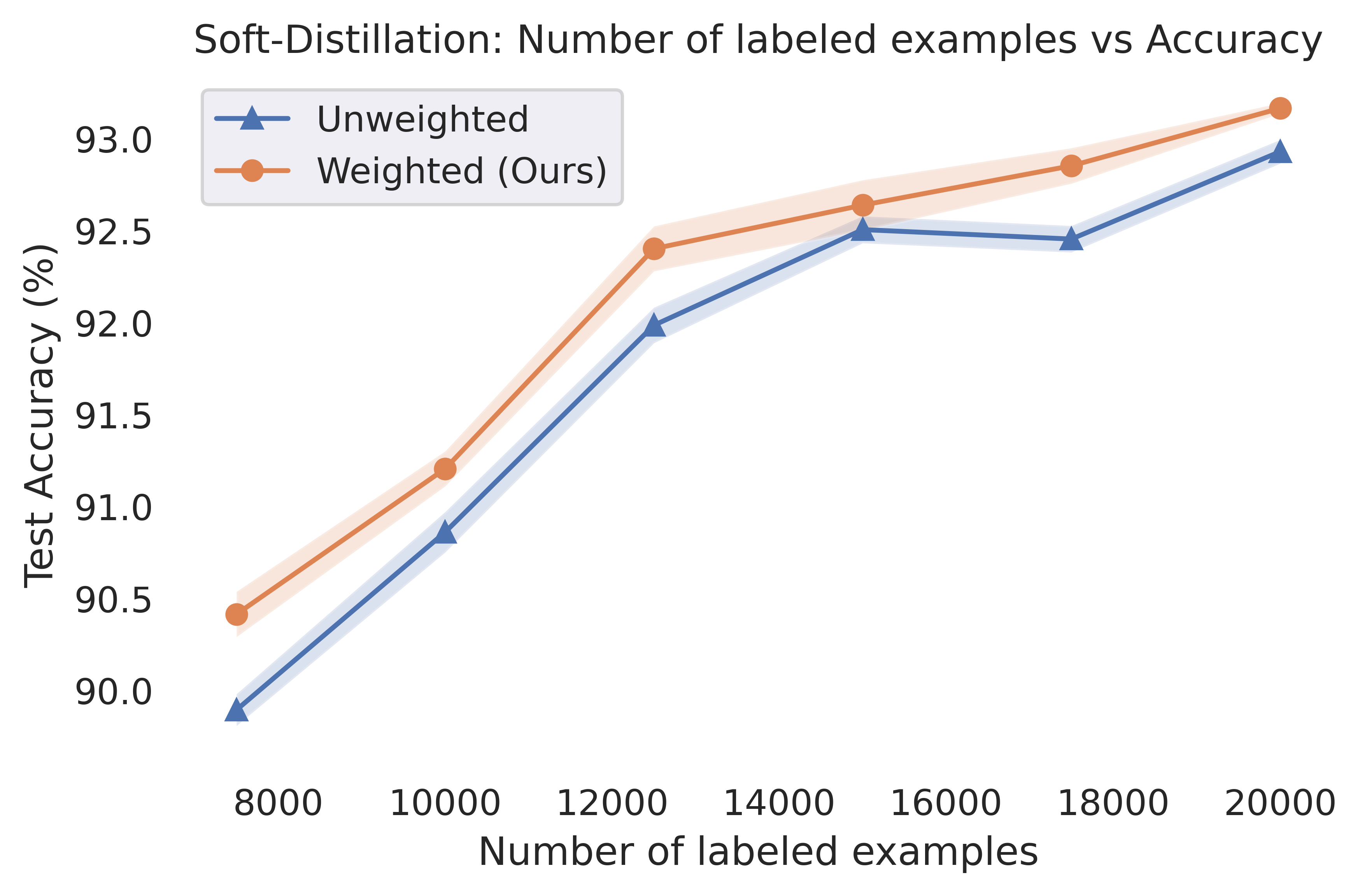} \\
 SVHN 
  \end{minipage}%
  \hfill
  \begin{minipage}[t]{0.33\textwidth}
  \centering
 \includegraphics[width=1\textwidth]{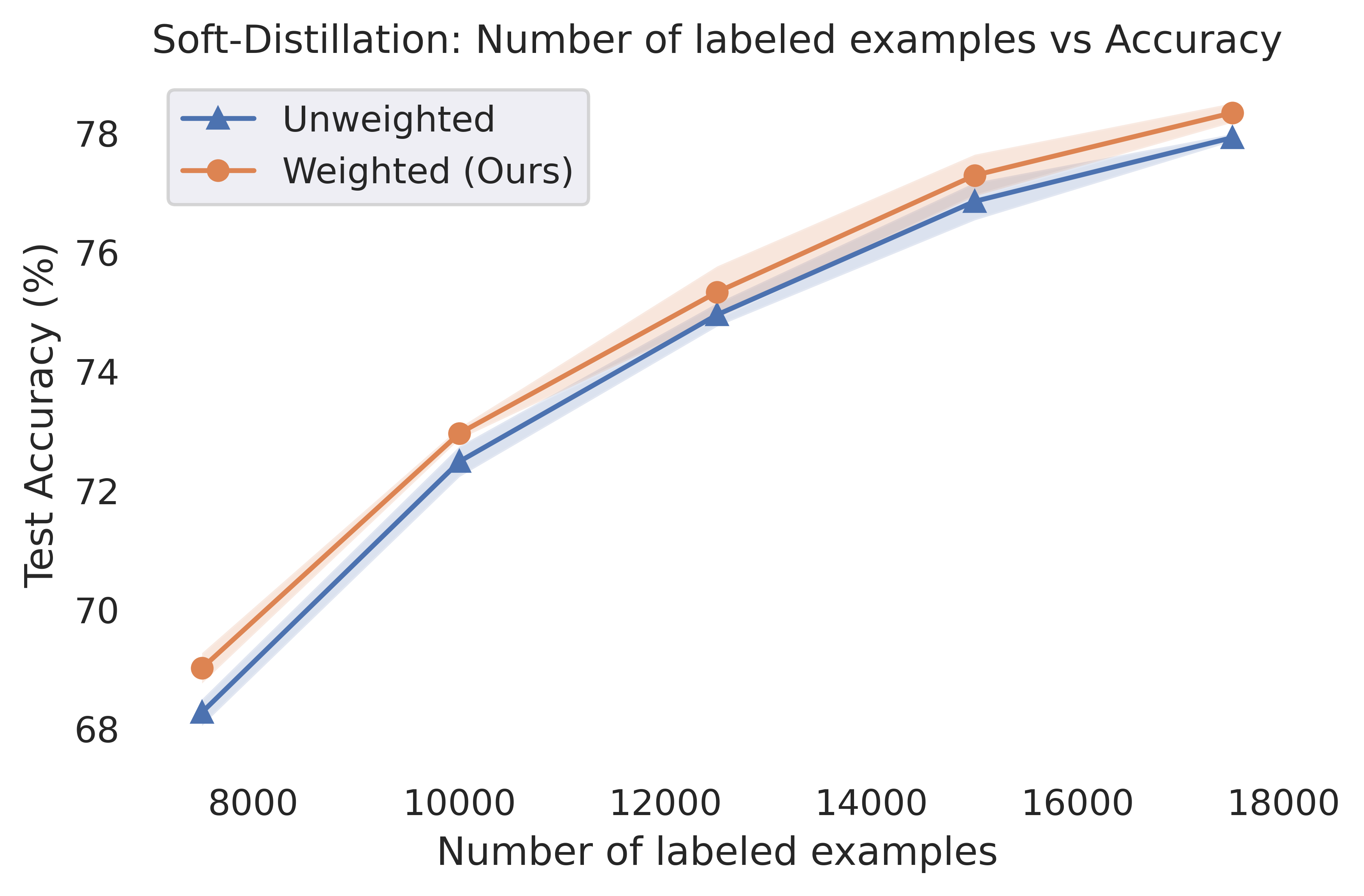} \\
 CIFAR-10
  \end{minipage}%
  \hfill
    \begin{minipage}[t]{0.33\textwidth}
  \centering
 \includegraphics[width=1\textwidth]{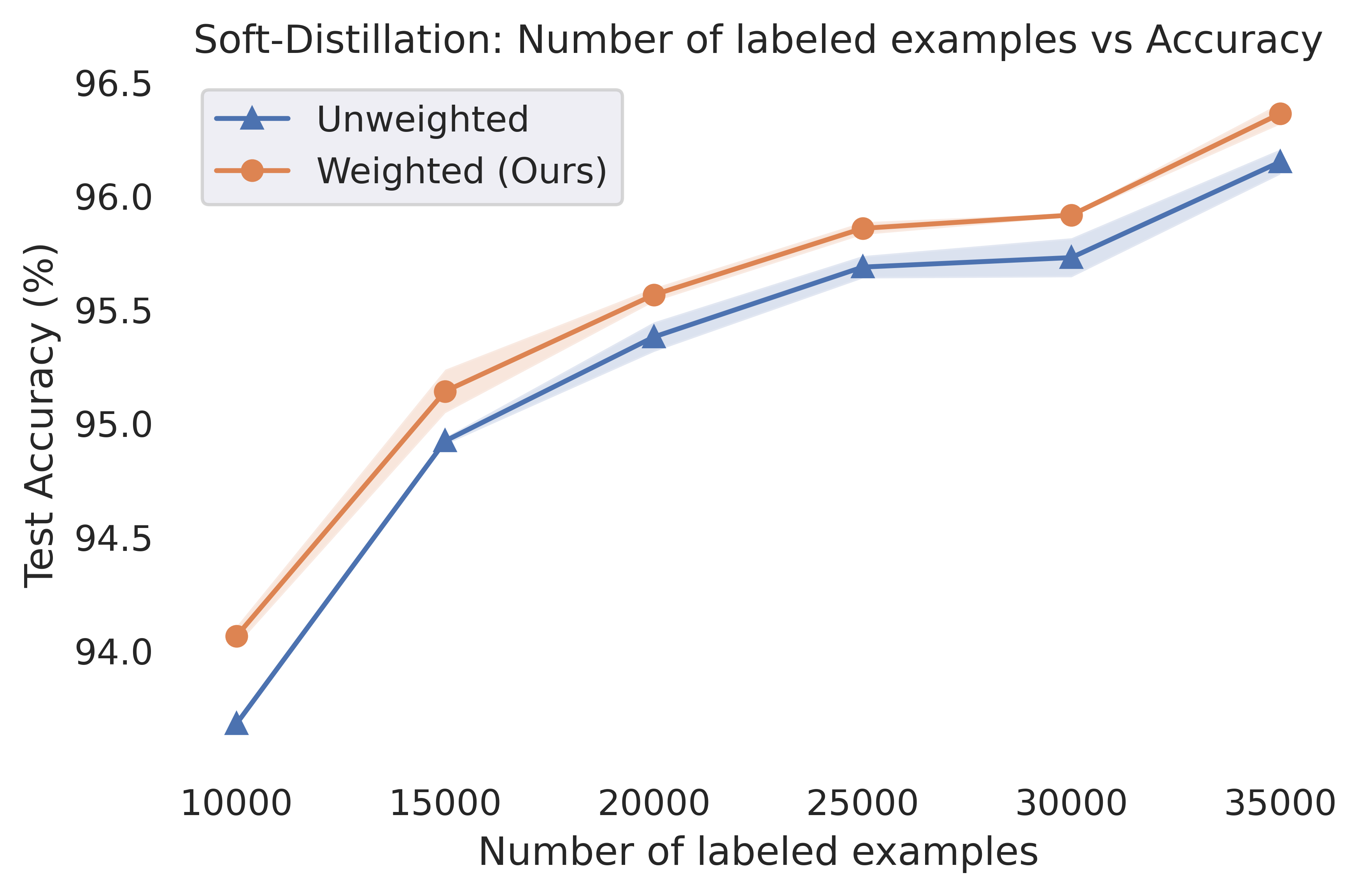}
  CelebA
  \end{minipage}%
  \hfill
    
\caption{\blue{The student's test accuracy over the training trajectory (first row)  and student's best test-accuracy  achieved over all (second row) when applying \textbf{one-shot estimation of the weights}. The teacher model is a MobileNet with depth multiplier $2$. In the cases of SVHN and CIFAR-10 the student model is a MobileNet of depth multiplier $1$, and in the case of CelebA is a ResNet-11. Our approach leads to consistently better models in terms of test-accuracy and convergence speed. Shaded regions correspond to values within one standard deviation of the mean.} 
 }
	\label{soft-training}
\end{figure*}

\begin{figure*}[!ht]

  \begin{minipage}[t]{0.5\textwidth}
  \centering
 \includegraphics[width=0.8\textwidth]{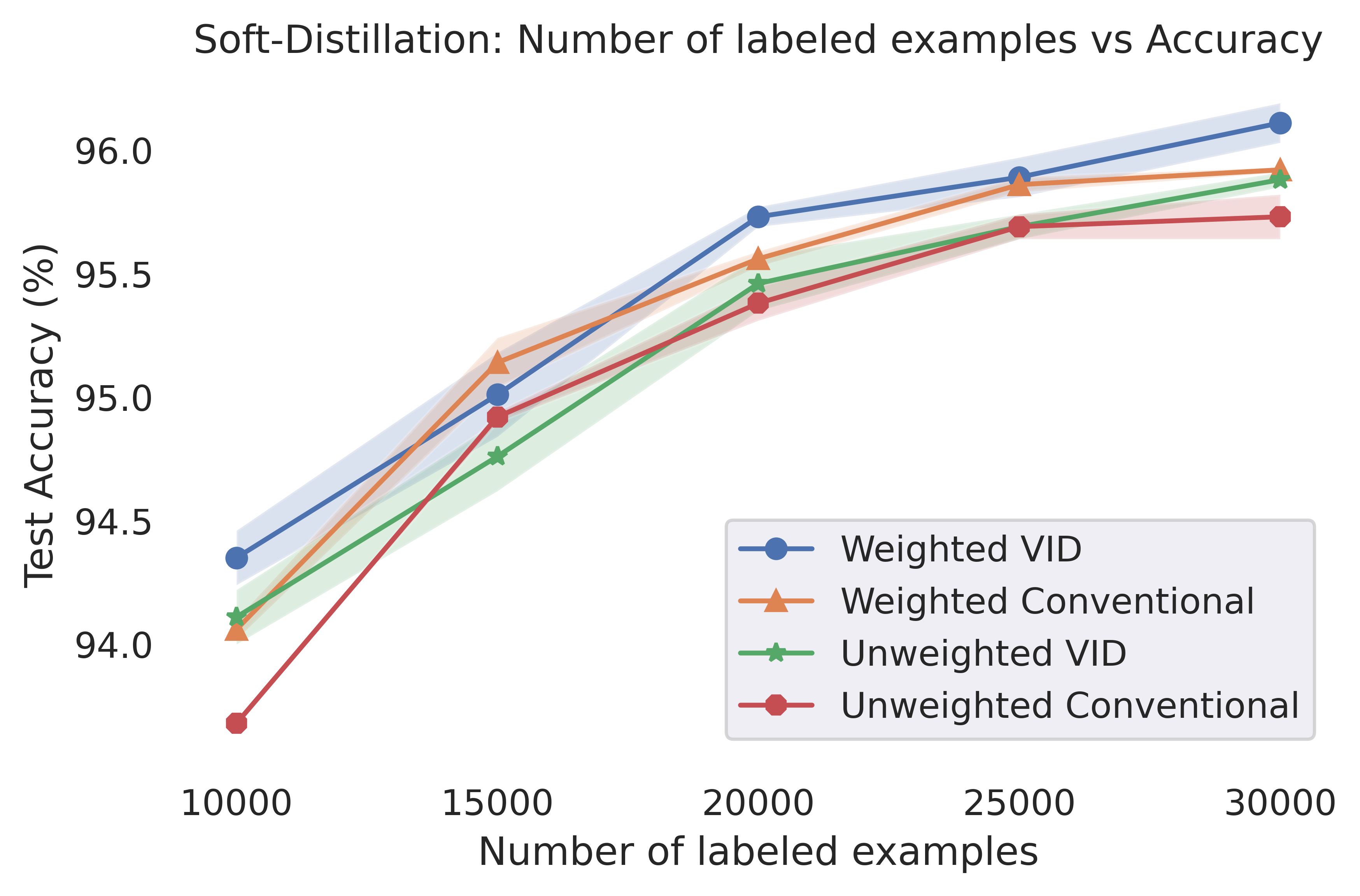} \\
 CelebA
  \end{minipage}%
  \hfill
    \begin{minipage}[t]{0.5\textwidth}
  \centering
 \includegraphics[width=0.8\textwidth]{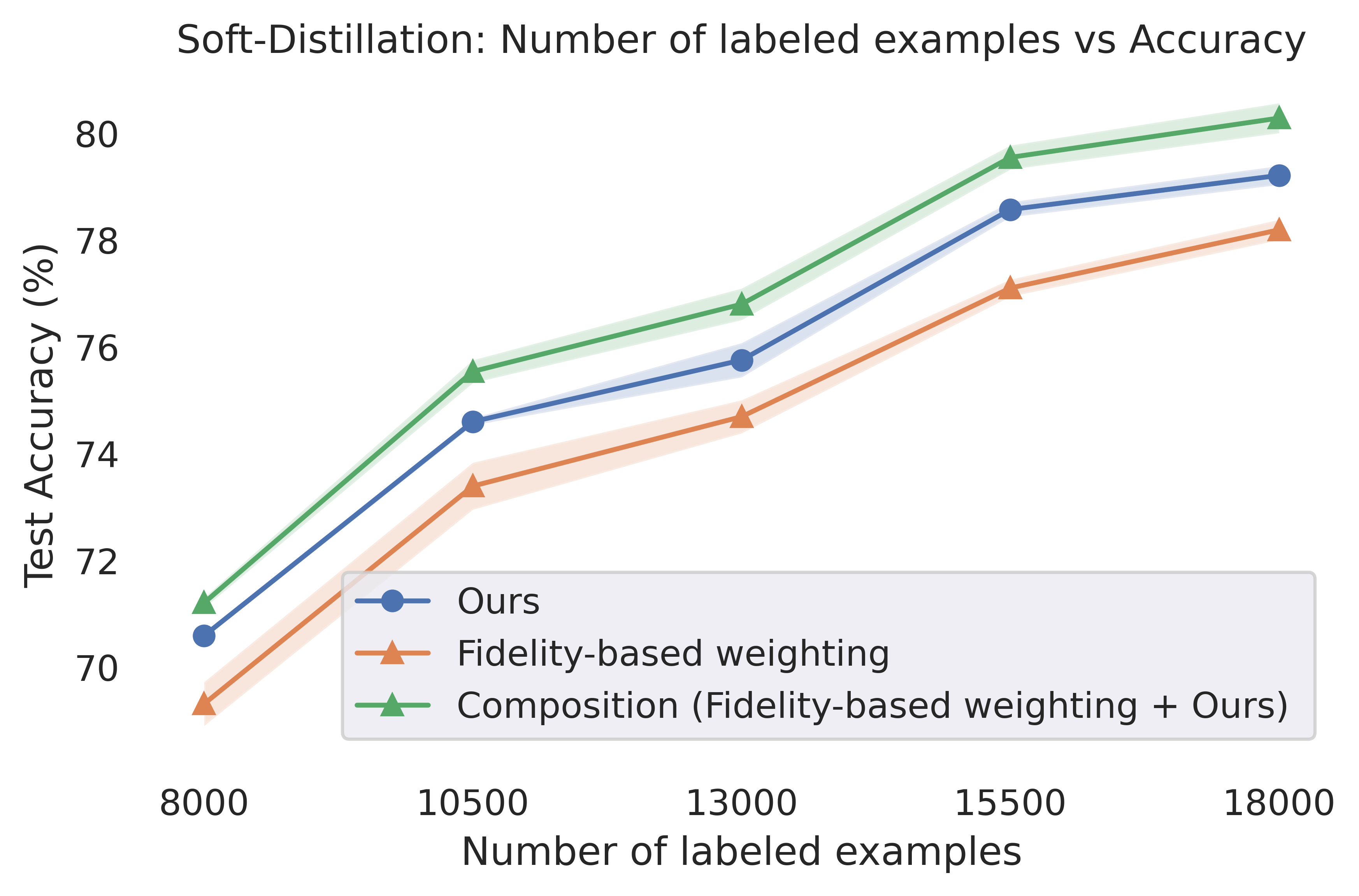}
 CIFAR-10
  \end{minipage}%

\caption{\blue{\textbf{Left:} We compare conventional distillation (unweighted conventional),  standard Variational Information Distillation (VID)~\cite{ahn2019variational} (unweighted VID), and reweighting the conventional loss function (weighted conventional) and the VID loss function  with our method (weighted VID) on CelebA. (These results correspond to \textbf{one-shot estimation of the weights}.) We see that our method can be combined effectively with more advanced distillation techniques such as VID. 
 \textbf{Right:} 
We combine and compare our method with the fidelity-based reweighting scheme of~\cite{dehghani2017fidelity} on CIFAR-10. We see that our method can be combined effectively with weighting schemes based only on the teacher's uncertainty. (These results correspond to \textbf{updating our weights at the end of every epoch}.)
}
 }
	\label{vid_and_fidelity}
\label{vid_and_updates} 
\end{figure*}

\subsection{Improvements via one-shot estimation of the weights}
\label{one-shot}

Here we show how applying our reweighting scheme  provides consistent improvements on several popular benchmarks even when the weights are estimated only  once during the whole training process. We compare our method against conventional distillation with unlabeled examples, but we also show that our method can provide benefits when combined with more advanced distillation techniques such as the framework of~\cite{ahn2019variational} (see Figure~\ref{vid_and_updates}).}

More concretely, we evaluate our method on benchmark vision datasets. We compare our method to conventional distillation with unlabeled examples both in terms of the best test accuracy achieved by the student, and in   terms of convergence speed (see  Figure~\ref{soft-training}). We also evaluate the comparative advantage of our method as a function of the number of labeled examples available (size of dataset $S_{\ell}$). We always choose the temperature in the softmax of the models to be $1$ for simplicity and consistency, \blue{and our metric for confidence is always the margin-score.}   Implementation details for our experiments and additional  results can be found in Appendices~\ref{extended_experiments} and~\ref{implementation}. We implemented all algorithms in Python making use of the TensorFlow  deep learning library~\cite{abadi2016tensorflow}. We use 64 Cloud TPU v4s each with two cores.

\subsubsection{Experimental setup}
\label{experimental_setup}
Our experiments are of the following form. The academic dataset we use each time is first split into two parts $A$ and $B$. Part $A$, which is typically smaller, is used as the labeled dataset $S_{\ell}$ where the teacher model is trained on (recall the setting we described in Section~\ref{our_method}). Part $B$ is randomly split again  into two parts which represent the unlabeled dataset $S_u$ and validation dataset $S_v$, respectively. Then, (i) the teacher and student models are trained once on the labeled dataset $S_{\ell}$; (ii) the teacher model is used to generate soft-labels for the unlabeled dataset $S_{u}$; (iii) we train the student model on the union of $S_{\ell}$ and $S_{u}$ using our method and conventional distillation with unlabeled examples. We repeat step (iii) a number of times: in each trial we partition part $B$ randomly and independently, and then the student model is trained using the (student-)weights reached after completing the  training on dataset $S_{\ell}$ in step  (i) as initialization, both for our method and the competing approaches.

\subsubsection{CIFAR-\{10, 100\} and SVHN experiments} 
\label{cifar10and100}
\label{SVHN_details}

SVHN~\cite{netzer2011reading}  is an image classification dataset where the task is to classify street view numbers ($10$ classes). The train set of SVHN contains $73257$ labeled images and its test set contains $26032$ images. We use a MobileNet~\cite{howard2017mobilenets} with depth multiplier $2$ as the teacher, and a MobileNet with depth multiplier $1$ as the student\footnote{Note that we see the student outperform the teacher here and in other experiments, as can often happen with distillation with unlabeled examples, particularly when the teacher is trained on limited data.}. The tables in Figure~\ref{SVHN_experiments} contain the results of our experiments (averages over $3$ trials). In each experiment we use the first $N \in \{7500, 10000, 12500, 15000, 17500, 20000 \}$ examples as the labeled dataset $S_{\ell}$, and then the rest $73257-N$ images are randomly split to a labeled validation dataset $S_v$ of size $2000$, and an unlabeled dataset $S_u$ of size $71257-N$.

CIFAR-10 and CIFAR-100~\cite{krizhevsky2009learning} are image classification datasets with $10$ and $100$ classes respectively. Each of them consists of $60000$ labeled images, which we split to a training set of $50000$ images, and a test set of $10000$ images. For CIFAR-10,  we use a Mobilenet with depth multiplier $2$ as the teacher, and a Mobilenet with depth multiplier $1$ as the student. For CIFAR-100, we use a ResNet-110 as a teacher, and a ResNet-56 as the student. We use a validation set of $2000$ examples. The results of our experiments (averages over $3$ trials) can be found in the tables of Figures~\ref{CIFAR10_experiments},~\ref{CIFAR100_experiments}.

\begin{figure}[!ht] 

\begin{center}
\tiny
\begin{tabular}{|c| c c c c c  |} 
 \hline
Labeled Examples  & $7500$ & $10000$ & $12500$ & $15000$ & $17500$  \\ [0.1ex] 
 \hline
 Teacher  & $87.31 \%$ & $88.5 \%$ & $88.45 \%$ & $91.38 \%$ & $91.32 \%$  \\ 
 \hline
 Weighted (Ours) & $\mathbf{90.41 \pm 0.12} \%$ & $ \mathbf{91.21  \pm 0.09\%} $ & $ \mathbf{92.4 \pm 0.12 \%}$ & $\mathbf{92.64 \pm 0.14 \%}$ & $\mathbf{92.85 \pm 0.1 \%}$  \\
 \hline
 Unweighted & $89.89 \pm 0.09\%$ & $90.86 \pm 0.11 \%$& $91.99 \pm 0.1 \%$ & $92.51 \pm 0.07\%$ & $92.45 \pm 0.07\%$   \\ [1ex] 
 \hline
\end{tabular}

\end{center}

\caption{Experiments on SVHN. See Section~\ref{SVHN_details} for details.}
\label{SVHN_experiments}
\end{figure}

\begin{figure}[!ht]

\begin{center}
\tiny
\begin{tabular}{||c| c c c c c  ||} 
 \hline
Labeled Examples  & $7500$ & $10000$ & $12500$ & $15000$ & $17500$ \\ [0.1ex] 
 \hline
 Teacher   & $67.17 \%$ & $71.39 \%$ & $74.69 \%$ & $77 \%$ & $78.46 \%$   \\ 
 \hline
 Weighted (Ours) & $\mathbf{69.01 \pm 0.25} \%$ & $ \mathbf{72.96  \pm 0.1\%} $ & $ \mathbf{75.33 \pm 0.43 \%}$ & $\mathbf{77.69 \pm 0.35 \%}$ & $\mathbf{78.34 \pm 0.16 \%}$   \\
 \hline
 Unweighted & $68.27 \pm 0.22\%$ & $72.48 \pm 0.26 \%$& $74.95 \pm 0.2 \%$ & $76.85 \pm 0.32\%$ & $77.92 \pm 0.06\%$  \\ [1ex] 
 \hline
\end{tabular}

\end{center}

\caption{Experiments on CIFAR-10. See Section~\ref{cifar10and100} for details.}
\label{CIFAR10_experiments}
\end{figure}

\begin{figure}[!ht]

\begin{center}
\tiny
\begin{tabular}{||c| c c c c c  ||} 
 \hline
Labeled Examples  & $7500$ & $10000$ & $12500$ & $15000$ & $17500$ \\ [0.1ex] 
 \hline
 Teacher   & $44.1 \%$ & $51.31\%$ & $55.54 \%$ & $59.05 \%$ & $62.17 \%$    \\ 
 \hline
 Weighted (Ours) & $\mathbf{46.81 \pm 0.34} \%$ & $ \mathbf{53.31  \pm 0.66\%} $ & $ \mathbf{57.5 \pm 0.3} \%$ & $\mathbf{60.94 \pm 0.32 \%}$ & $\mathbf{62.86 \pm 0.41 \%}$  \\
 \hline
 Unweighted & $46.29 \pm 0.04\%$ & $52.83 \pm 0.53 \%$& $56.89 \pm 0.65 \%$ & $60.73 \pm 0.03\%$ & $62.79 \pm 0.09\%$   \\ [1ex] 
 \hline
\end{tabular}

\end{center}

\caption{Experiments on CIFAR-100. See Section~\ref{cifar10and100} for details.}
\label{CIFAR100_experiments}
\end{figure}

\subsubsection{CelebA experiments: considering more advanced distillation techniques}\label{sec:celeba}

\blue{As we have already discussed in Section~\ref{Related_work}, our method can be combined with more advanced distillation techniques which aim to enforce greater consistency between the teacher and the student. We demonstrate this fact by implementing the method of Variational Information Distillation for Knowledge Transfer (VID)~\cite{ahn2019variational} and showing how it is indeed beneficial to combine it with our method. We chose the gender binary classification task of CelebA~\cite{furlanello2018born} as our benchmark (see details in the next paragraph), because it is known (see e.g.~\cite{muller2020subclass}) that the more advanced distillation techniques tend to be more effective when applied to classification tasks with few classes. In the table below, ``Unweighted VID'' corresponds to the implementation of loss described in equations (2), (4) and (6) of~\cite{ahn2019variational}, and ``Weighted VID'' corresponds to the reweighting of the latter loss using our method.}

CelebA~\cite{furlanello2018born} is a large-scale face attributes dataset with more than $200000$ celebrity images, each with forty attribute annotations. Here we consider the binary male/female classification task. The train set of CelebA contains $162770$ images and its test set contains $19962$ images. We use a MobileNet with depth multiplier $2$ as the teacher, and a ResNet-11~\cite{he2016deep} as the student. The tables in Figure~\ref{celeba_experiments} contain the results of our experiments (averages over $3$ trials). In each experiment we use the first $N \in \{10000, 15000, 20000, 25000, 30000, 35000\}$ examples as the labeled dataset $S_{\ell}$, and then the rest $162770-N$ images are randomly split to a labeled validation dataset $S_v$ of size $2000$, and an unlabeled dataset $S_u$ of size $160770-N$.

\begin{figure}[!ht]

\begin{center}
\tiny
\begin{tabular}{||c| c c c c c  ||} 
 \hline
Labeled Examples  & $10000$ & $15000$ & $20000$ & $25000$ & $30000$  \\ [0.1ex] 
 \hline
 Teacher   & $91.59 \%$ & $93.76\%$ & $94.41 \%$ & $94.86 \%$ & $94.92 \%$   \\ 
  \hline
 Weighted VID  & $	\mathbf{94.35 \pm 0.11\%}$ & $ 	95.01  \pm  0.17\% $ & $ \mathbf{95.73 \pm 0.04	 \%}$ & $\mathbf{95.89 \pm 0.08} \%$ & $\mathbf{96.11 \pm 0.08\%}$  \\
 \hline
 Weighted Conventional & $94.06 \pm 0.04 \%$ & $ \mathbf{95.14  \pm 0.1\%} $ & $ 95.56 \pm 0.03 \%$ & $\mathbf{95.86 \pm 0.03 \%}$ & $95.92 \pm 0.01 \%$  \\
 \hline 
 Unweighted VID~\cite{ahn2019variational} & $94.11 \pm 0.11\%$ & $94.76 \pm 0.14\%$& $95.46 \pm 0.11 \%$ & $95.69 \pm 0.05\%$ & $95.88 \pm 0.03 \%$  \\
 \hline
 Unweighted Conventional & $93.68 \pm 0.01\%$ & $94.92 \pm 0.02 \%$& $95.38 \pm 0.07 \%$ & $95.69 \pm 0.05\%$ & $95.73 \pm 0.09\%$ \\ [1ex] 
 \hline
\end{tabular}

\end{center}

\caption{Experiments on CelebA. See Section~\ref{sec:celeba} for details and also Figure~\ref{vid_and_fidelity}.}
\label{celeba_experiments}
\end{figure}

\subsubsection{ImageNet experiments}
\label{imagenet_experiments}

ImageNet~\cite{russakovsky2015imagenet} is a large-scale image classification dataset with $1000$ classes consisting of approximately $I \approx 1.3$M images. For our experiments, we use a ResNet-50  as the teacher, and a ResNet-18 as the student. In each experiment we use the first $N \in \{64058, 128116 \}$ labeled examples ($5\%$ and $10\%$ of $I$, respectively) as the labeled dataset $S_{\ell}$, and the rest $I-N$ examples are randomly split to a labeled validation dataset $S_v$ of size $10000$, and an unlabeled dataset $S_u$ of size $I- N - 10000$. The results of our experiments (averages over 10 trials) can be found in Figure~\ref{ImageNet_experiments}.

\begin{figure}[!ht]

\parbox{.45\linewidth}{
\centering
\tiny
\begin{tabular}{||c| c c  ||} 
 \hline
Labeled Examples  &  $5\%$ of ImageNet &  $10\%$ of ImageNet \\ [0.1ex] 
 \hline
 Teacher (soft)   &  $36.74\%$ & $51.88\%$   \\ 
 \hline
 Weighted (Ours) &  $\mathbf{38.60 \pm 0.07} \%$   &    $\mathbf{53.59 \pm 0.09} \%$  \\
 \hline
 Unweighted  & $38.44 \pm 0.06 \%$   &     $53.43 \pm  0.08 \% $  \\ [1ex] 
 \hline
\end{tabular}}
\hspace{0.5cm}
\parbox{.45\linewidth}{
\centering
\tiny
\begin{tabular}{||c| c c  ||} 
 \hline
Labeled Examples  &  $5\%$ of ImageNet & $10\%$ of ImageNet \\ [0.1ex] 
 \hline
Teacher (hard)   &  $36.74\%$ & $51.88\%$   \\ 
 \hline
 Weighted (Ours) &  $\mathbf{38.56 \pm 0.07\%}$   &    $\mathbf{53.34 \pm 0.11\%} $  \\
 \hline
 Unweighted  & $38.42 \pm 0.06 \%$   &     $53.18 \pm  0.07 \% $  \\ [1ex] 
 \hline
\end{tabular}
}
\caption{Experiments with soft-distillation (left) and hard-distillation (right) on ImageNet.}
\label{ImageNet_experiments}
\end{figure}

\blue{
\subsection{Updating the weights during training}
\label{updating_the_weights}

In this section we consider the effect of updating our estimation of the optimal  weights during training. For each dataset we consider, the experimental setup is  identical to the  corresponding setting in Section~\ref{one-shot}, except for that we always use a validation set of size $500$ and the entropy of a model's prediction as the metric for its confidence. We note that the time required for computing the weight for each example is insignificant compared to total training time  (less than 1\% of the total training time in all of our experiments), which allows us to conduct experiments in which we update our estimation at the end of every epoch. We see that doing this typically  significantly improves the resulting student's performance (however,  in CIFAR-100 we do not observe substantial benefits).}

\begin{figure*}[!ht]

  \begin{minipage}[t]{0.33\textwidth}
  \centering
 \includegraphics[width=1\textwidth]{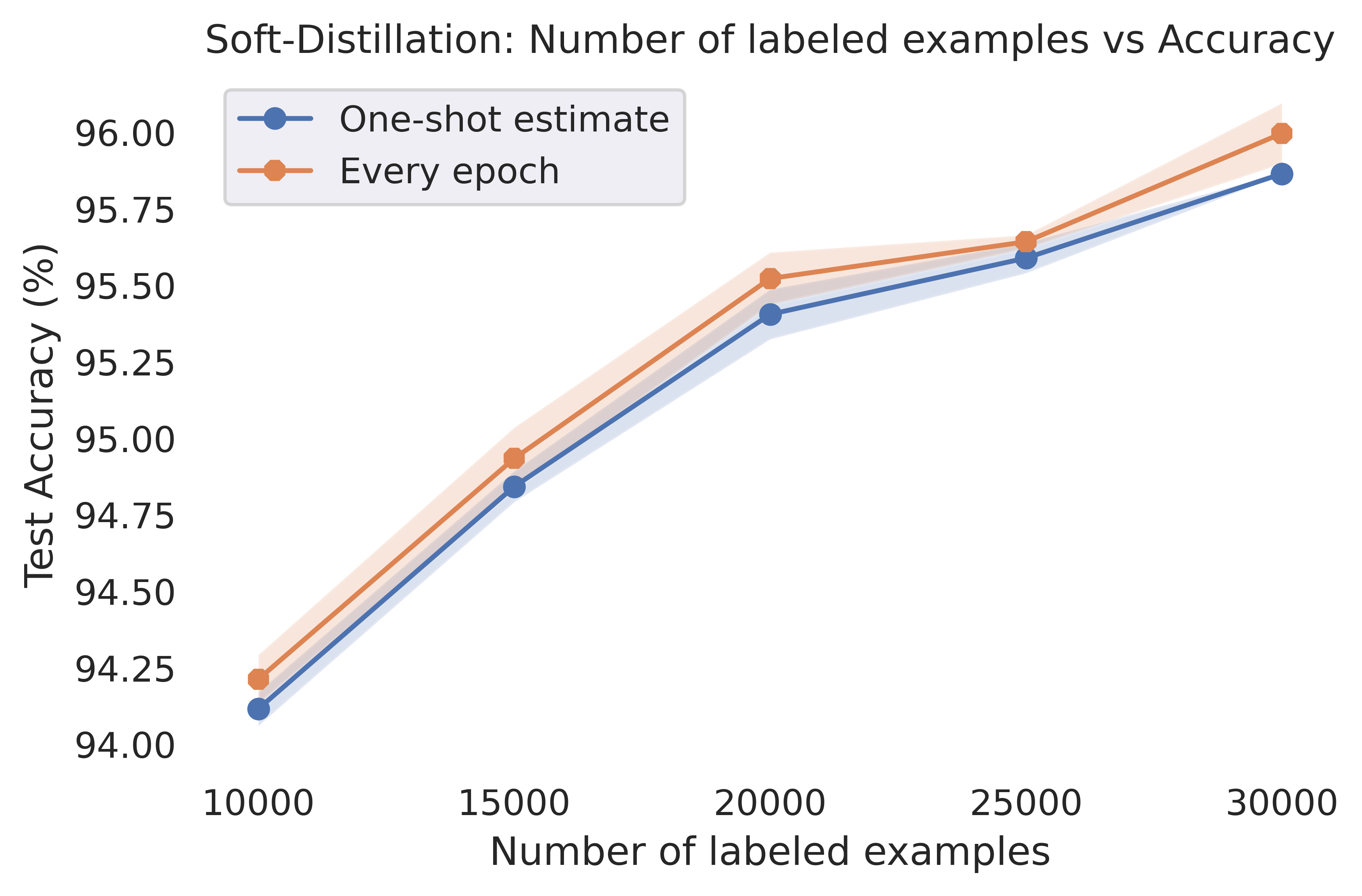} \\
 CelebA
  \end{minipage}%
  \hfill
    \begin{minipage}[t]{0.33\textwidth}
  \centering
 \includegraphics[width=1\textwidth]{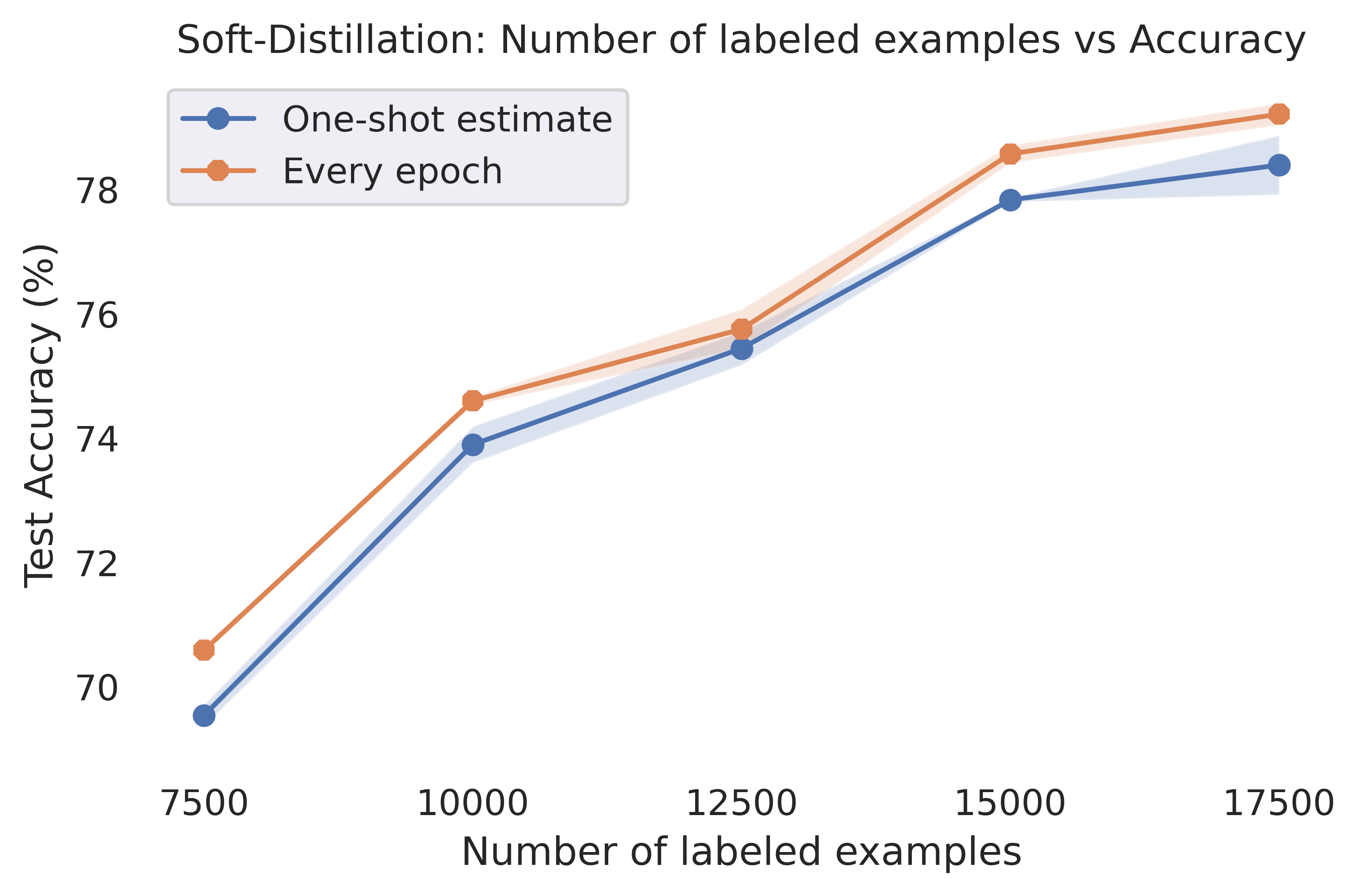}
 CIFAR-10
  \end{minipage}%
  \hfill
    \begin{minipage}[t]{0.33\textwidth}
  \centering
 \includegraphics[width=1\textwidth]{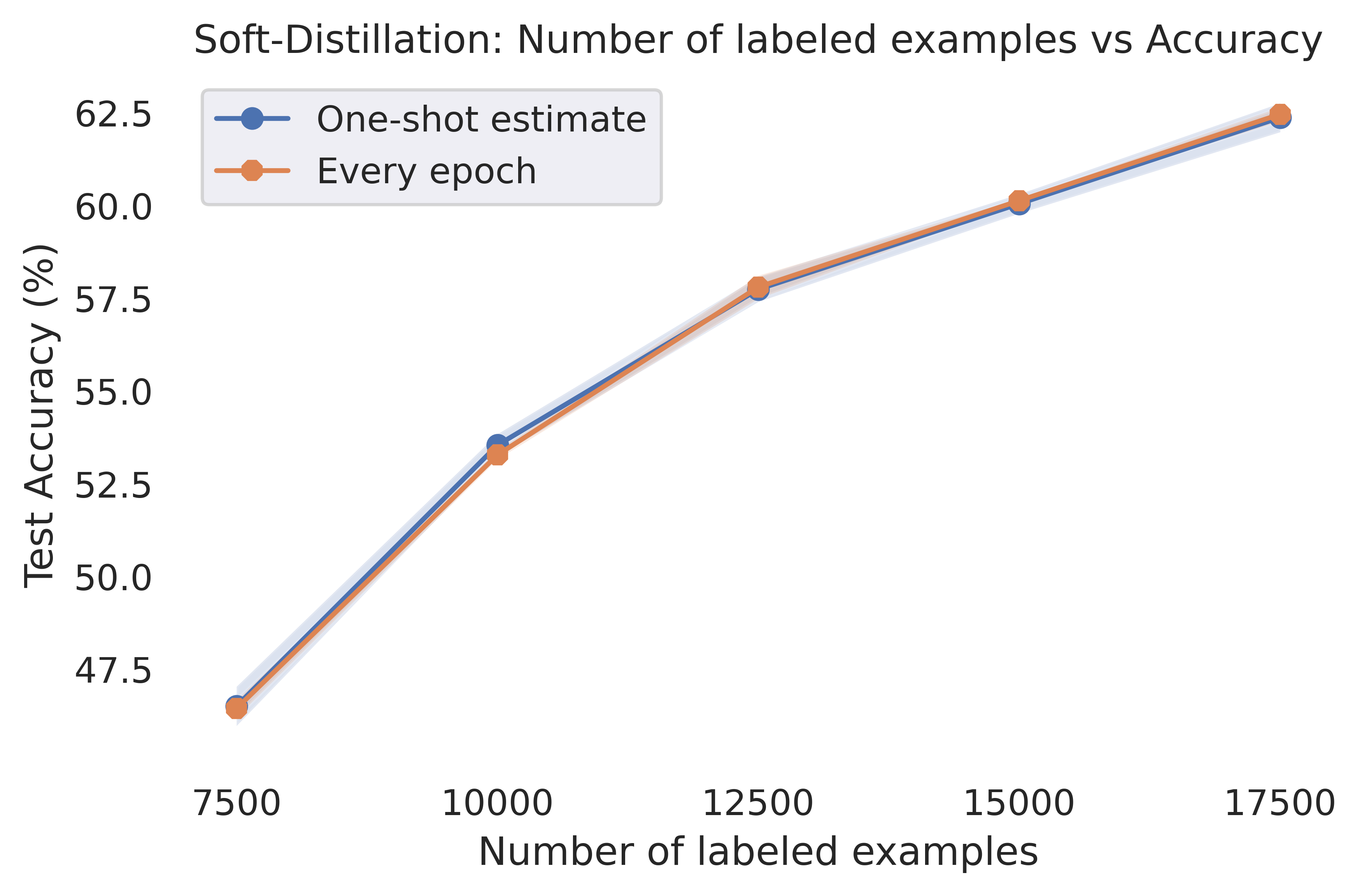}
 CIFAR-100
  \end{minipage}%

\caption{The effect of updating the weights during training. We compare our method when (i) weights are estimated only once and; (ii) when we update our estimation  at the end of every epoch. 
 }
	\label{updating_the_weights_figure}
\label{vid_and_updates} 
\end{figure*}

\blue{

\subsection{Combining and comparing with uncertainty-based weighting schemes}
\label{fidelity_weights_comparison}

 As we have already discussed in Section~\ref{Related_work}, our method can be combined with uncertainty-based weighting schemes as these are independent of the student model and, therefore, they can be  seen as a preprocessing step (modifying the loss function) before applying our method. We demonstrate this by combining and comparing our method with the filelity-based weighting scheme of~\cite{dehghani2017fidelity} on CIFAR-10 and CIFAR-100. Details about our experimental setup can be found in Appendix~\ref{fidelity_details}.
 }
\begin{figure}[!ht] 
\begin{center}
\tiny
\begin{tabular}{||c| c c c c c  ||} 
 \hline
Labeled Examples  & $7500$ & $10000$ & $12500$ & $15000$ & $17500$ \\ [0.1ex] 
 \hline
 Teacher (soft)   & $67.55 \%$ & $72.85 \%$	&  $ 74.85\%$ & $77.63\% $ & 	$78.40\%$ \\ 
 \hline
 Our Method & $70.59 \pm 0.05 \% $& $ 74.59 \pm 0.07 \% $& $75.75 \pm 0.32\% $&$ 78.56 \pm 0.14\% $ & $ 79.21 \pm 0.17 \%$  \\
 \hline
 Fidelity-based weighting~\cite{dehghani2017fidelity} & $69.31 \pm 0.41\%$&	$73.39 \pm 0.44\%$ &	$74.69 \pm 0.31\%$	& $77.10 \pm 0.16\%$ &	$78.19 \pm 0.19 \%$ \\ [1ex] 
 \hline
  Composition& $\mathbf{71.20 \pm 0.097\%}$ & $\mathbf{75.54 \pm 0.21\%}$ &$\mathbf{ 76.80 \pm 0.29\%} $& $\mathbf{79.55 \pm 0.22\%}$ & $\mathbf{80.28 \pm 0.28 \%}$ \\ [1ex] 
\hline
\end{tabular}
\end{center}

\begin{center}
\tiny
\begin{tabular}{||c| c c c c c  ||} 
 \hline
Labeled Examples  & $7500$ & $10000$ & $12500$ & $15000$ & $17500$ \\ [0.1ex] 
 \hline
 Teacher (soft)   &$ 44.70 \%$ &	$51.45 \%$ &	$56.00 \%$&	$58.70\%$ &	$61.82\%$\\ 
 \hline
 Our Method & $46.18 \pm 0.25\%$ &	$53.15 \pm 0.16 \%$& $\mathbf{57.51 \pm 0.4\%}$ & $\mathbf{59.65 \pm 0.4\%}$ &	$62.02 \pm 0.25 \%$\\
 \hline
 Fidelity-based weighting~\cite{dehghani2017fidelity} & $46.32 \pm 0.23\%$&	$52.92 \pm 0.20\%$&	$57.40 \pm 0.47\%$ & 	$59.48 \pm 0.17 \%$ &	$61.39 \pm 0.09 \%$ \\ [1ex] 
 \hline
  Composition& $\mathbf{46.62 \pm 0.44\%}$ &	$\mathbf{53.39 \pm 0.16\%}$	& $57.11 \pm 0.41\%$ &	$\mathbf{59.63 \pm 0.15\%}$ &	$\mathbf{62.30 \pm 0.26\%}$
 \\ [1ex] 
\hline
\end{tabular}
\end{center}

\caption{Combining and comparing our method with the  weighting scheme of~\cite{dehghani2017fidelity} on CIFAR-10 (top table) and CIFAR-100 (bottom table). See also Figure~\ref{vid_and_fidelity}.}
\end{figure}

\section{Theoretical motivation}
\label{theoretical_aspects}

\renewcommand\vec[1]{#1}
\newcommand{\x}{\vec x}
\newcommand{\weight}{w}
\newcommand{\density}{\gamma}
\newcommand{\W}{ {\cal W} }
\newcommand{\lp}{\left}
\newcommand{\rp}{\right}
\newcommand\norm[1]{\left\| #1 \right\|}
\newcommand\snorm[2]{\left\| #2 \right\|_{#1}}
\newcommand{\proj}{\mathrm{proj}}
\newcommand{\me}{\mathrm{e}}
\newcommand{\mrm}{\mathrm}
\def\d{\mathrm{d}}
\newcommand{\normal}{\mathcal{N}}

\newcommand{\sample}[2]{#1^{(#2)}}
\newcommand{\tr}{\mathrm{tr}}
\newcommand{\bx}{\mathbf{x}}
\newcommand{\by}{\mathbf{y}}
\newcommand{\bv}{\mathbf{v}}
\newcommand{\bu}{\mathbf{u}}
\newcommand{\bz}{\mathbf{z}}
\newcommand{\bw}{\mathbf{w}}
\newcommand{\br}{\mathbf{r}}
\newcommand{\bp}{\mathbf{p}}
\newcommand{\bc}{\mathbf{c}}
\newcommand{\e}{\mathbf{e}}
\newcommand{\Sp}{\mathbb{S}}

\newcommand{\R}{\mathbb{R}}
\newcommand{\Z}{\mathbb{Z}}
\newcommand{\N}{\mathbb{N}}

\newcommand{\eps}{\epsilon}
\newcommand{\poly}{\mathrm{poly}}
\newcommand{\polylog}{\mathrm{polylog}}
\newcommand{\var}{\mathbf{Var}}
\newcommand{\cov}{\mathbf{Cov}}

\newcommand{\sgn}{\mathrm{sign}}
\newcommand{\sign}{\mathrm{sign}}
\newcommand{\calN}{{\cal N}}
\newcommand{\calL}{{\cal L}}
\newcommand{\opt}{\mathrm{opt}}

\newcommand{\LR}{\mathrm{LeakyRelu}}
\newcommand{\Ind}{\mathds{1}}
\newcommand{\1}{\Ind}
\newcommand{\matr[1]}{\boldsymbol{#1}}
\newcommand{\littleint}{\mathop{\textstyle \int}}
\newcommand{\littlesum}{\mathop{\textstyle \sum}}
\newcommand{\littleprod}{\mathop{\textstyle \prod}}
\newcommand{\wt}{\widetilde}
\newcommand{\wh}{\widehat}

\newcommand{\ltwo}[1]{\left\lVert#1\right\rVert_2}
\newcommand{\dotp}[2]{ #1 \cdot #2 }
\newcommand{\miscl}{\err_{0-1}^{\D}}
\newcommand{\wstar}{\bw^{\ast}}
\newcommand{\normald}[1]{\mathcal{N}_{#1}}

\newcommand{\ith}{^{(i)}}
\newcommand{\tth}{^{(t)}}
\newcommand{\Exx}{\E_{\x\sim \D_\x}}
\newcommand{\Ey}{\E_{(\x,y)\sim \D}}
\newcommand{\vpar}[2]{\vec #1^{\|_{\vec #2}}}
\newcommand{\vperp}[2]{\vec #1^{\perp{\vec #2}}}
\newcommand{\ocoreg}{\bar{R}}
\newcommand{\X}{\vec X_{1,2}^{(t)}}
\newcommand{\risk}{R}
\newcommand{\Lc}{\risk}
\newcommand{\Lw}{\risk^{\mathrm{weighted}}}
\newcommand{\Ln}{\risk^{\mathrm{naive}}}
\newcommand{\D}{\mathbb{D}}
\newcommand{\Dadv}{\mathbb{D}}
\newcommand{\Dcln}{\mathbb{P}}
\newcommand{\Dx}{\mathbb{X}}
\newcommand{\pr}{\mathbf{Pr}}
\renewcommand{\R}{\mathbb{R}}
\newcommand{\ftrue}{f_{\mathrm{true}}}
\newcommand{\E}{\ex}
\newcommand{\yadv}{y_{\mathrm{adv}}}

In this section we show that our debiasing reweighting method described 
in Section~\ref{adversarial_setting} comes with provable \textbf{statistical} and
\textbf{optimization} guarantees.
We first show that the method is statistically consistent in the sense that,
with a sufficiently large dataset, the reweighted risk converges to the 
true or ``clean'' risk.  We show (see Appendix~\ref{statistical_perspective}) the following convergence guarantee.

\begin{theorem}[Uniform Convergence of Reweighted Risk]
\label{stats_theorem}
Assume that $S$ is a dataset of i.i.d. ``noisy'' samples from $\D$.
Under standard capacity assumptions for the class of models  $\mathcal{F}$ and 
regularity assumptions for the loss $\ell(\cdot)$, 
for every $f \in \mathcal{F}$ it holds that 
\[
\lim_{|S| \to \infty} R_S^{\weight}(f) = R(f) 
~~~~
\text{and}
~~~~
\lim_{|S| \to \infty} R_S(f) = R(f) + \mathrm{Bias}(f)
\]
\end{theorem}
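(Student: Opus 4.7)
The plan is to combine Proposition~\ref{expectations} with a standard uniform law of large numbers. Proposition~\ref{expectations} already pins down the means $\ex[R_S^w(f)] = R(f)$ and $\ex[R_S(f)] = R(f) + \mathrm{Bias}(f)$, so the only remaining task is to show that the empirical averages concentrate around their expectations, uniformly over $f \in \mathcal{F}$. The regularity assumption on $\ell$ supplies a uniform bound $\ell \le B$ on the relevant domain, and the projection of $w_f$ onto $[0,1]$ described in Section~\ref{our_method} ensures that each summand of $R_S^w(f)$ is a bounded random variable in $[0, B]$ as well.

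For a \emph{fixed} $f \in \mathcal{F}$, the summands in $R_S^w(f)$ are i.i.d., bounded, and have mean $R(f)$; the strong law of large numbers (or Hoeffding's inequality at rate $O(\sqrt{\log(1/\delta)/|S|})$) immediately gives $R_S^w(f) \to R(f)$ almost surely, and the identical argument yields $R_S(f) \to R(f) + \mathrm{Bias}(f)$. To upgrade this to a statement that holds uniformly over $\mathcal{F}$, I would follow the standard Rademacher-complexity route: by symmetrization, $\ex \sup_{f \in \mathcal{F}}|R_S^w(f) - R(f)|$ is controlled by the empirical Rademacher complexity $\hat{\mathfrak{R}}_S(\mathcal{G}^w)$ of the induced class $\mathcal{G}^w = \{(x,y) \mapsto w_f(x)\,\ell(y, f(x)) : f \in \mathcal{F}\}$. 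Since each element of $\mathcal{G}^w$ is a product of two $[0,B]$-bounded maps that depend on $f$, one can bound $\hat{\mathfrak{R}}_S(\mathcal{G}^w)$ using Talagrand's contraction lemma together with a standard product-class inequality, reducing it to $\hat{\mathfrak{R}}_S(\mathcal{F})$ (plus an analogous complexity term for the weight class), which vanishes as $|S| \to \infty$ under the stated capacity assumption.

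The main technical obstacle is controlling the complexity of the weights $w_f$ themselves, because $w_f(x)$ depends on $f$ through $\mathrm{distortion}_f(x) = \ell(y_{\mathrm{adv}}(x), f(x))/\ell(f_{\mathrm{true}}(x), f(x))$, which is a nonlinear functional of $f$ and is unstable when the denominator is small. The cleanest workaround is to strengthen the ``regularity of $\ell$'' to include a lower bound $\ell(\cdot, f(\cdot)) \ge \tau > 0$ on the relevant domain, so that $\mathrm{distortion}_f$ is Lipschitz in $f$ with a controlled constant; the $[0,1]$-projection in Section~\ref{our_method} additionally caps each per-sample contribution so that even loose bounds suffice. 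Under such assumptions the map $f \mapsto w_f(\cdot)\,\ell(\cdot, f(\cdot))$ is Lipschitz in $f$, and the contraction argument above delivers the required uniform convergence; the analogous statement for $R_S(f)$ is strictly simpler, as there are no weights to control and only the complexity of the composed class $\ell \circ \mathcal{F}$ enters.
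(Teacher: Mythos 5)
Your proposal is correct in outline but follows a genuinely different route from the paper. Both arguments start from Proposition~\ref{expectations} to identify the two expectations, and both then need a uniform law of large numbers; the difference is in the concentration machinery. The paper (Appendix~\ref{statistical_perspective}) invokes the empirical Bernstein bound of Maurer and Pontil (Theorem~\ref{empirical_bernstein}), applied directly to the two \emph{induced} loss classes $h_f(x,y)=\ell(y,f(x))$ and $h^w_f(x,y)=w_f(x)\,\ell(y,f(x))$, under the capacity assumption that their $L_\infty$ covering numbers $\mathcal{M}(n)$ and $\mathcal{M}^w(n)$ grow polynomially (Corollary~\ref{main_corollary} and Remark~\ref{limits_remark}). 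By placing the capacity assumption on the already-weighted class $\mathcal{H}^w$ rather than on $\mathcal{F}$ itself, the paper sidesteps exactly the obstacle you flag --- controlling the complexity of the map $f\mapsto w_f$ through the nonlinear and potentially unstable $\mathrm{distortion}_f$ --- and in addition obtains variance-dependent (Bernstein-type) rates. Your Rademacher/contraction route is more explicit about where the difficulty lives and what extra regularity (a lower bound $\ell\ge\tau$, hence Lipschitzness and boundedness of $w_f$) is needed to push the complexity assumption all the way down to $\mathcal{F}$; this is a legitimate and arguably more informative instantiation of the theorem's deliberately informal hypotheses, at the cost of Hoeffding-type rather than variance-adaptive rates. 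One caveat: your appeal to the $[0,1]$ projection of the weights to get boundedness is not available here --- the theorem concerns the exact debiasing weights of~\eqref{weight_definition}, and projecting them destroys the unbiasedness established in Proposition~\ref{expectations}, so the limit would no longer be exactly $R(f)$. Boundedness of the weighted summands must instead come from an assumption (as in the paper, where $\mathcal{H}^w$ is assumed to map into $[0,1]$) or from your proposed lower bound $\tau$ on the loss, which does the job on its own.
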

To prove our optimization guarantees we analyze the reweighted objective in the fundamental case where the model 
$f(\x;\vec \Theta)$ is linear, i.e., $f(\x; \vec \Theta) = \vec \Theta \x \in \R^L$, and the loss 
$\ell(\vec y, \vec z)$ is convex in $\vec z$ for every $\vec y$.  
In this case, the composition of the loss and the model $f(\x;\vec \Theta)$ is 
convex as a function of the parameter $\vec \Theta \in \R^{L \times d}$. 
Recall that we denote by $\ftrue(\x):\R^d \mapsto \R^L$ the ground truth classifier and by $\Dcln$ 
the ``clean'' distribution, i.e., a sample from $\Dcln$ has the form 
$(\x, \ftrue(\x))$ where $\x$ is drawn from a distribution $\Dx$
supported on (a subset of) $\R^d$.  
Finally, we denote by $\D$ the ``noisy'' labeled distribution on
$\R^d \times \R^L$ and assume that the $\x$-marginal of $\D$ is also $\Dx$.

We next give a general definition of debiasing weight functions,
i.e., weighting mechanisms that make the corresponding objective function an unbiased estimator of the clean objective 
$\Lc(\vec \Theta)$ for every parameter vector $\vec \Theta \in \R^d$.  
Recall that the weight function defined in Section~\ref{adversarial_setting} is debiasing.
\begin{definition}[Debiasing Weights]\label{def:debiasing-weights} 
We say that a weight function 
$\weight(\x, \yadv; \vec \Theta)$
is a debiasing weight function if it holds that
\[
\risk^\weight(\vec \Theta) \triangleq
\E_{(\x, \yadv) \sim \D}[\weight(\x,\yadv ; \vec \Theta) \ell(\yadv, f(\x;\vec \Theta))]
=
\Lc(\vec \Theta)
\,.
\]
\end{definition}
 Since the loss is convex in $\vec \Theta$, one could try to optimize 
 the naive objective that does not reweight and simply minimizes 
 $\ell(\cdot)$ over the noisy examples,
$\Ln(\vec \Theta) \triangleq \E_{(\x,\yadv) \sim \D}[\ell(\yadv, \vec \Theta \x)]$.  We show (unsurprisingly) that doing this is a bad idea: there are instances where optimizing the naive
objective produces classifiers with bad generalization error over clean examples.  For the formal statement and proof we refer the reader to Appendix~\ref{optimization_perspective}.
\begin{center}
    \emph{
        SGD on the naive objective $\Ln(\cdot)$ learns parameters with arbitrarily \\
        bad generalization error over the ``clean'' data.
    }
\end{center}
Our main theoretical insight is that optimizing linear models with 
the reweighted loss leads to parameters with almost optimal generalization.
\begin{center}
    \emph{
        Given a debiasing weight function $\weight(\cdot)$,
        SGD on the reweighted objective $\risk^{\weight}(\cdot)$ learns a parameter with almost optimal generalization error over the ``clean'' data.
    }
\end{center}

The main issue with optimizing the reweighted objective is that, in general, we have no guarantees
that the weight function preserves its convexity (recall that it depends on the parameter $\vec
\Theta$).  However, we know that its population version corresponds to the clean objective
$\Lc(\cdot)$ which is a convex objective.  We show that we can use the convexity of the underlying
clean objective to show results  for  stochastic gradient descent, by proving the following key structural property.

\begin{proposition}[Stationary Points of the Reweighted Objective Suffice (Informal)]
Let $S$ be a dataset of
$n =  \poly(d L / \eps) $ i.i.d.\ samples from the noisy distribution $\D$.
Let $\widehat{\vec \Theta}$ be any stationary point of the weighted objective
$\risk_S^\weight(\vec \Theta)$ constrained on the unit ball 
(with respect to the Frobenious norm $\|\cdot\|_F$).
Then, with probability at least $99\%$, it holds that
\[
\Lc( \widehat{\vec \Theta}) \leq
\min_{\|\vec \Theta\|_F \leq 1} \Lc(\vec \Theta)  + \eps \,.
\]
\end{proposition}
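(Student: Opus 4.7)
The plan is to exploit the convexity of the population clean objective $\Lc(\vec \Theta)$ to convert approximate first-order stationarity of the empirical reweighted objective $\risk_S^\weight$ into an approximate minimizer of $\Lc$. The core technical step is a uniform deviation bound between $\nabla \risk_S^\weight$ and $\nabla \Lc$ over the unit Frobenius ball.

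\emph{Step 1 (Reduction via constrained convex optimality).} Since $\ell(\vec y, \vec z)$ is convex in $\vec z$ and $f(\x;\vec \Theta) = \vec \Theta \x$ is linear in $\vec \Theta$, the clean risk $\Lc(\vec \Theta)$ is convex on $\R^{L\times d}$. Let $\vec \Theta^\star$ minimize $\Lc$ over the unit Frobenius ball. Because $\widehat{\vec \Theta}$ is a stationary point of $\risk_S^\weight$ on the (convex) unit ball, first-order optimality gives $\dotp{\nabla \risk_S^\weight(\widehat{\vec \Theta})}{\vec \Theta^\star - \widehat{\vec \Theta}} \geq 0$. Combining this with the convexity inequality $\Lc(\widehat{\vec \Theta}) - \Lc(\vec \Theta^\star) \leq \dotp{\nabla \Lc(\widehat{\vec \Theta})}{\widehat{\vec \Theta} - \vec \Theta^\star}$, adding and subtracting $\nabla \risk_S^\weight(\widehat{\vec \Theta})$, and using Cauchy--Schwarz with $\|\widehat{\vec \Theta} - \vec \Theta^\star\|_F \leq 2$, I get
\[
\Lc(\widehat{\vec \Theta}) - \Lc(\vec \Theta^\star) \leq 2 \bigl\|\nabla \Lc(\widehat{\vec \Theta}) - \nabla \risk_S^\weight(\widehat{\vec \Theta})\bigr\|_F.
\]
It therefore suffices to show that $\sup_{\|\vec \Theta\|_F \leq 1} \|\nabla \risk_S^\weight(\vec \Theta) - \nabla \Lc(\vec \Theta)\|_F \leq \eps/2$ with probability at least $0.99$.

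\emph{Step 2 (Pointwise gradient concentration).} The debiasing property of Definition~\ref{def:debiasing-weights} together with dominated convergence (justified by the regularity assumptions on $\ell$) yields $\E[\nabla \risk_S^\weight(\vec \Theta)] = \nabla \Lc(\vec \Theta)$ for each fixed $\vec \Theta$. Under standard assumptions (bounded data, Lipschitz-smooth loss, weights lying in $[0,1]$ by the projection used in the method), each summand of the empirical gradient is a random matrix of Frobenius norm bounded by some $B = \poly(d,L)$ on the unit ball. Applying a vector Bernstein/Hoeffding inequality to the $Ld$-dimensional vectorization gives, for each fixed $\vec \Theta$,
\[
\pr\!\left[\bigl\|\nabla \risk_S^\weight(\vec \Theta) - \nabla \Lc(\vec \Theta)\bigr\|_F > \delta\right] \leq \exp\!\bigl(-\Omega(n\delta^2/B^2)\bigr).
\]

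\emph{Step 3 (Uniform control via an $\eps$-net).} Under the same regularity, both $\nabla \risk_S^\weight$ and $\nabla \Lc$ are $K$-Lipschitz in $\vec \Theta$ on the unit ball with $K = \poly(d,L)$. Cover the unit Frobenius ball by a $\rho$-net of size $N \leq (3/\rho)^{Ld}$; setting $\rho = \delta/(4K)$, a union bound over the net combined with Lipschitz transfer gives the uniform deviation bound with probability at most $(12K/\delta)^{Ld} \exp(-\Omega(n\delta^2/B^2))$. Taking $\delta = \eps/4$ and $n = \poly(Ld/\eps)$ drives this probability below $0.01$, which combined with Step~1 yields $\Lc(\widehat{\vec \Theta}) - \Lc(\vec \Theta^\star) \leq \eps$.

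\emph{Main obstacle.} The technical heart is establishing the boundedness and Lipschitzness of $\nabla_{\vec \Theta}[\weight_f(\x)\,\ell(\yadv, \vec \Theta \x)]$ uniformly over the unit ball. The weight $\weight_f(\x) = 1/(1 + p(\x)(\mathrm{distortion}_f(\x)-1))$ involves $\mathrm{distortion}_f(\x) = \ell(\yadv,\vec \Theta \x)/\ell(\ftrue(\x),\vec \Theta \x)$, whose gradient can blow up when the denominator approaches zero. The projection of weights onto $[0,1]$ used in the algorithm is precisely what prevents such blow-up; making this rigorous requires either a Clarke-subgradient treatment at the projection boundary or a mollification argument, neither of which affects the final polynomial sample complexity.
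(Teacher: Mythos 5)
Your proof follows essentially the same route as the paper's: the same convexity-based reduction bounding $\Lc(\widehat{\vec \Theta}) - \Lc(\vec \Theta^\star)$ by twice the gradient deviation at $\widehat{\vec \Theta}$, pointwise concentration of the empirical reweighted gradient (the paper uses McDiarmid's inequality coordinate-wise where you invoke a vector Bernstein bound), and an $\eps$-net argument over the unit Frobenius ball using smoothness of the weighted loss. The only substantive difference is that the ``main obstacle'' you identify --- possible blow-up of the gradient of the distortion-based weight when the clean loss in the denominator vanishes --- is assumed away in the paper, which proves the result for an arbitrary debiasing weight function satisfying the boundedness, Lipschitzness, and smoothness conditions of Definition~\ref{def:regularity} rather than for the specific weights of Section~\ref{adversarial_setting}.
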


\section{Conclusion} 
\label{conclusion}
\blue{
We propose a principled reweighting scheme for distillation with unlabeled examples. Our method  is hyper-parameter free, adds minimal implementation overhead, and comes with theoretical guarantees.
We evaluated our method on standard benchmarks and we showed that it consistently provides significant improvements. We note that investigating improved data-driven ways  of estimating the  weights~\eqref{weight_definition} could be of interest, since potential inaccurate estimation of the weights is the main limitation of our work. We leave this question open for future work. 

}

\section{Acknowledgements}
We are grateful to anonymous reviewers for detailed comments and feedback.

\bibliographystyle{plain}
\bibliography{distillation}

\appendix

\newpage
\section{\blue{Formal description of our method}}
\label{estimate_weights_algorithm}

In this section we present pseudocode for our method.

\begin{algorithm}
\begin{algorithmic}[1] 
\Require model $\mathrm{Teacher}$, model $\mathrm{Student}$,  labeled dataset $S_{\ell} = \{ (x_i, y_i)  \}_{i=1}^{m}$, unlabeled dataset $ S_u = \{ x_i \}_{i=m+1}^{m+n}$, validation dataset $S_v = \{ (x_i, y_i ) \}_{i=m+n+1}^{m+n+q }$, \blue{number of weights-estimating iterations $r$}

\State  Train $\mathrm{Teacher}$ and $\mathrm{Student}$ on $S_{\ell}$

\State  Use $\mathrm{Teacher}$ to  generate labels for $S_u$ to obtain set $S_u^{\ell} = \{ (  x, \mathrm{Teacher}(x) ) \mid x \in S_u    \}$

\For {$i=m+1$ to $n+m$}
    \State  $y_i \leftarrow \mathrm{Teacher}(x_i)$
\EndFor

\State $S \leftarrow \{ (x_i, y_i) \}_{i=1}^{m+n}$

\For { $i=1$ to $m$  }

\State $w(x_i) \leftarrow 1$

\EndFor

\For {$i=1$ to $r$ }

\State $ \{w(x_{m+1}), \ldots, w(x_{m+n} )   \}   \leftarrow $ {\sc EstimateWeights}($\mathrm{Teacher}$, $\mathrm{Student}$, $S_v$, $S_u^{\ell}$     )

\State Train $\mathrm{Student}$ on $S$ using the weighted empirical risk: 
\begin{align*}
    \frac{1}{m +n}    \sum_{i=1 }^{m+n} w(x_i) \ell(y_i, \mathrm{Student}(x_i) ) 
\end{align*}

\EndFor

\end{algorithmic}
\caption{Weighted distillation with unlabeled examples}
\label{alg:weighted_distillation}
\end{algorithm}

\begin{algorithm}[!ht] 
\begin{algorithmic}[1] 

\Procedure{EstimateWeights}{ $\mathrm{Teacher}, \mathrm{Student},  V, D  $    }

\State \Comment $V$ is the validation dataset and $D$ is the teacher-labeled dataset
 \State $U \leftarrow \emptyset$, $k \leftarrow \lceil \frac{1}{2} \sqrt{ |V | } \rceil  $
\For {every $(x, y) \in V $}
                        \State $X  \leftarrow (\mathrm{Confidence}(\mathrm{Teacher}(x)), \mathrm{Confidence}(\mathrm{Student}(x)) )  $
                        
             \If {$\argmax(\mathrm{Teacher}(x) ) = \argmax( y) $}:
                        \State $(p, \mathrm{distortion} ) \leftarrow (0, 1)$
            \Else:
                  \State  $(p, \mathrm{distortion} )  \leftarrow \left(1, \frac{ \ell(\mathrm{Teacher}(x), \mathrm{Student}(x) ) }{\ell(y, \mathrm{Student}(x) )   } \right) $              
              \EndIf
              \State $Y \leftarrow (p, \mathrm{distortion} )$
              \State $U \leftarrow U \cup \{ (X, Y) \} $
\EndFor

\State $\mathrm{Weights} =  \varnothing$  \Comment Initialize and empty list for the weights
\For {every $(x,y) \in D$  }
\State $ \mathrm{Query} \leftarrow (\mathrm{Confidence}(\mathrm{Teacher}(x)), \mathrm{Confidence}(\mathrm{Student}(x)) )  $
\State $ (\hat{p}, \hat{d} )       \leftarrow$  $k$-$\mathrm{NN}(U, \mathrm{Query}  )$ \Comment Predict $p(x)$ and  $\mathrm{distortion}_f(x)$ from the $k$ nearest neighbors of $\mathrm{Query}$ in $U$ 
                   
\State $w(x)  \leftarrow \min \left\{1, \frac{1 }{1 + \hat{p} ( \hat{d} -1 )  }    \right\} $        \label{projection}        
\State Append $w(x)$ to $\mathrm{Weights}$              
  
\EndFor

\State \Return $\mathrm{Weights} $
\EndProcedure

\end{algorithmic}
 \caption{Procedure for estimating the weights}
\label{alg:estimating_weights}
\end{algorithm}

\section{Extended experiments}
\label{extended_experiments}

\subsection{\blue{The student's test-accuracy-trajectory}} 

In this section we provide extended experimental results that show the student's test accuracy over the training trajectory corresponding to experiments we mentioned in Section~\ref{one-shot}. Notice that in the vast majority of cases our method significantly outperforms the conventional approach almost throughout the training process.


\begin{figure*}[!ht]
  \begin{minipage}[t]{0.25\textwidth}
  \centering
 \includegraphics[width=1\textwidth]{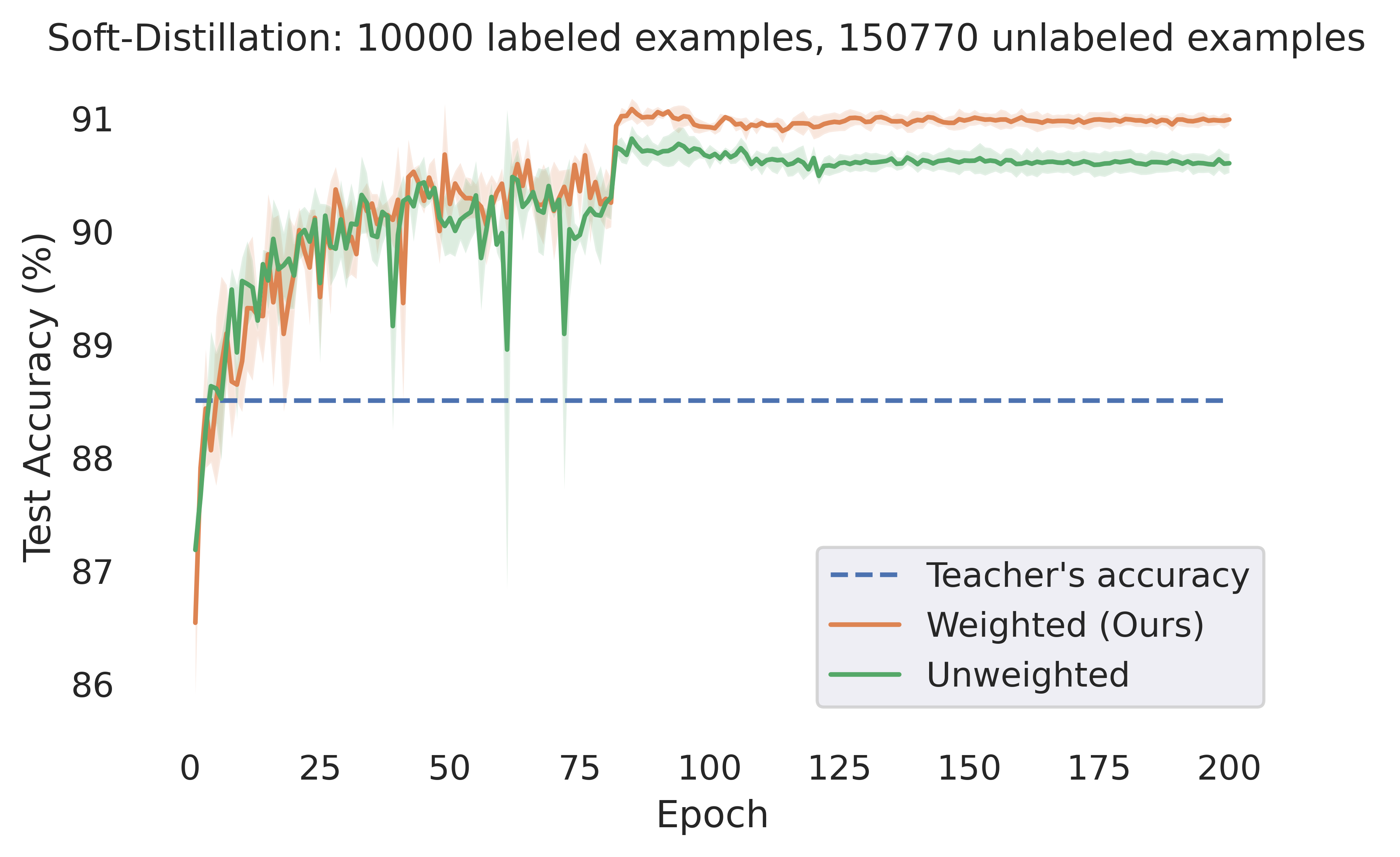} 
  \end{minipage}%
  \hfill
  \centering
  \begin{minipage}[t]{0.25\textwidth}
  \centering
 \includegraphics[width=1\textwidth]{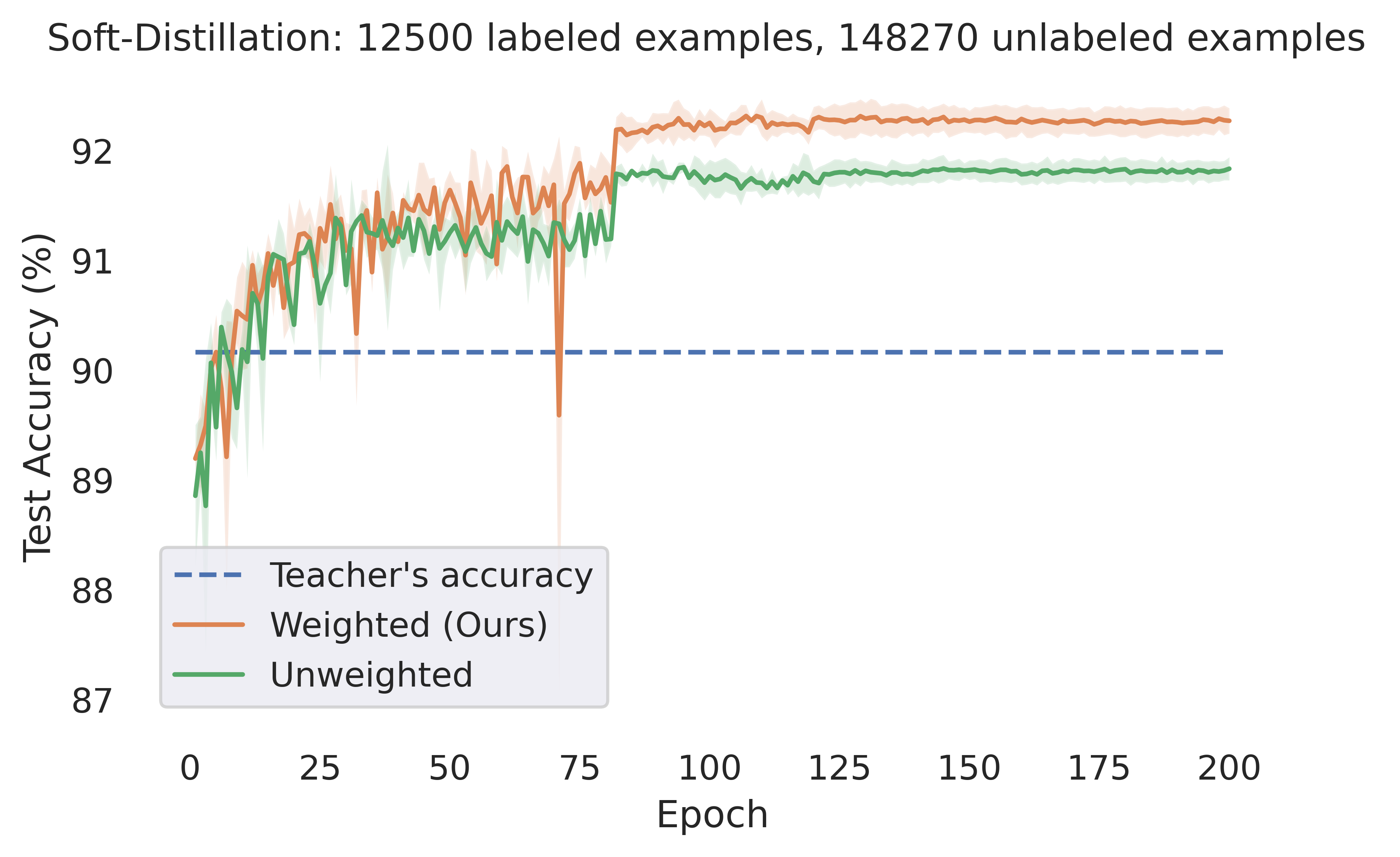} 
  \end{minipage}%
  \hfill
  \centering
  \begin{minipage}[t]{0.25\textwidth}
  \centering
 \includegraphics[width=1\textwidth]{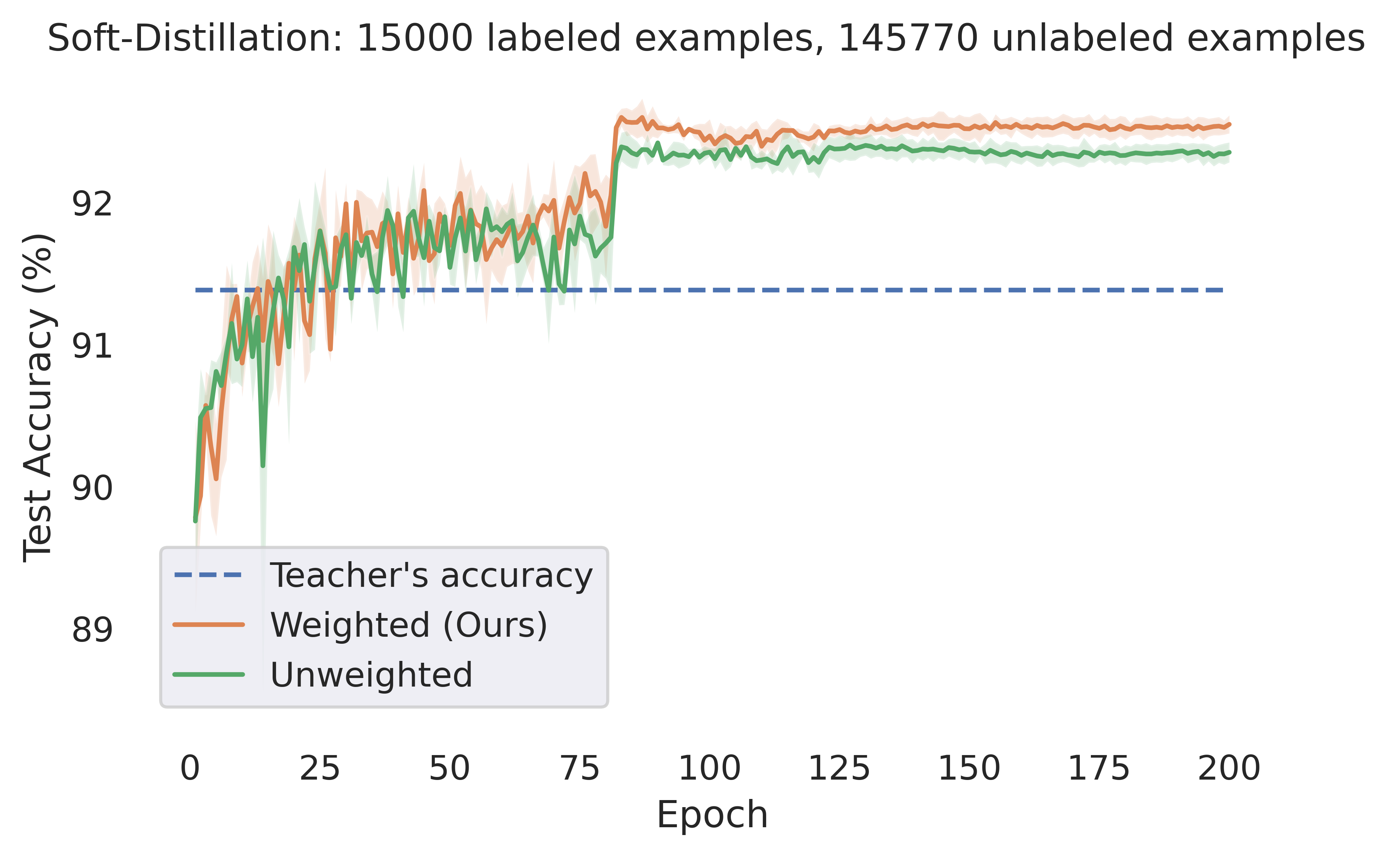} 
  \end{minipage}%
\hfill
  \centering
  \begin{minipage}[t]{0.25\textwidth}
  \centering
 \includegraphics[width=1\textwidth]{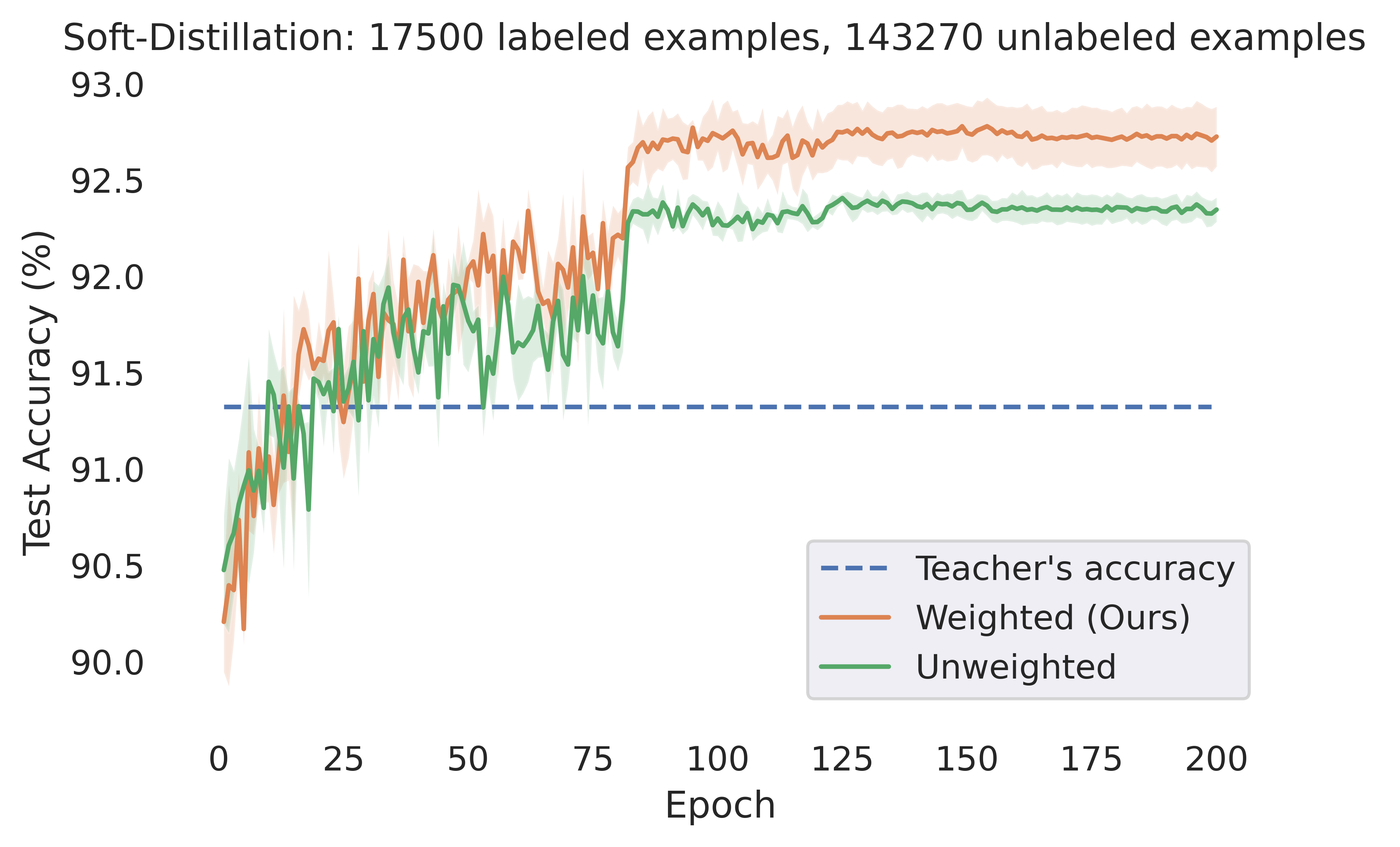} 
  \end{minipage}%

\caption{\textbf{SVHN} experiments. The student's test accuracy over the training trajectory using hard-distillation corresponding to the experiments of Figure~\ref{SVHN_experiments}. See Section~\ref{SVHN_details} for more details.
 }
	\label{SVHNFull}
\end{figure*}



\begin{figure*}[!ht]
  \centering
  \begin{minipage}[t]{0.25\textwidth}
  \centering
 \includegraphics[width=1\textwidth]{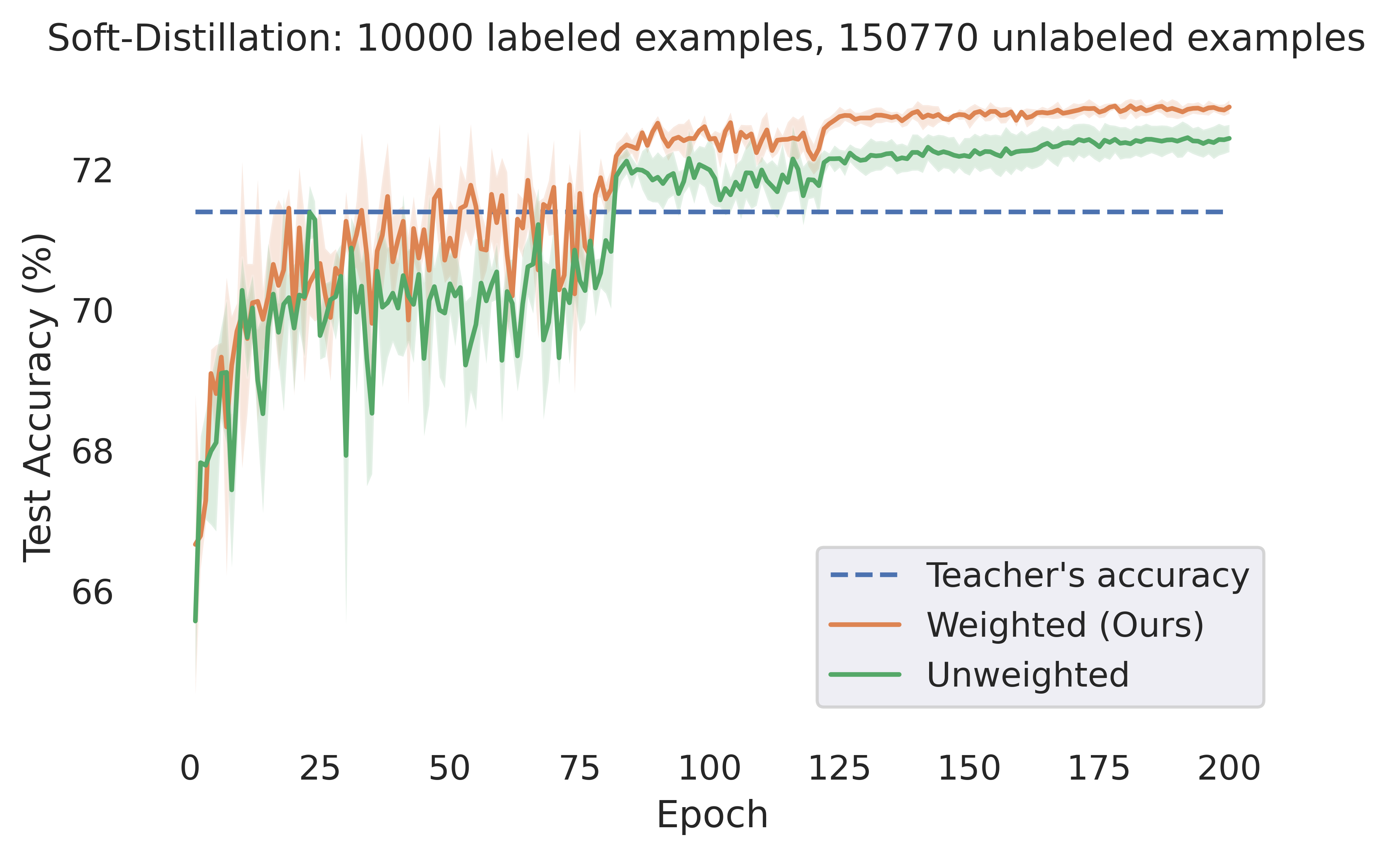} 
  \end{minipage}%
  \hfill
  \centering
  \begin{minipage}[t]{0.25\textwidth}
  \centering
 \includegraphics[width=1\textwidth]{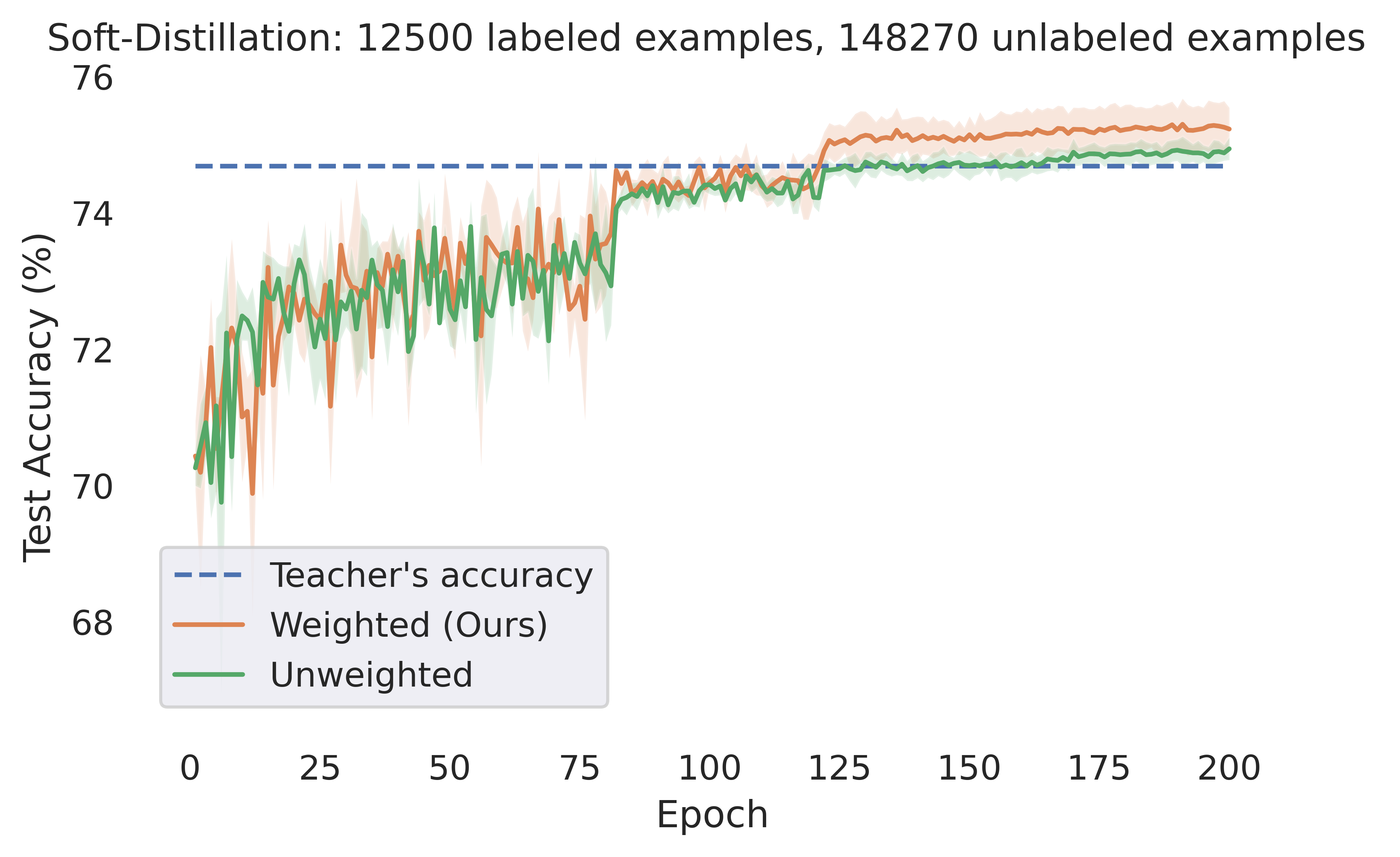} 
  \end{minipage}%
  \hfill
  \centering
  \begin{minipage}[t]{0.25\textwidth}
  \centering
 \includegraphics[width=1\textwidth]{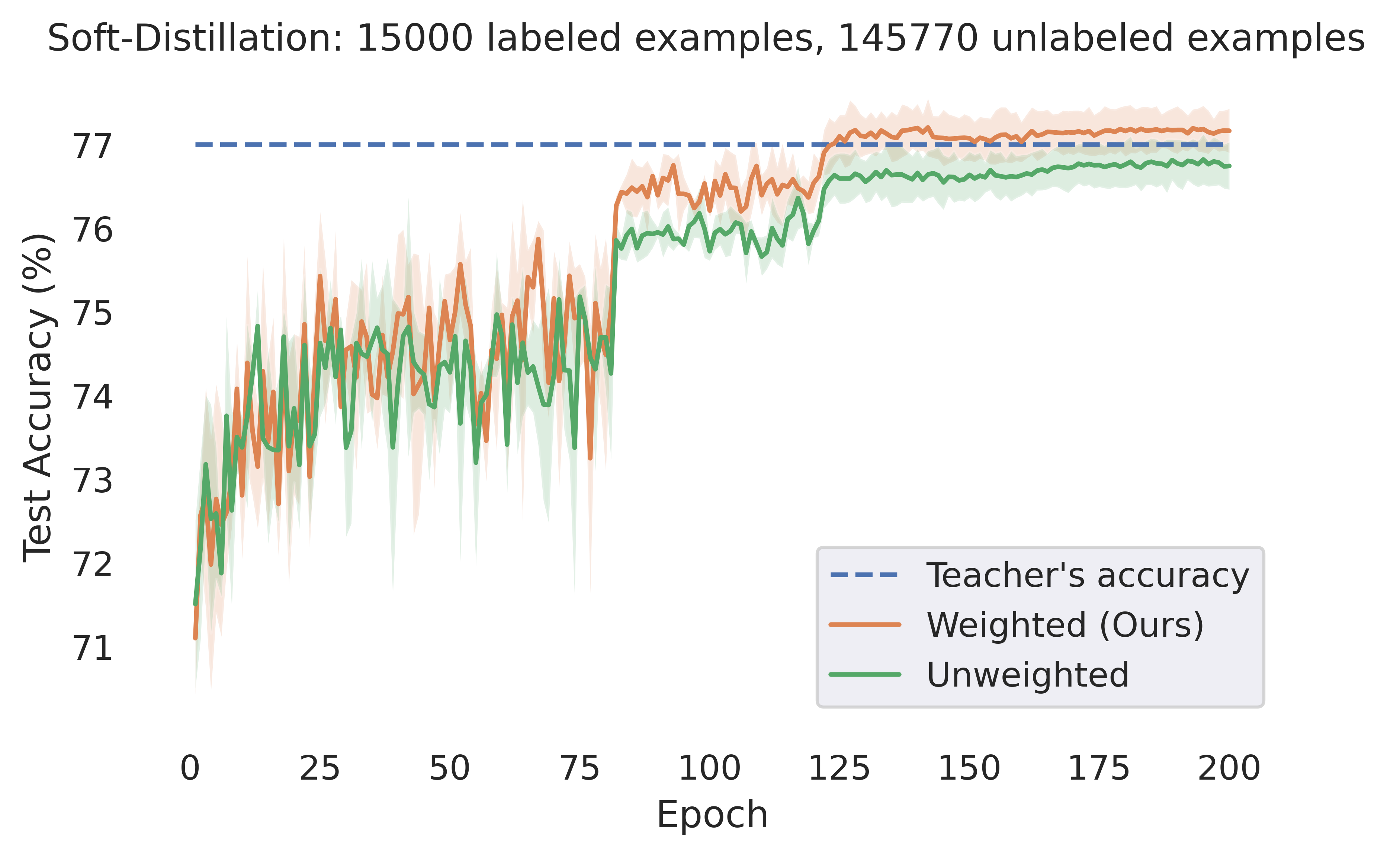} 
  \end{minipage}%
\hfill
  \centering
  \begin{minipage}[t]{0.25\textwidth}
  \centering
 \includegraphics[width=1\textwidth]{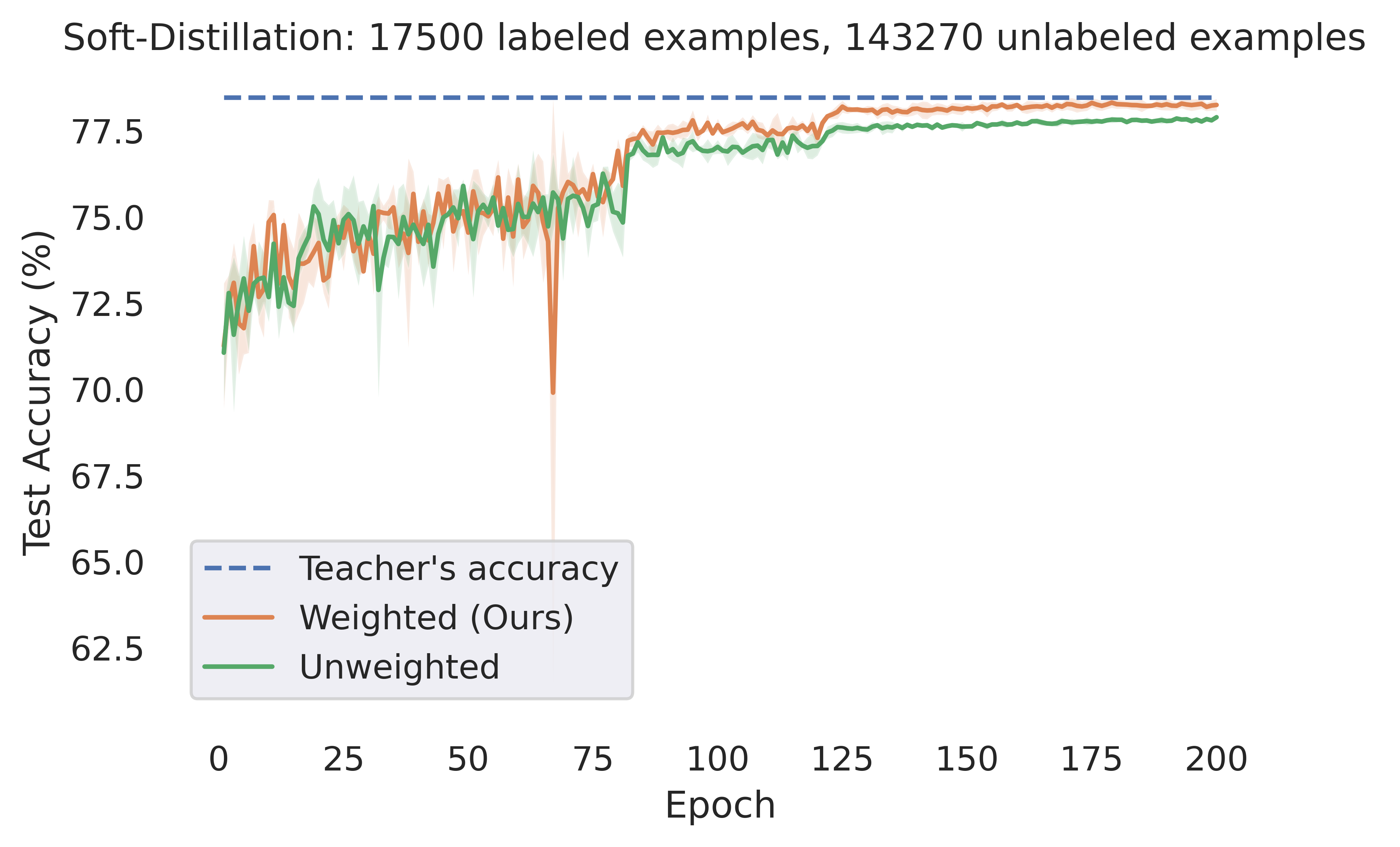} 
  \end{minipage}%

\caption{\textbf{CIFAR-10} experiments. The student's test accuracy over the training trajectory corresponding to the experiments of Figure~\ref{CIFAR10_experiments}. See Section~\ref{cifar10and100} for more details.
 }
	\label{CIFAR10ull}
\end{figure*}

\begin{figure*}[!ht]
  \centering
  \begin{minipage}[t]{0.25\textwidth}
  \centering
 \includegraphics[width=1\textwidth]{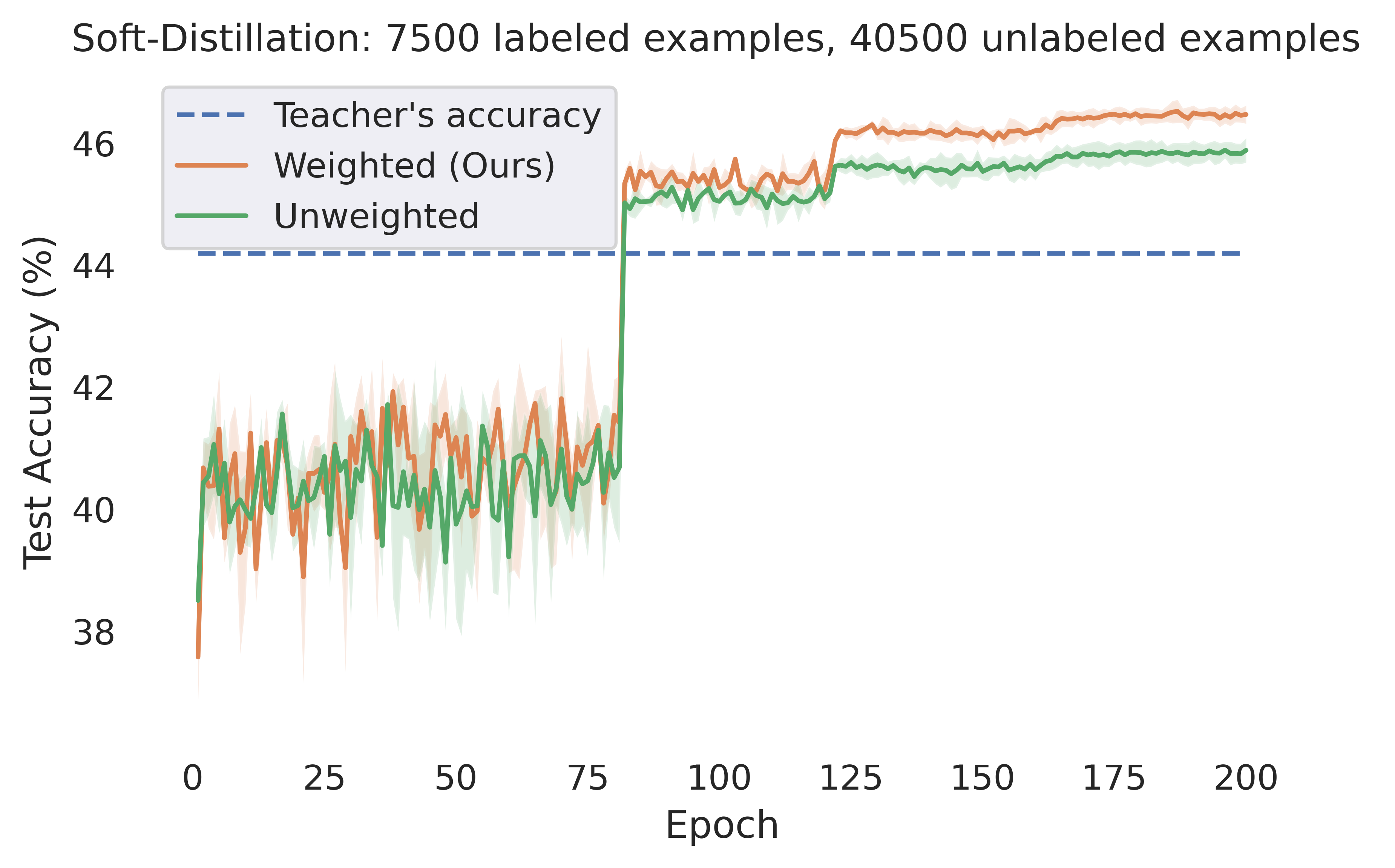} 
  \end{minipage}%
  \hfill
  \centering
  \begin{minipage}[t]{0.25\textwidth}
  \centering
 \includegraphics[width=1\textwidth]{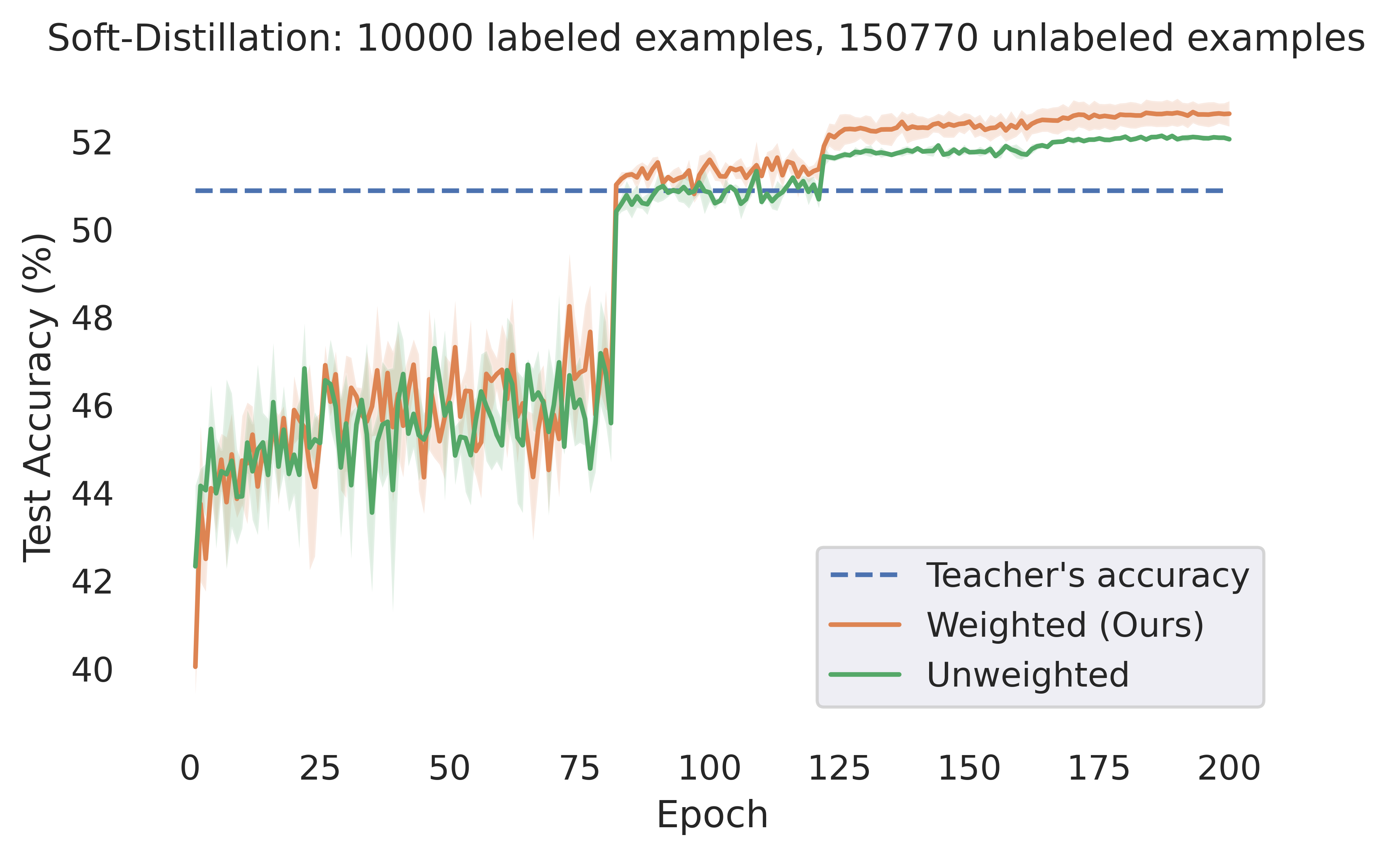} 
  \end{minipage}%
  \hfill
  \centering
  \begin{minipage}[t]{0.25\textwidth}
  \centering
 \includegraphics[width=1\textwidth]{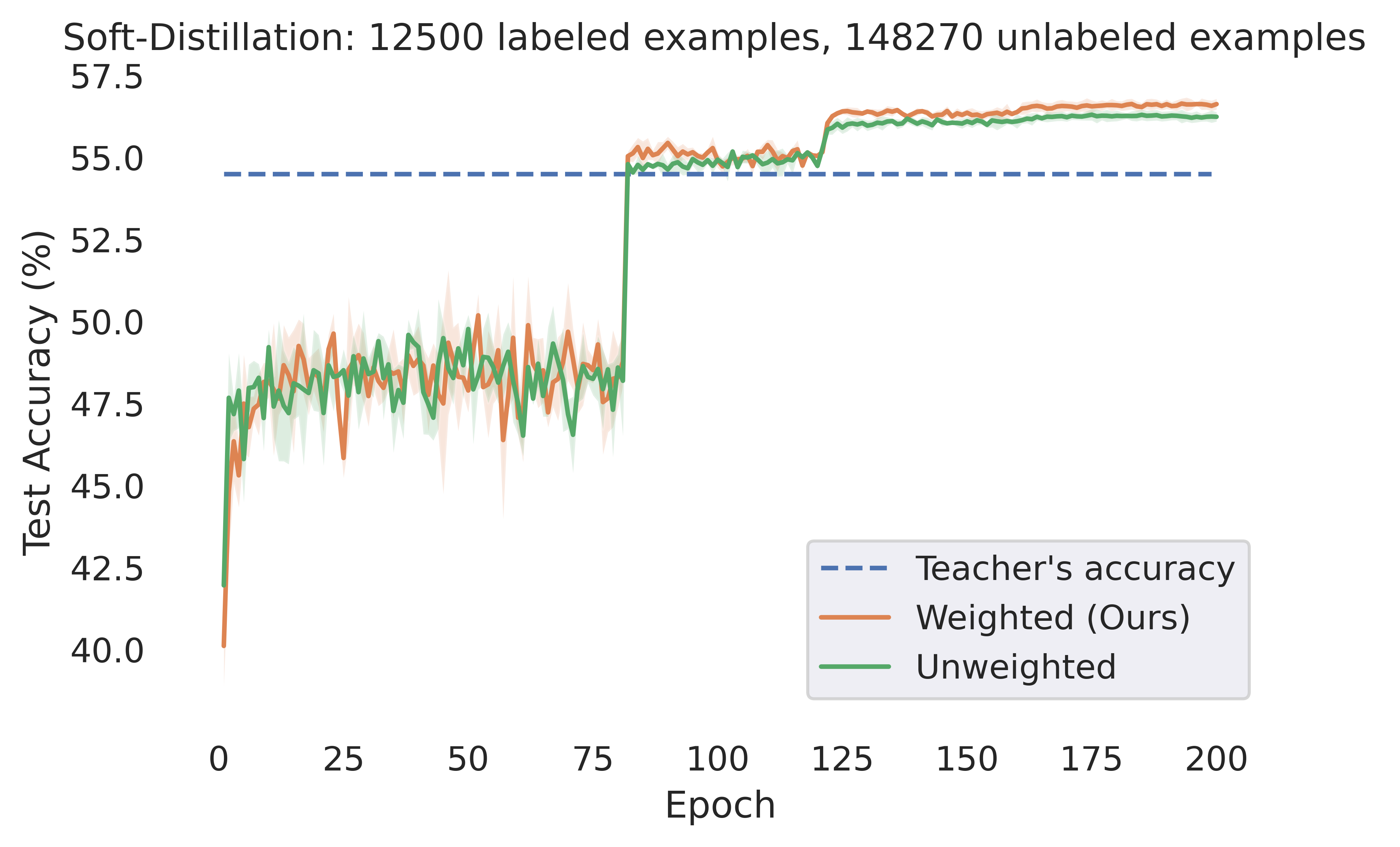} 
  \end{minipage}%
\hfill
  \centering
  \begin{minipage}[t]{0.25\textwidth}
  \centering
 \includegraphics[width=1\textwidth]{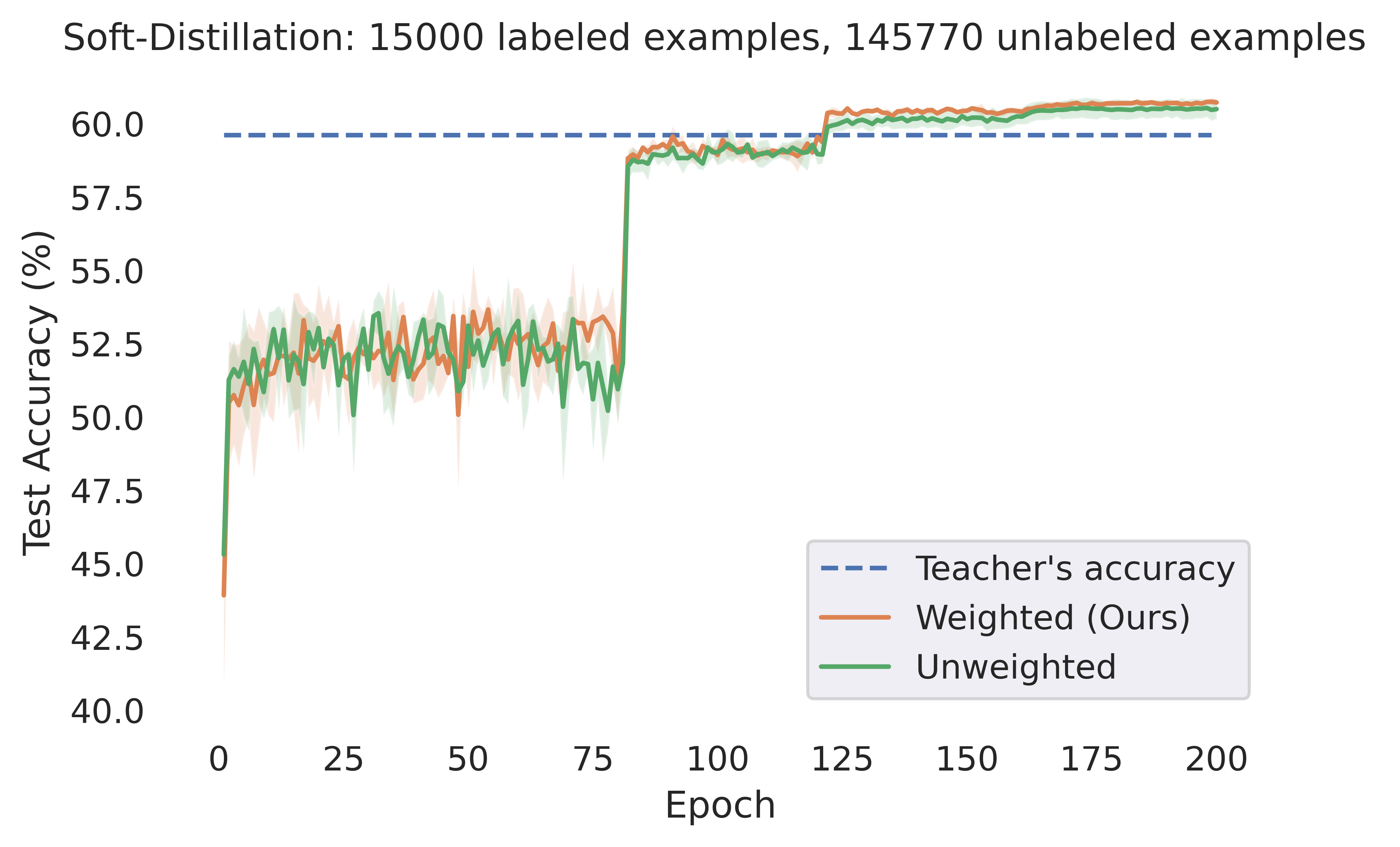} 
  \end{minipage}%

\caption{\textbf{CIFAR-100} experiments. The student's test accuracy over the training trajectory corresponding to the experiments of Figure~\ref{CIFAR100_experiments}. See Section~\ref{cifar10and100} for more details.
 }
	\label{CIFAR10ull}
\end{figure*}

\begin{figure*}[!ht]
  \centering
  \begin{minipage}[t]{0.25\textwidth}
  \centering
 \includegraphics[width=1\textwidth]{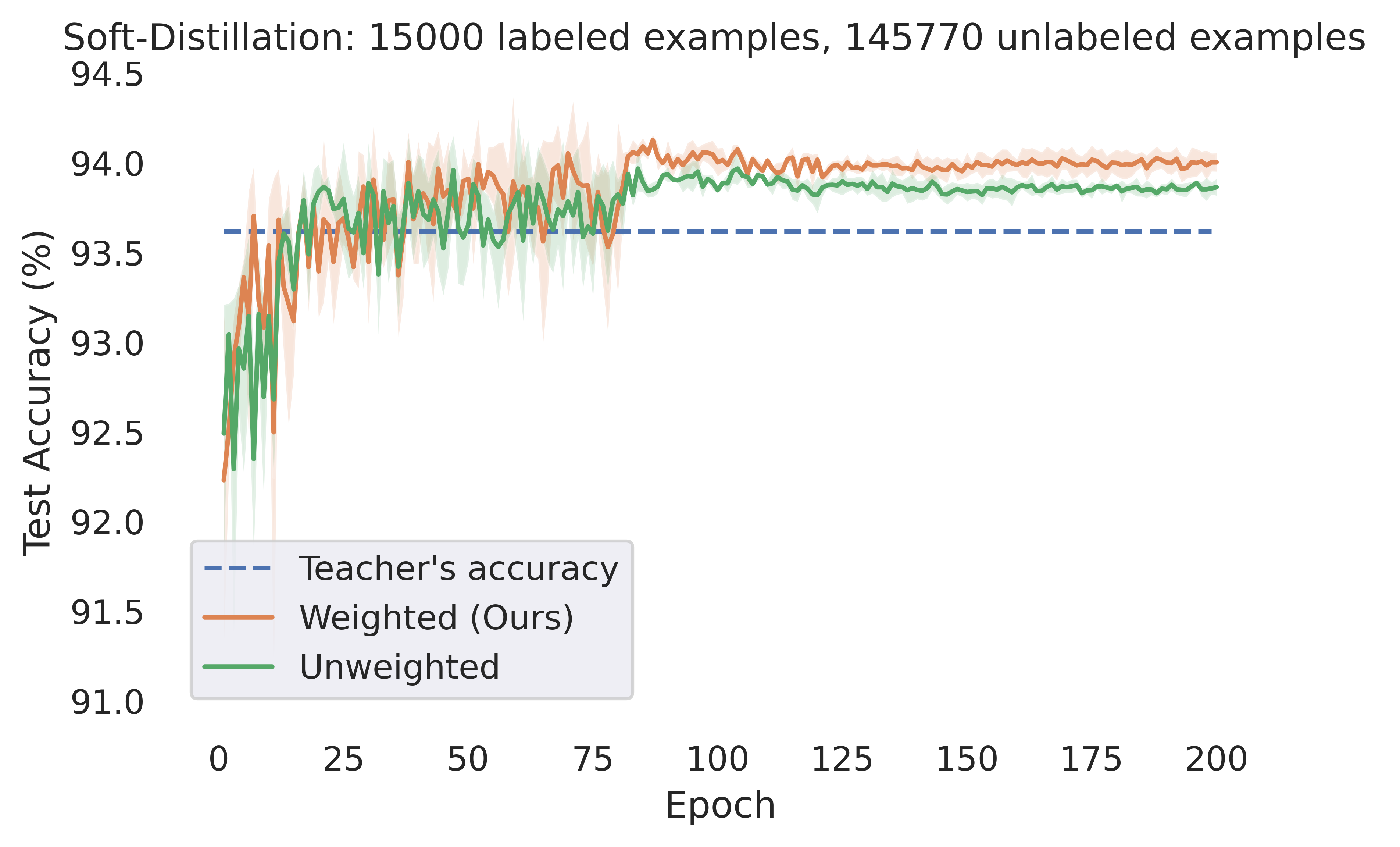} 
  \end{minipage}%
  \hfill
  \centering
  \begin{minipage}[t]{0.25\textwidth}
  \centering
 \includegraphics[width=1\textwidth]{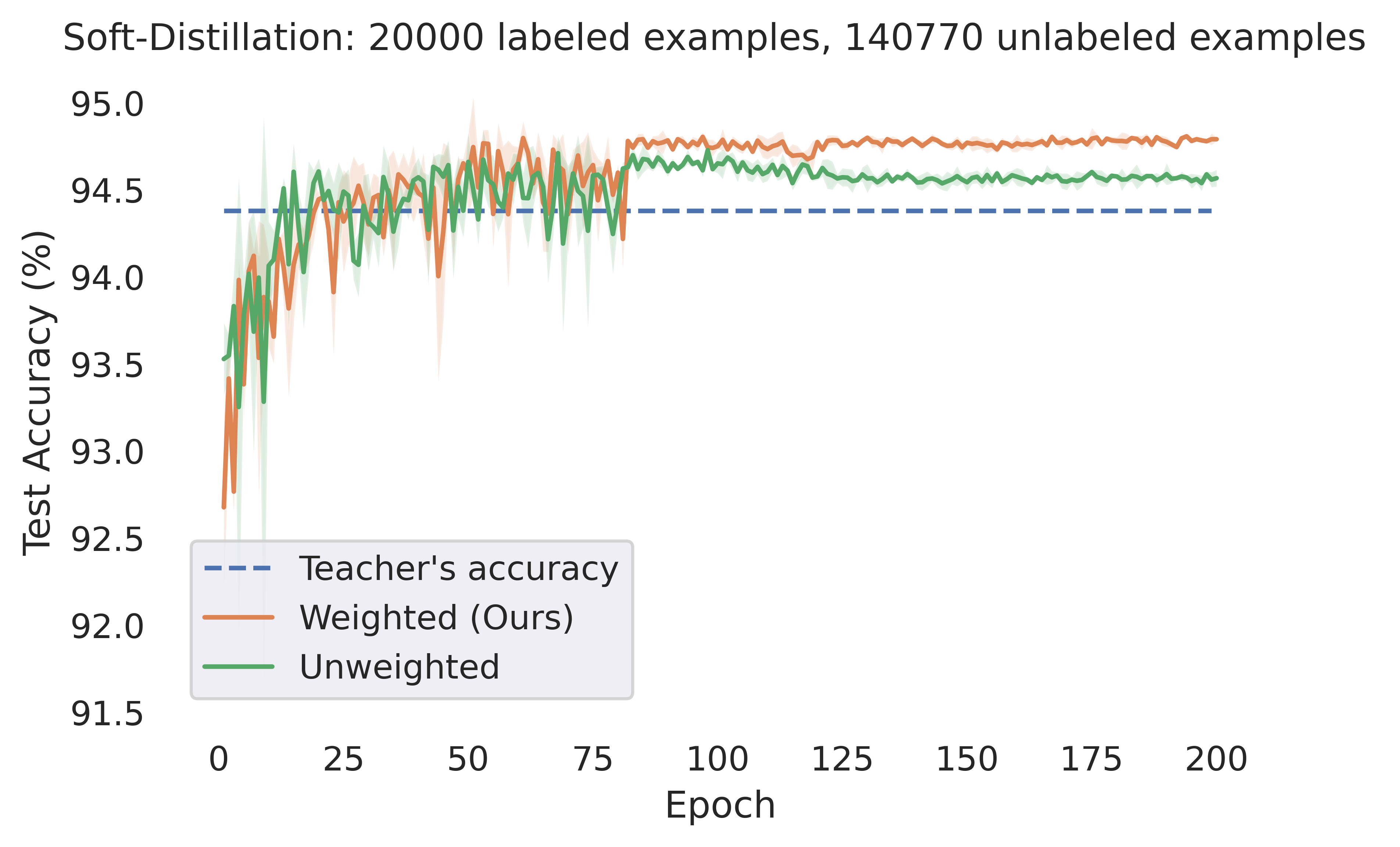} 
  \end{minipage}%
  \hfill
  \centering
  \begin{minipage}[t]{0.25\textwidth}
  \centering
 \includegraphics[width=1\textwidth]{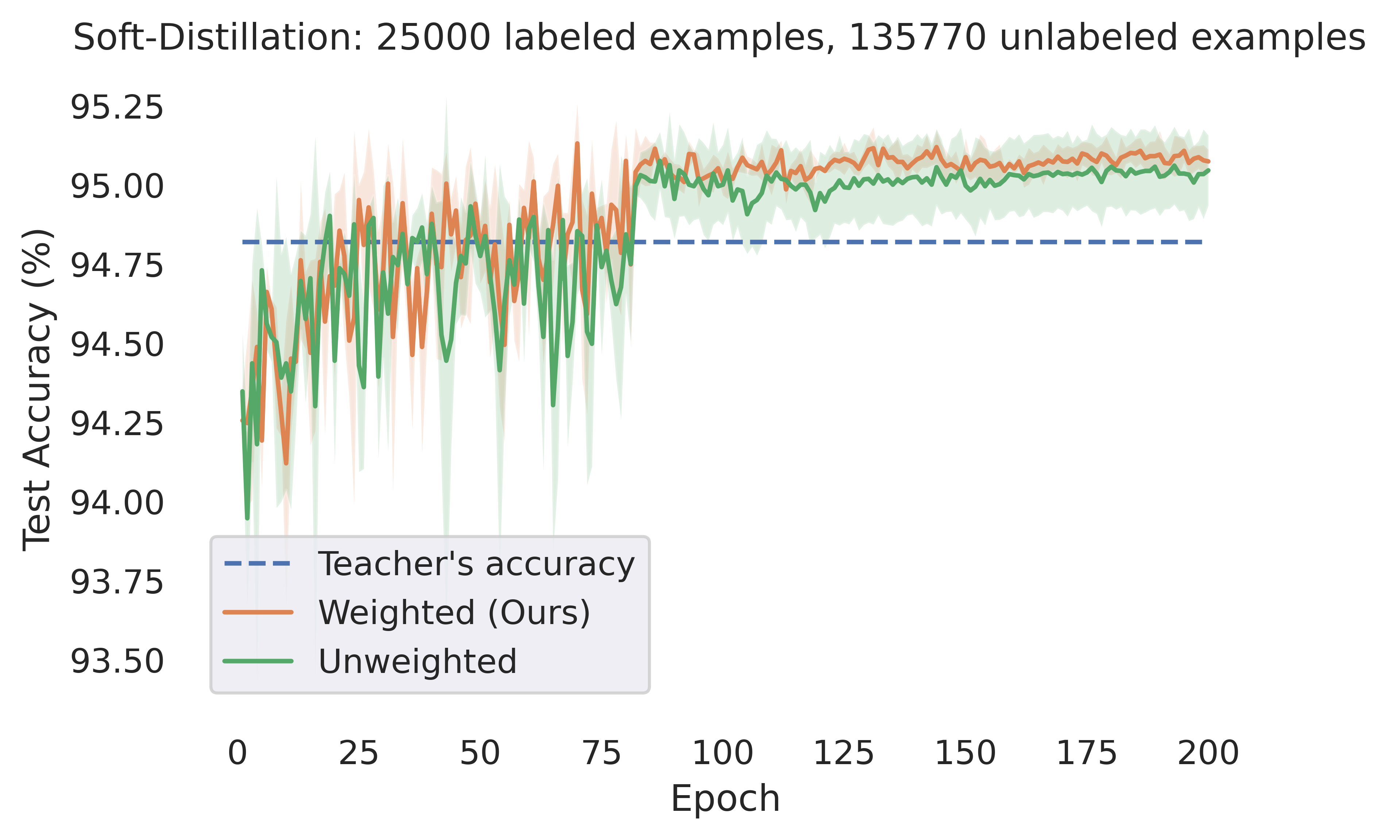} 
  \end{minipage}%
\hfill
  \centering
  \begin{minipage}[t]{0.25\textwidth}
  \centering
 \includegraphics[width=1\textwidth]{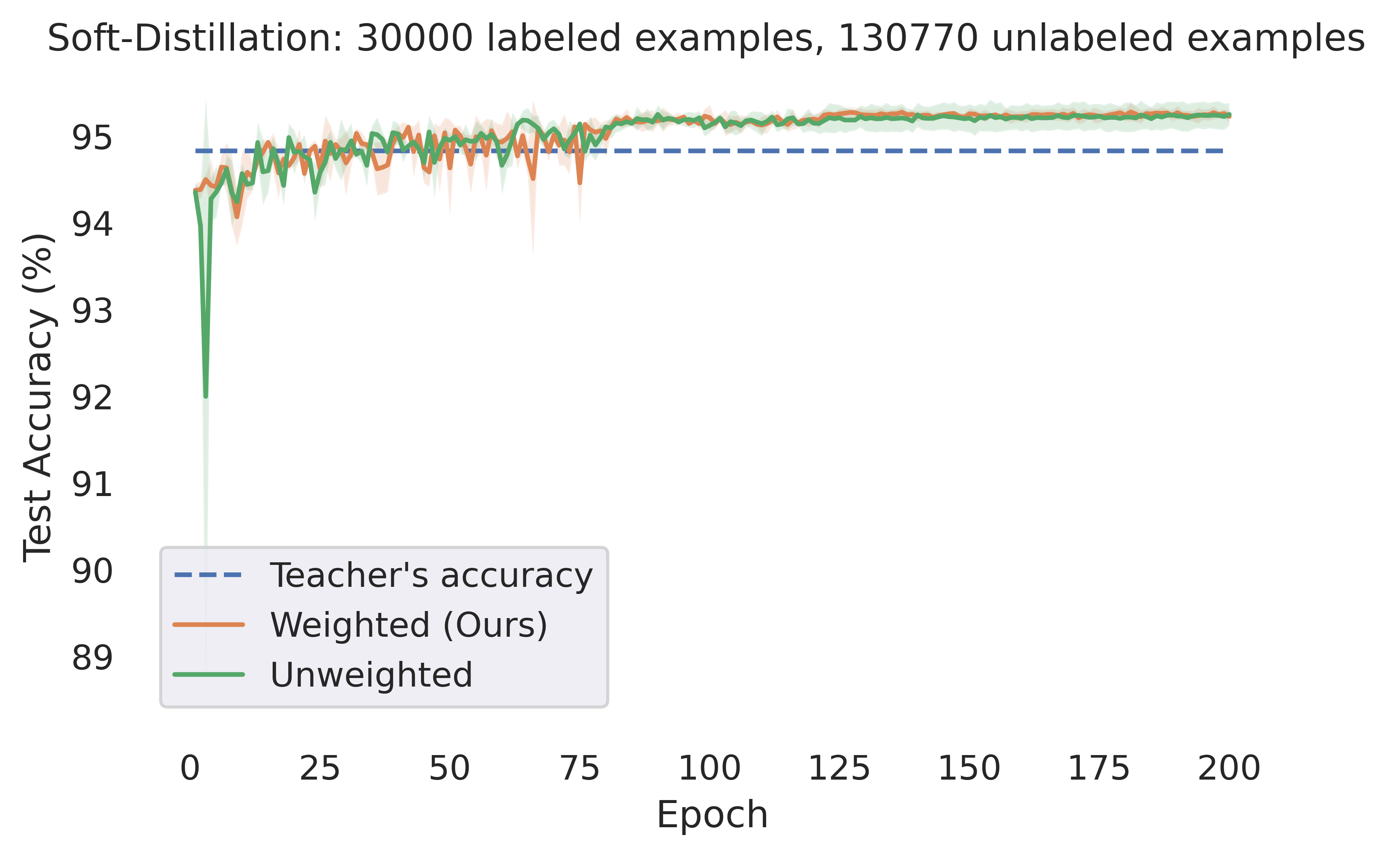} 
  \end{minipage}%

\caption{\textbf{CelebA} experiments. The student's test accuracy over the training trajectory corresponding to the experiments of Figure~\ref{celeba_experiments}. See Section~\ref{sec:celeba} for more details.
 }
	\label{SVHNFull}
\end{figure*}


\begin{figure*}[!ht]
  \centering
  \begin{minipage}[t]{0.5\textwidth}
  \centering
 \includegraphics[width=0.7\textwidth]{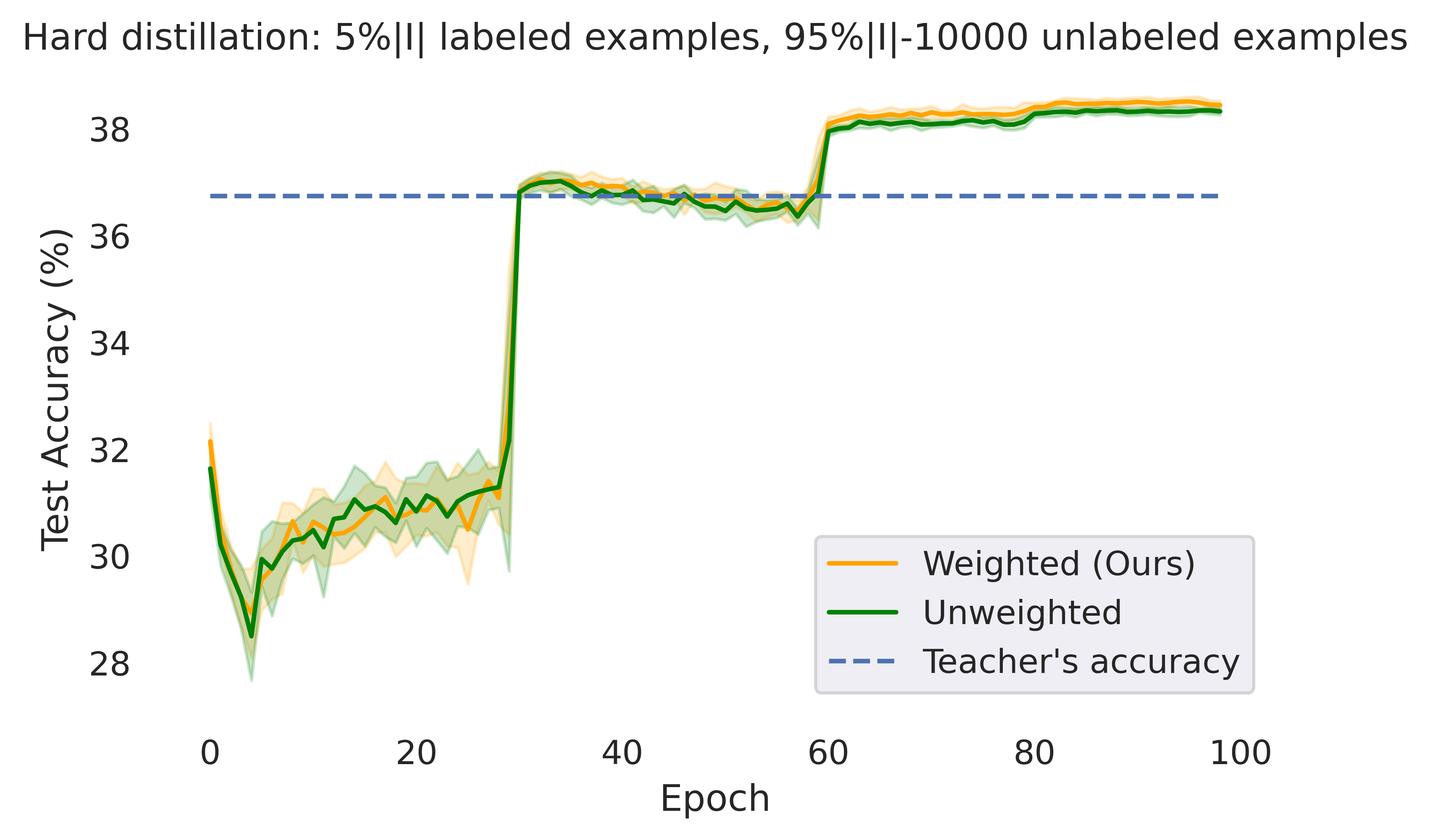} 
  \end{minipage}%
  \hfill
  \centering
  \begin{minipage}[t]{0.5\textwidth}
  \centering
 \includegraphics[width=0.7\textwidth]{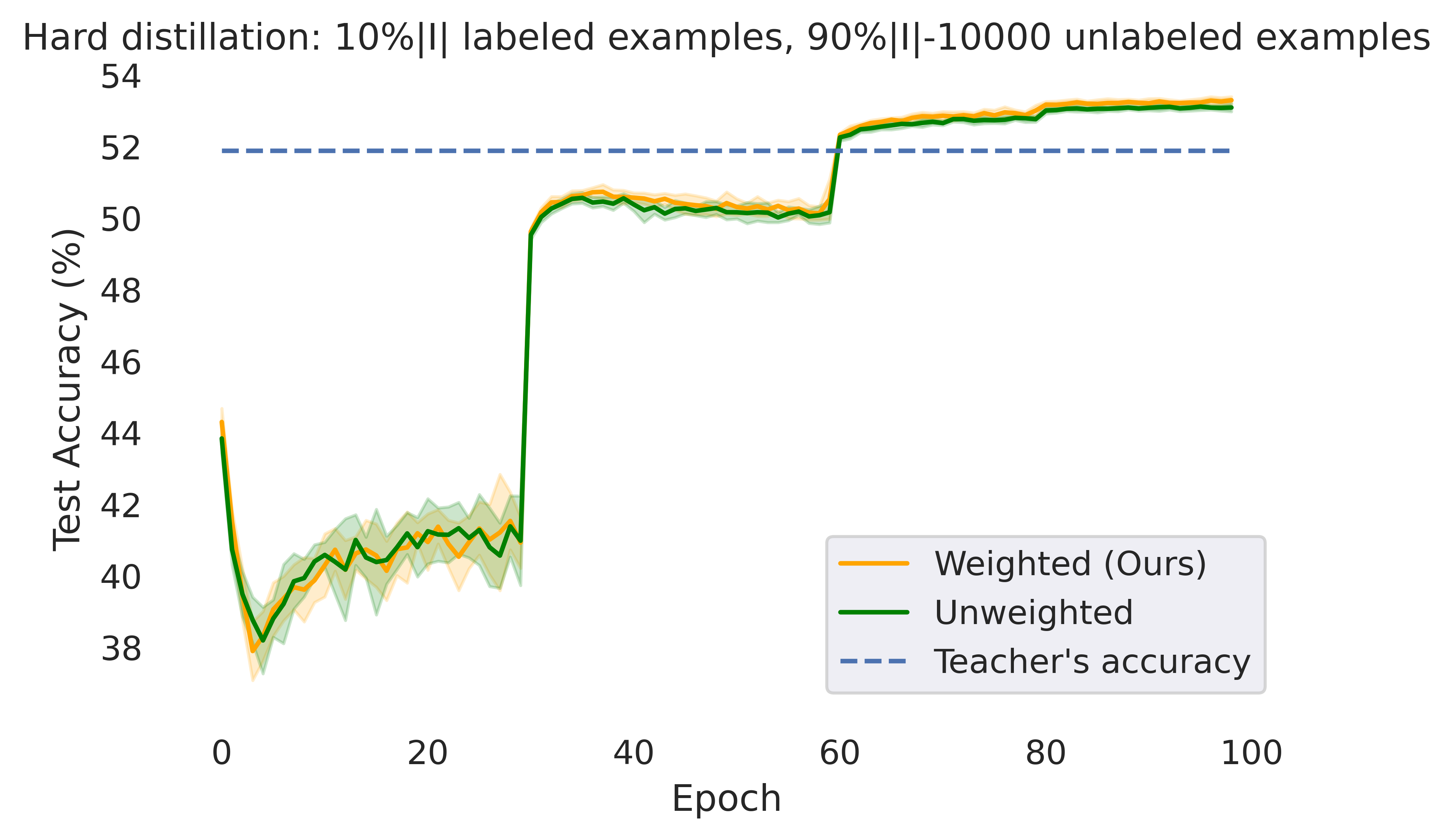} 
  \end{minipage}%
  \hfill
  
  \centering
  \begin{minipage}[t]{0.5\textwidth}
  \centering
 \includegraphics[width=0.7\textwidth]{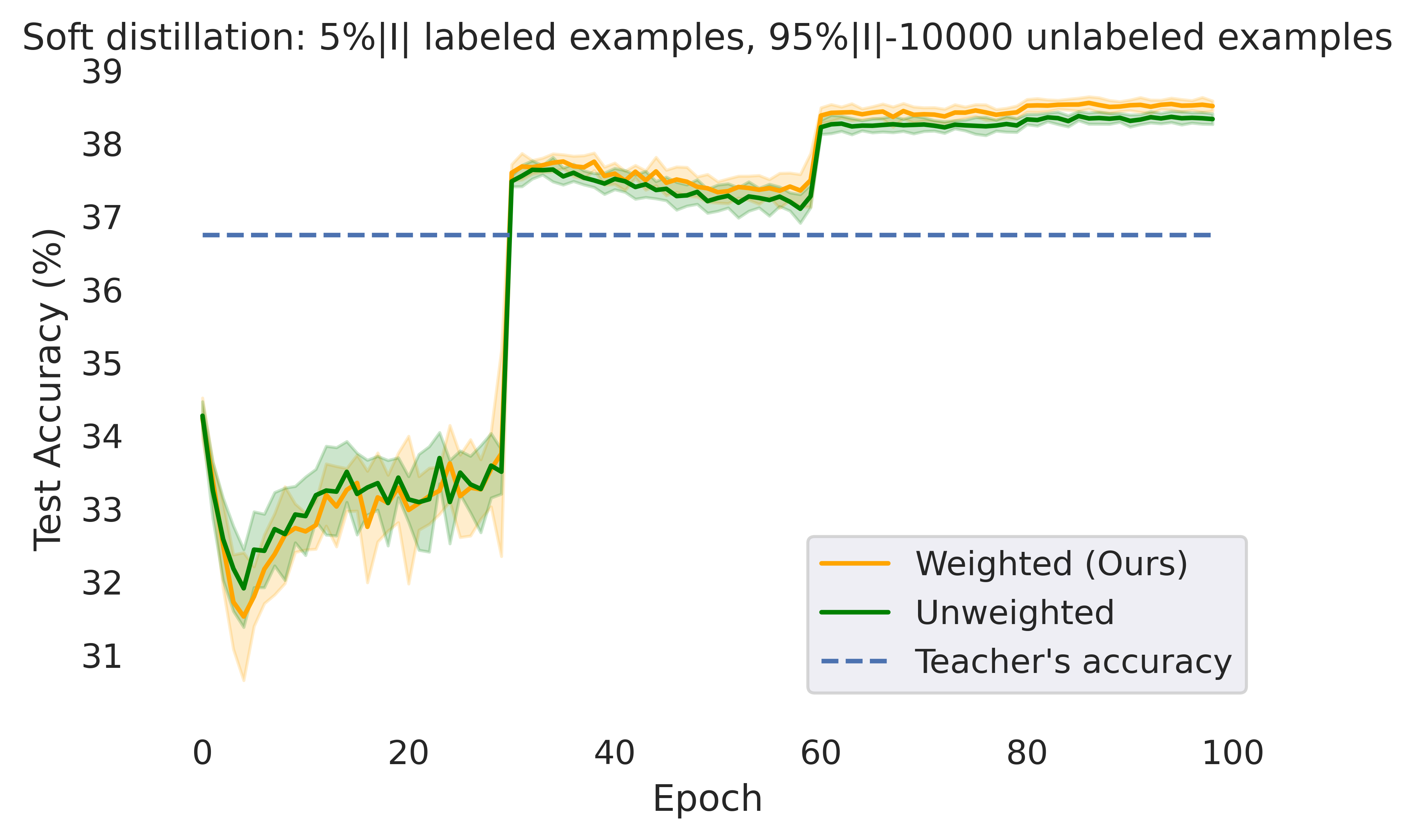} 
  \end{minipage}%
\hfill
  \centering
  \begin{minipage}[t]{0.5\textwidth}
  \centering
 \includegraphics[width=0.7\textwidth]{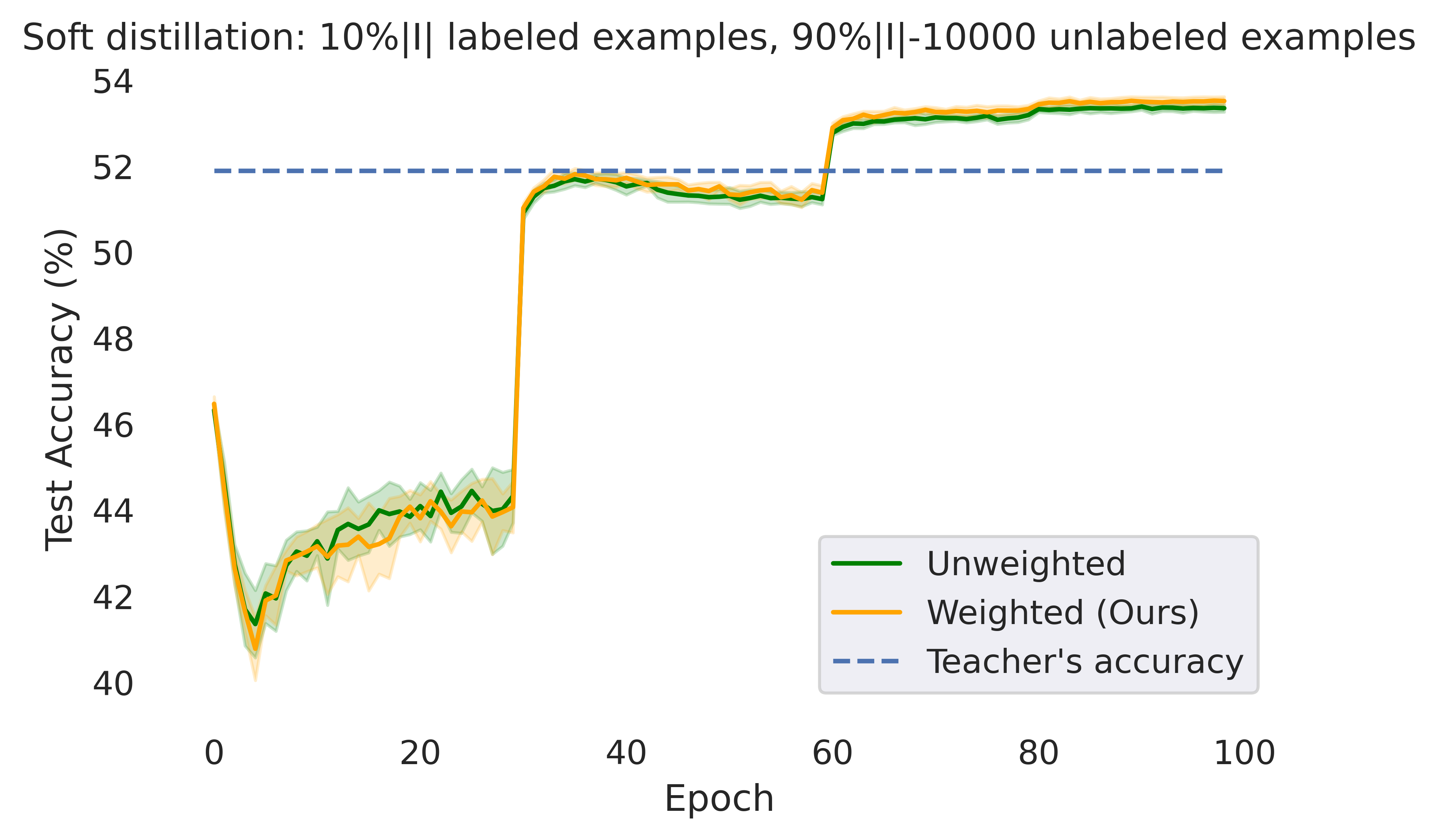} 
  \end{minipage}%

\caption{\textbf{ImageNet} experiments. The student's test accuracy over the training trajectory using hard-distillation (first row) and soft-distillation (second row) corresponding to the experiments of Figure~\ref{ImageNet_experiments}. See Section~\ref{imagenet_experiments} for more details.
 }
	\label{ImageNetFull}
\end{figure*}
 
 \blue{
\subsection{Considering the effect of temperature} 
 \label{temperature_effect}
 
 Temperature-scaling, a technique introduced in the original paper of Hinton et. al.~\cite{distillation},  is one the most common ways for improving student's performance in distillation. Indeed, it is known (see e.g.~\cite{stanton2021does}) that choosing the right value for the temperature can be quite beneficial, to the point it can outperform other more advanced techniques for improving distillation. Here we demonstrate that our approach provides benefits on top of any improvement on can get via temperature-scaling by conducting an ablation study on the effect of temperature on CIFAR-100. In our experiment, the teacher model is a Resnet-110 achieving accuracy $56.0\%$, the student model is a Resnet-56, the number of labeled examples is $12500$, the validation set consists of $500$ examples, and we use the entropy of a prediction  as a metric of confidence. We apply our method using one-shot estimation of the weights. We compare training the student model using conventional distillation to using our method for different values of temperature. The results can be found in the table below. We see that in almost all cases  the student-model trained using our method outperforms the student-model trained using conventional distillation and, in particular, the best student overall is the result of choosing $ 2.0$ for the value of temperature and applying our method.

 }

\begin{figure}[!ht] 

\begin{center}
\tiny
\begin{tabular}{||c| c| c ||} 
 \hline
Temperature  & Unweighted & Weighted (ours) \\ [0.1ex] 
 \hline
$0.01$ &	$52.84 \pm 0.08\%$ & 	$53.73 \pm 0.11\%$  \\ 
  \hline
$0.10$&	$54.63 \pm 0.09 \%$ &	$54.84  \pm 0.12 \%$  \\
 \hline
$0.50$&	$56.45 \pm 0.12 \%$ &	$57.01 \pm 0.1 \%$ \\
 \hline 
$0.80$&	$56.67 \pm 0.12 \%$ &	$57.60 \pm 0.15 \%$ \\
 \hline
$1.00$& $	57.17 \pm 0.15 \%$ & 	$57.56 \pm  0.09\%$ \\ 
 \hline
$2.00$&	$57.54 \pm  0.11 \%$&	$\mathbf{57.8 \pm 0.21 \%}$ \\
\hline
$3.00$&	$57.20 \pm 0.18\%$ &	$57.09 \pm 0.25 \%$ \\
\hline
$5.00$&	$56.92 \pm 0.11 \%$	& $57.01 \pm 0.2 \%$ \\ [1ex] 
 \hline
\end{tabular}
\end{center}

\caption{Ablation study on the effect of temperature on CIFAR-100. See Appendix~\ref{temperature_effect} for details}

\end{figure}

\section{Implementation details}
\label{implementation}

In this section we describe the implementation details of our experiments. Recall the description of our method in Section~\ref{our_method}.

\subsection{Experiments on CelebA, CIFAR-10, CIFAR-100, SVHN}
\label{dets}

 All of our experiments are performed according to the following recipe. In all cases, the loss function $\ell: \mathbb{R}^{L} \times \mathbb{R}^L \rightarrow \mathbb{R}_+$ we use is the cross-entropy loss.  We   train the teacher model for $200$ epochs on dataset $S_{\ell}$. We pretrain the student model for $200$ epochs on dataset $S_{\ell}$ and save its  parameters. Then, using the latter saved parameters for initialization each time, we train the student model for  $200$ epochs optimizing either the weighted or conventional (unweighted) empirical risk,  and report its average performance over three trials.
 
We use the Adam optimizer. The initial learning rate is $\mathrm{lr} = 0.001$. We proceed according to the following learning rate schedule (see e.g.,~\cite{he2016deep}):
\begin{align*}
\mathrm{lr} \leftarrow
\begin{cases}
\mathrm{lr} \cdot 0.5 \cdot 10^{-3}, & \text{if  $\#\mathrm{epochs} > 180$ } \\
\mathrm{lr}  \cdot 10^{-3}, & \text{if  $\#\mathrm{epochs} > 160$ } \\
\mathrm{lr}  \cdot 10^{-2}, & \text{if  $\#\mathrm{epochs} > 120$ } \\
\mathrm{lr}  \cdot 10^{-1}, & \text{if  $\#\mathrm{epochs} > 80$ }
\end{cases}
\end{align*}
Finally, we use data-augmentation. In particular, we use random horizontal flipping and random width and height translations with width and height factor, respectively, equal to $0.1$.

\subsection{Experiments on ImageNet}

For the ImageNet experiments we follow a similar although not identical recipe to the one described in Appendix~\ref{dets}. In each training stage above, we train the model (teacher or student) for $100$ epochs instead of $200$. We also use SGD with momentum $0.9$ instead of Adam as the optimizer. For data-augmentation we use only random horizontal flipping. Finally, the learning rate schedule is as follows. For the first $5$ epochs the learning rate $\mathrm{lr}$ is increased from $0.0$ to $0.1$ linearly. After that, the learning rate changes as follows:
\begin{align*}
\mathrm{lr} =
\begin{cases}
0.01, & \text{if  $\#\mathrm{epochs} > 30$ } \\
0.001, & \text{if  $\#\mathrm{epochs} > 60$ } \\
0.0001, & \text{if  $\#\mathrm{epochs} > 80$ }
\end{cases}
\end{align*}

\blue{
\subsection{Details on the experimental setup of Section~\ref{fidelity_weights_comparison} } 
\label{fidelity_details}

In the CIFAR-10 experiments of Section~\ref{fidelity_details} the teacher model is a MobileNet with depth multiplier 2, and the student model is a MobileNet with depth multiplier 1. In the CIFAR-100 experiments of the same section, the teacher model is a ResNet-110, and the student model is a ResNet-56.
We use a validation set consisting of $500$ examples (randomly chosen as always). The student of each method has access to the same number of labeled examples, i.e., the validation set is used for training the student model as we describe in Remark~\ref{remark_on_validation}. We compare the following three methods:

\begin{itemize}

\item \textbf{Fidelity weighting scheme~\cite{dehghani2017fidelity}.} For every example $x$ we use the entropy of the teacher's prediction as an uncertainty/confidence measure, which we denote by $\mathrm{entropy}(x)$. We then compute the exponential weights described in~\cite{dehghani2017fidelity} as 
$w(x) = \mathrm{exp}(- \mathrm{entropy}(x)/ \overline{\mathrm{entropy}} )  $, where $\overline{\mathrm{entropy}}$  is the average entropy of the teacher's predictions over all training examples. 

\item   \textbf{Our method.} We use the entropy as the metric for confidence. In the case of CIFAR-10 we re-estimate the weights at the end of  every epoch. In the case of CIFAR-100 the weights are estimated only once in the beginning of the process.

\item \textbf{Composition.}  We reweight each example in the loss function by multiplying the weights resulting from the two methods above.

\end{itemize}

}

\section{Extended theoretical motivation: statistical aspects}
\label{statistical_perspective}

In this section we study the statistical aspects of our approach. In Section~\ref{statistical_motivation} we revisit and formally state Theorem~\ref{stats_theorem} — see Corollary~\ref{main_corollary} and Remark~\ref{limits_remark}. In Section~\ref{MSE} we perform a Mean-Squared-Error analysis that provides additional justification of our choice to always project the weights on the $[0,1]$ interval (recall Line~\ref{projection} of Algorithm~\ref{alg:estimating_weights}). Finally, in Section~\ref{main_proposition_proof} we provide the proof of Proposition~\ref{expectations} which was omitted from the main body of the paper.

\subsection{Statistical motivation}
\label{statistical_motivation}

Recall the background on multiclass classification in Section~\ref{background}. In this section we study hypothesis classes $\mathcal{F}$ and loss functions $\ell: \mathbb{R}^L \times \mathbb{R}^L: \rightarrow \mathbb{R}$ that are ``well-behaved'' with respect to a certain (standard in the machine learning literature) complexity measure we describe below.

For $\epsilon >0$, a class $\mathcal{H}$ of functions $h: \mathcal{X} \rightarrow [0, 1]$ and an integer $n$, the ``growth function" $\mathcal{N}_{\infty}(\epsilon, \mathcal{H}, n)$  is defined as
\begin{align}
    \mathcal{N}_{\infty}(\epsilon, \mathcal{H}, n)  = \mathrm{sup}_{\mathbf{x} \in \mathcal{X}^n }
\mathcal{N} (\epsilon, \mathcal{H}(\mathbf{x} ), \| \cdot \|_{\infty}  ),
\end{align}
where $\mathcal{H}(\mathbf{x} ) = \{ (h(x_1), \ldots, h(x_n) ):  h \in \mathcal{H}    \} \subseteq \mathbb{R}^n$ and for $ A \subseteq \mathbb{R}^n$ the number $\mathcal{N} (\epsilon, A, \| \cdot \|_{\infty}  )$ is the smallest cardinality $A_0$ of a set $A_0 \subseteq A$ such that $A$ is contained in the union of $\epsilon$-balls centered at points in $A_0$, in the metric induced by $\| \cdot \|_{\infty}$
The growth number is a complexity measure of function classes commonly used in the machine learning literature~\cite{anthony1999neural, guo1999covering}. 

The following theorem from~\cite{maurer2009empirical} provides large deviation bounds for function classes of polynomial growth.

\begin{theorem}[Theorem 6, \cite{maurer2009empirical}]\label{empirical_bernstein}
Let $Z$ be a random variable taking values in $\mathcal{Z}$ distributed according to distribution $\mu$,  and let $\mathcal{H}: \mathcal{Z} \rightarrow [0,1] $ be a class of functions. Fix $\delta \in (0,1 ), n \ge 16$ and set
\begin{align*}
\mathcal{M}(n) = 10 \mathcal{N}_{\infty}(1/n, \mathcal{H}, 2n ).
\end{align*}
Then with probability at least $1-\delta$ in the random vector $Z= (Z_1, \ldots, Z_n) \sim \mu^n$, for \emph{every} $h \in \mathcal{H}$ we have:
\begin{align*}
    \left| \ex[h(Z) ] - \frac{1}{n} \sum_{i=1}^{n} h(Z_i) \right| \le \sqrt{\frac{18 \mathbb{V}_n(h, Z )\ln (2\mathcal{M}(n)  / \delta )  }{ n}  }  +  \frac{15 \ln \left( 2\mathcal{M}(n) / \delta \right) }{n-1 },
\end{align*}
where $\mathbb{V}_n(h,Z )$ is the sample variance of the sequence $\{h(Z_i )\}_{i=1}^n$.
\end{theorem}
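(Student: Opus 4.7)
The plan is to follow the standard template of Maurer--Pontil type empirical Bernstein bounds: first prove the result for a single fixed $h$, then lift it to a uniform statement using the $\ell_\infty$ covering number encoded by $\mathcal{N}_\infty(1/n,\mathcal{H},2n)$. The key conceptual ingredients are (i) Bernstein's inequality with the true variance, (ii) a one-sided concentration of the sample variance around the true variance, and (iii) a symmetrization-plus-covering argument that makes the bound uniform over $\mathcal{H}$.

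First, for a single $h \in \mathcal{H}$ with $\sigma^2(h) = \mathrm{Var}(h(Z))$, I would apply the classical Bernstein inequality for $[0,1]$-valued i.i.d.\ random variables to obtain
\[
\Bigl|\mathbb{E}[h(Z)] - \tfrac{1}{n}\sum_i h(Z_i)\Bigr| \le \sqrt{\tfrac{2\sigma^2(h)\log(2/\delta)}{n}} + \tfrac{2\log(2/\delta)}{3n}.
\]
Next, I would control $\sigma(h)$ by the empirical standard deviation $\sqrt{\mathbb{V}_n(h,Z)}$. The cleanest way is a self-bounded-function argument: since $h$ is $[0,1]$-valued, $(n-1)\mathbb{V}_n(h,Z)$ is a self-bounded function of $(Z_1,\dots,Z_n)$, so Bousquet/McDiarmid-type inequalities yield $\bigl|\sigma(h) - \sqrt{\mathbb{V}_n(h,Z)}\bigr| \le \sqrt{2\log(2/\delta)/(n-1)}$ with probability $\ge 1-\delta$. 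Substituting this into the Bernstein bound and tracking constants produces the single-function empirical Bernstein inequality with exactly the shape on the right-hand side of the theorem.

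To lift to a uniform bound, I would introduce a ghost sample $Z' = (Z_1',\dots,Z_n')$ and apply standard symmetrization to reduce the event of interest to one about the deviations on the double sample $(Z,Z')$. Conditional on the double sample, the set $\mathcal{H}(Z,Z') \subseteq [0,1]^{2n}$ is finite and can be covered in $\ell_\infty$ by $\mathcal{N}_\infty(1/n,\mathcal{H},2n)$ points. For each cover representative $\tilde h$ I would apply the single-function empirical Bernstein bound and take a union bound; this is where the factor $\mathcal{M}(n) = 10\mathcal{N}_\infty(1/n,\mathcal{H},2n)$ enters (the factor $10$ accommodates the symmetrization step and the variance-concentration step together). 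Since the cover is at scale $1/n$ in $\ell_\infty$, both the empirical mean and the empirical variance of any $h \in \mathcal{H}$ differ from those of its cover representative by $O(1/n)$, so the discretization error is absorbed into the Bennett-type lower-order term $15\log(2\mathcal{M}(n)/\delta)/(n-1)$.

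The main obstacle is the last step: because $\mathbb{V}_n(h,Z)$ appears \emph{inside a square root} in the bound, replacing $h$ by its cover representative $\tilde h$ must not inflate the $\sqrt{\mathbb{V}_n}$ term by more than an additive $O(1/n)$. I would handle this by noting that for $\|h-\tilde h\|_\infty \le 1/n$ one has $|\mathbb{V}_n(h,Z) - \mathbb{V}_n(\tilde h,Z)| \le 4/n$ (using $h,\tilde h \in [0,1]$), and using $\sqrt{a+b} \le \sqrt{a} + \sqrt{b}$ to push the $O(\sqrt{1/n^2})$ correction into the additive $1/(n-1)$ term. Carefully collecting constants across the three steps (Bernstein, variance concentration, and covering/symmetrization) then yields the stated constants $18$ and $15$.
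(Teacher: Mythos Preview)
The paper does not prove this statement at all: it is quoted verbatim as Theorem~6 of Maurer--Pontil (2009) and used as a black box to derive Corollary~\ref{main_corollary}. So there is no ``paper's own proof'' to compare against; your sketch is a proof of a cited external result that the authors simply invoke.

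That said, your outline is broadly faithful to how Maurer and Pontil actually establish the bound: Bernstein for a fixed function, a self-bounding concentration argument for the sample variance, and a symmetrization-plus-$\ell_\infty$-cover step to obtain uniformity. One point to be careful about if you ever flesh this out: the constants $18$, $15$, and the specific multiplier $10$ in $\mathcal{M}(n)$ arise from a fairly delicate bookkeeping in the original paper (in particular the variance-concentration step uses a sub-Poissonian/self-bounding inequality rather than plain McDiarmid, and the symmetrization is done in a way that doubles the sample before covering). Your high-level plan is correct, but recovering the \emph{exact} constants requires following their specific sequence of inequalities rather than generic versions.
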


A straightforward corollary of Theorem~\ref{empirical_bernstein} and Proposition~\ref{expectations}, and the main motivation for our method, is the following corollary.

\begin{corollary}\label{main_corollary}
Let $\ell: \mathbb{R}^L \times \mathbb{R}^L \rightarrow [0,1]$ be a loss function and fix $\delta > 0$. Consider any hypothesis class $\mathcal{F}$
of predictors $f: \mathcal{X} \rightarrow \mathbb{R}^L$, and the two induced classes $\mathcal{H} \subseteq [0,1]^{\mathbb{R}^L \times \mathbb{R}^L  }  $, $\mathcal{H}^w \subseteq [0,1]^{\mathbb{R}^L \times \mathbb{R}^L  }  $ of functions $h_f(x, y) :=  \ell(y, f(x) )  $ and $h_f^w(x, y) := w_f(x) \ell(y, f(x) )  $, respectively. 
Fix $\delta > 0$, $n \ge 16$, and set $\mathcal{M}(n) = 10 \mathcal{N}_{\infty}(1/n, \mathcal{H}, 2n )  $ and $\mathcal{M}^w(n) = 10 \mathcal{N}_{\infty}(1/n, \mathcal{H}^w, 2n )  $. Then, with probability at least $1-\delta$ over $S = \{x_i, y_i \}_{i=1}^n \sim \mathbb{D}^n$,
\begin{eqnarray}
\left| R(f) + \mathrm{Bias}(f) - R_{S}(f) \right| & = &  O \left( \sqrt{ \mathbb{V}_S(f) \cdot \frac{\ln \frac{\mathcal{M}(n) }{\delta } }{n } }  +  \frac{\ln \frac{\mathcal{M}(n) }{\delta } }{n } \right) \\
\left| R(f) - R^w_S(f) \right| & = &  O \left( \sqrt{ \mathbb{V}_S^w(f) \cdot \frac{\ln \frac{\mathcal{M}^w(n) }{\delta } }{n } }  +  \frac{\ln \frac{\mathcal{M}^w(n) }{\delta } }{n } \right)
\end{eqnarray}
where $\mathbb{V}_S(f), \mathbb{V}_S^w(f) $ are the sample variances of the loss values $\{ h_f(x_i, y_i ) \}_{i=1}^n$, $\{ h_f^w(x_i,y_i)  \}_{i=1}^n$, respectively.

\end{corollary}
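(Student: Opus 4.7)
The plan is to reduce the corollary to a direct application of Theorem~\ref{empirical_bernstein}, once for the standard risk class $\mathcal{H}$ and once for the reweighted class $\mathcal{H}^w$, combined with the expectation identities supplied by Proposition~\ref{expectations}.

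First I would verify the hypotheses of Theorem~\ref{empirical_bernstein} on each of the two classes separately. The random variable to be averaged is $Z=(x,y)\sim\mathbb{D}$; for $\mathcal{H}$ we take $h_f(Z)=\ell(y,f(x))$, which lies in $[0,1]$ by the assumption on $\ell$, whose empirical mean over $S$ is exactly $R_S(f)$ and whose population mean equals $R(f)+\mathrm{Bias}(f)$ by Proposition~\ref{expectations}. For $\mathcal{H}^w$ the function $h_f^w(Z)=w_f(x)\,\ell(y,f(x))$ is also in $[0,1]$ provided the weights are projected onto $[0,1]$ (Line~\ref{projection} of Algorithm~\ref{alg:estimating_weights}); its empirical mean is $R_S^w(f)$, and its population mean is $R(f)$, again by Proposition~\ref{expectations}.

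Once these identifications are in place, invoking Theorem~\ref{empirical_bernstein} on $\mathcal{H}$ with the growth function $\mathcal{M}(n)$ and on $\mathcal{H}^w$ with $\mathcal{M}^w(n)$ yields the two claimed deviation bounds with probability at least $1-\delta$ each (or $1-\delta/2$ combined via a union bound, depending on taste); the explicit numerical constants $18$ and $15$ appearing in Theorem~\ref{empirical_bernstein} are absorbed into the $O(\cdot)$ notation, and the two sample variances $\mathbb{V}_S(f)$ and $\mathbb{V}_S^w(f)$ correspond precisely to the sample variances of $\{h_f(Z_i)\}_{i=1}^n$ and $\{h_f^w(Z_i)\}_{i=1}^n$ respectively.

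Since the argument is essentially bookkeeping, the main ``obstacle''---if one can call it that---is the verification that $h_f^w$ is $[0,1]$-valued, which is exactly the motivation behind the $[0,1]$-projection of the weights discussed in Section~\ref{our_method}, together with pairing the correct expectation identity from Proposition~\ref{expectations} with each of the two classes. Without the projection one would have to track an additional boundedness constant for $w_f$ in both $\mathcal{M}^w(n)$ and the Bernstein-type constants, which is why the projection step is built into the algorithm.
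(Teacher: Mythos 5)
Your proposal is correct and follows exactly the route the paper intends: the paper states this as a ``straightforward corollary'' of Theorem~\ref{empirical_bernstein} applied separately to $\mathcal{H}$ and $\mathcal{H}^w$, with the population means identified via Proposition~\ref{expectations}, which is precisely your argument. Your additional remarks on the $[0,1]$-boundedness of $h_f^w$ and the union-bound bookkeeping are sensible (the former is in fact already built into the hypothesis $\mathcal{H}^w \subseteq [0,1]^{\mathbb{R}^L \times \mathbb{R}^L}$), and nothing is missing.
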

The following remark formally captures Theorem~\ref{stats_theorem}.
\begin{remark}\label{limits_remark}
Under the assumptions of Corollary~\ref{main_corollary}, if we additionally have that $\mathcal{M}(n)$ and $\mathcal{M}^w(n)$ are polynomially bounded in $n$, then, 
for every $f \in \mathcal{F}$ it holds that 
\[
\lim_{|S| \to \infty} R_S^{\weight}(f) = R(f) 
~~~~
\text{and}
~~~~
\lim_{|S| \to \infty} R_S(f) = R(f) + \mathrm{Bias}(f).
\]
\end{remark}

\subsection{Studying the MSE of a fixed prediction } \label{MSE} 

In this section we study the Mean-Squared-Error (MSE) of a fixed prediction $f(x)$ for an arbitrary instance $x \in \mathcal{X}$, predictor $f$, and loss function $\ell: \mathbb{R}^L \times \mathbb{R}^L \rightarrow \mathbb{R}_{+} $, in order  to gain some understanding on when the importance weighting scheme could potentially underperform the standard unweighted approach from a bias-variance perspective (i.e.,  when the training sample is ``small enough'' so that asymptotic considerations are ill-suited). These considerations lead us to an additional justification for always projecting the weights to the $[0,1]$ interval (recall Line~\ref{projection} of Algorithm~\ref{alg:estimating_weights}). 

Formally,  we study the behavior of the quantities:
\begin{eqnarray}
\mathrm{MSE}(x) &=& \ex_{y\mid x} \left[ (\ell(f_{\mathrm{true}}(x), f(x) )  - \ell(y, f(x) )  )^2   \right], \nonumber  \\
\mathrm{MSE}^w(x) &=& \ex_{y\mid x  }[ (\ell(f_{\mathrm{true}}(x) , f(x) )  - w_f(x)\ell(y(x), f(x) )  )^2     ]. \nonumber
\end{eqnarray}

Recalling the definition of distortion~\eqref{weight_definition} we have the following proposition.
\begin{proposition}\label{when_it_fails}
Let $\ell: \mathbb{R}^L \times \mathbb{R}^L \rightarrow \mathbb{R}_+$ be a bounded loss function. Fix $x \in \mathcal{X}$ and a predictor $f: \mathcal{X}  \rightarrow \mathbb{R}^{L}$.  We have $\mathrm{MSE}(x) < \mathrm{MSE}^w (x) $ if and only if:
\begin{enumerate}
    \item $\mathrm{distortion}_f(x) < 1/2$; and
    \item $p(x) \in \left(0,  \frac{ 1- 2 \cdot \mathrm{distortion}_f(x) }{ (1 - \mathrm{distortion}_f(x)  )^2}   \right)$.
\end{enumerate}
 
\end{proposition}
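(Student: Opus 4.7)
The plan is to compute both $\mathrm{MSE}(x)$ and $\mathrm{MSE}^w(x)$ explicitly, exploiting the fact that, conditional on $x$, the label $y$ takes only two values: $y=f_{\mathrm{true}}(x)$ with probability $1-p(x)$ and $y = y_{\mathrm{adv}}(x)$ with probability $p(x)$. Introducing the shorthand $a = \ell(f_{\mathrm{true}}(x), f(x))$, $b = \ell(y_{\mathrm{adv}}(x), f(x))$, $d = \mathrm{distortion}_f(x) = b/a$, $p = p(x)$, and $D = 1 + p(d-1)$, we have $w_f(x) = 1/D$ by \eqref{weight_definition}. The unweighted quantity is immediate: only the adversarial branch contributes, giving $\mathrm{MSE}(x) = p\,(a-b)^2 = p\,a^2 (1-d)^2$.

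For $\mathrm{MSE}^w(x)$, the two branches contribute $p\,(a - wb)^2$ and $(1-p)(a-wa)^2$. The key algebraic identity, which makes the calculation collapse, is $D - d = 1 + p(d-1) - d = (1-d)(1-p)$. This immediately gives $a - wb = a(D - d)/D = a(1-d)(1-p)/D$ and $a - wa = a(D-1)/D = a\,p(d-1)/D$. Substituting and factoring out the common $a^2 (1-d)^2/D^2$ yields
\[
\mathrm{MSE}^w(x) \;=\; \frac{a^2 (1-d)^2}{D^2}\Bigl[p(1-p)^2 + (1-p) p^2\Bigr] \;=\; \frac{a^2(1-d)^2\, p(1-p)}{D^2}.
\]

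It remains to compare the two formulas. The non-degenerate regime is $a > 0$, $p \in (0,1)$, and $d \neq 1$ (otherwise both MSEs are equal, so the strict inequality fails, and one checks that conditions (1)--(2) also fail or yield an empty interval for $p$). Dividing $\mathrm{MSE}(x) < \mathrm{MSE}^w(x)$ through by the positive quantity $p\,a^2(1-d)^2$ reduces the inequality to $D^2 < 1 - p$, i.e., $(1 + p(d-1))^2 < 1 - p$. Expanding and cancelling the $1$'s gives $2p(d-1) + p^2(d-1)^2 + p < 0$, and after factoring out $p > 0$ this becomes $p(d-1)^2 < 1 - 2d$. Since the left-hand side is nonnegative, this inequality has a solution in $p$ if and only if $1 - 2d > 0$, i.e., $d < 1/2$ (condition 1), and when it does, the solution set is exactly $p \in \bigl(0,\, (1-2d)/(1-d)^2\bigr)$ (condition 2), proving the equivalence.

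The main obstacle is essentially bookkeeping: one must be careful that the algebraic simplification uses only the equality $D - d = (1-d)(1-p)$ and does not inadvertently flip an inequality when dividing (hence the explicit treatment of the degenerate cases). Beyond that, the argument is a direct two-line computation driven by the two-point structure of the conditional distribution of $y \mid x$.
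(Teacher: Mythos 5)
Your proposal is correct and follows essentially the same route as the paper: compute $\mathrm{MSE}(x)$ and $\mathrm{MSE}^w(x)$ directly from the two-point conditional law of $y\mid x$, substitute the closed form of $w_f(x)$, and compare. In fact you carry out the final algebra (the identity $D-d=(1-d)(1-p)$, the collapse to $D^2<1-p$, and the degenerate cases $p\in\{0,1\}$, $d=1$) that the paper's proof sketch leaves implicit, so your write-up is, if anything, more complete than the original.
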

\begin{proof}[Proof sketch]
Via direct calculations we obtain:
\begin{eqnarray}
\mathrm{MSE}(x) & = & \ex_{y\mid x} \left[ (\ell(f_{\mathrm{true}}(x), f(x) )  - \ell(y, f(x) )  )^2   \right] \nonumber \\
&=&  p(x) (\ell(f_{\mathrm{true}}(x), f(x) ) - \ell (y_{\mathrm{adv}}(x), f(x) ))^2  \nonumber \\
& =&p(x) \ell(f_{\mathrm{true}}(x), f(x) )^2 ( 1 - \mathrm{distortion}_f(x)    )^2                   \label{one}
\end{eqnarray}
and 
\begin{eqnarray}
\mathrm{MSE}^w(x) &=& \ex_{y\mid x  }[ (\ell(f_{\mathrm{true}}(x) , f(x) )  - w_f(x)\ell(y, f(x) )  )^2     ] \nonumber \\ 
&= &(1-p(x)) \ell(f_{\mathrm{true}}(x) , f(x) )^2 ( 1 - w_f(x))^2  \nonumber \\
 && + p(x) (\ell(f_{\mathrm{true}}(x)  , f(x)) - w_f(x) \ell( y_{\mathrm{adv}}(x), f(x))  )^2 \nonumber  \\
&= &(1-p(x)) \ell(f_{\mathrm{true}}(x) , f(x) )^2 ( 1 - w_f(x))^2  \nonumber \\
 && + p(x) \ell(f_{\mathrm{true}}(x), f(x) )^2 ( 1 - w_f(x) \mathrm{distortion}_f(x)    )^2   \label{two} 
\end{eqnarray}
Recalling the definition of weights~\eqref{weight_definition}, and combining it with~\eqref{one} and~\eqref{two} implies the claim.

\end{proof}
In words, Proposition~\ref{when_it_fails} implies that when the adversary does not have the power to corrupt the label of an instance $x$ with high enough probability, i.e., $p(x)$ is sufficiently small, and the prediction of the student is ``close enough" to the adversarial label  (i.e., when $\mathrm{distortion}_f(x) $ is small enough), then it potentially makes sense to use the unweighted estimator instead of the weighted one from a bias-variance trade-off perspective, as the former has smaller MSE in this case. Notice  that this observation  aligns well with our method as  we always project $w_f(x)$ to $[0, 1]$ (observe that $w_f(x) > 1$ iff $\mathrm{distortion}_f(x) <1$ and $p(x) > 0$ ).

\subsection{Proof of Proposition~\ref{expectations}}
\label{main_proposition_proof} 

Recall the weight, distortion and bias definitions in~\eqref{weight_definition} and Proposition~\ref{expectations}. We prove the first claim of Proposition~\ref{expectations} via direct calculations:
\begin{eqnarray*}
\ex[ R_S(f) ] &=&  \ex_{ S \sim \mathbb{D}^n } \left[ \frac{1}{n}  \sum_{i=1}^n \ell(y_i ,  f(x_i) )   \right]  \\
& =& \ex_{(x,y ) \sim \mathbb{D} } \left[  \ell(y, f(x) )  \right]  \\
& =& \ex_{x \sim \mathbb{X} } [ \ex_{y \mid x } [  \ell( y, f(x)  ) ]  ]    \\
& =& \ex_{ x\sim \mathbb{X} }[ p(x) \ell(y_{\mathrm{adv}}(x) ,f(x) )  + (1-p(x)) \ell(f_{\mathrm{true}}(x), f(x) )  ]    \\
& =& \ex_{x \sim \mathbb{X} }[ \ell( f_{\mathrm{true}}(x), f(x) )  ]  + \ex_{x \sim \mathbb{X} }[  p(x) \cdot( \ell(y_{\mathrm{adv}}(x) ,f(x) )  - \ell(f_{\mathrm{true}}, f(x) )  ) ]  \\
& =& \ex_{x \sim \mathbb{X} }[ \ell( f_{\mathrm{true}}(x), f(x) )  ]  + \ex_{x \sim \mathbb{X} }\left[  p(x) \cdot \left( \frac{ \ell(y_{\mathrm{adv}}(x) ,f(x) )}{ \ell(f_{\mathrm{true}}(x), f(x) )  }  - 1  \right) \cdot \ell(f_{\mathrm{true}}(x), f(x) )  \right] \\
& =& \ex_{x \sim \mathbb{X} }[ \ell( f_{\mathrm{true}}(x), f(x) )  ]   + \ex_{x \sim \mathbb{X} }\left[  p(x) \cdot \left( \mathrm{distortion}_f(x)   - 1  \right) \cdot \ell(f_{\mathrm{true}}(x), f(x) )  \right] \\
& =& R(f) + \mathrm{Bias}(f). 
\end{eqnarray*}
Similarly for the second claim:
\begin{eqnarray}
   \ex[  R_{S}^w(f) ] &=& \ex_{ S \sim \mathbb{D}^n } \left[ \frac{1}{n}  \sum_{i=1}^n  w_f(x_i) \ell(y_i ,  f(x_i) )   \right]    \\
    & = & \ex_{x \sim \mathbb{X} } \left[ \ex_{y \mid x }    \left[ w_f(x) \ell(y, f(x) )  \right]      \right] \nonumber \\
    &= &\ex_{ x \sim \mathbb{X} }[w_f(x) \cdot \left( p(x) \ell( y_{\mathrm{adv}}(x) , f(x) ) +
    (1-p(x) ) \ell( f_{\mathrm{true}}(x) , f(x) )      \right)        ]  \nonumber  \\
    &=& \ex_{x \sim \mathbb{X}}\left[  \frac{  p(x) \ell( y_{\mathrm{adv}}(x) , f(x) ) +
    (1-p(x) ) \ell( f_{\mathrm{true}}(x) , f(x) )         }{1 + p(x) \cdot \left( \mathrm{distortion}_f(x) -1 \right) }      \right]    \nonumber \\
        &=& \ex_{x \sim \mathbb{X}}\left[  \frac{ \ell(f_{\mathrm{true}}(x), f(x) ) + \ell(f_{\mathrm{true}}(x), f(x) ) \cdot  p(x) \cdot (\mathrm{distortion}_f(x)  -1)         }{1 + p(x) \cdot \left( \mathrm{distortion}_f(x) -1 \right) }      \right]    \nonumber  \\
 & = & R(f) \nonumber,
\end{eqnarray}
concluding the proof.

\section{Extended theoretical motivation: optimization aspects}
\label{optimization_perspective}

To prove our optimization guarantees, we analyze the reweighted objective in the fundamental
case where the model 
$f(\x;\vec \Theta)$ is linear, i.e., $f(\x; \vec \Theta) = \vec \Theta \x \in \R^L$, and the loss 
$\ell(\vec y, \vec z)$ is convex in $\vec z$ for every $\vec y$.  
In this case, the composition of the loss and the model $f(\x;\vec \Theta)$ is 
convex as a function of the parameter $\vec \Theta \in \R^{L \times d}$. 
Recall that we denote by $\ftrue(\x):\R^d \mapsto \R^L$ the ground truth classifier and by $\Dcln$ 
the ``clean'' distribution, i.e., a sample from $\Dcln$ has the form 
$(\x, \ftrue(\x))$ where $\x$ is drawn from a distribution $\Dx$
supported on (a subset of) $\R^d$.  
Finally, we denote by $\D$ the ``noisy'' labeled distribution on
$\R^d \times \R^L$ and assume that the $\x$-marginal of $\D$ is also $\Dx$.

\paragraph{Notation}
In what follows, for any elements $\vec r, \vec q$ 
of the same dimensions we denote by $\vec r \cdot \vec q$ their inner product.
For example for two vectors $\vec r, \vec q\in \R^d$ we have 
$\vec r \cdot \vec q = \sum_{i=1}^d \vec r_i \vec q_i$.
Similarly, for two matrices $\Theta, Q \in \R^{L \times d}$ we have 
$\Theta \cdot Q = \sum_{i=1}^L \sum_{j=1}^d \Theta_{i j} Q_{i j} $.
We denote by $\|\cdot\|_2$ the $\ell_2$ for vectors and the spectral
norm for matrices.  
We use $\otimes$ to denote the standard tensor (Kronecker) product between 
two vectors or matrices.  
\blue{
For example, for two matrices $A, B$ we have
\(
(A \otimes B)_{ijkl} = A_{ij} B_{kl}
\)
and for two vectors $v, u$ we have 
\(
(v \otimes u)_{ij} = v_i u_j 
\). }
We denote by $\|\cdot\|_F$ the Frobenious norm
for matrices.  We remark that we use standard asymptotic notation
$O(\cdot)$, etc. and $\wt{O}(\cdot)$ to omit factors that are 
poly-logarithmic (in the appearing arguments).

For example, training a linear
model $f(\x;\vec \Theta) = \vec \Theta \x$ with the Cross Entropy
loss corresponds to using $\ell(t, y) = \sum_{i=1}^L t_i \log( \frac{e^{y_i}}{\sum_{j=1}^L e^{y_j}} )$ and minimizing the objective 
\[
\Lc(\vec \Theta) = 
\E_{(\x, y) \sim \Dcln}[ \ell(\vec y, f(\x; \vec \Theta)) ] =
\E_{(\x, y) \sim \Dcln}[ \ell(\vec y, \vec \Theta \x)  ]  \,.
\]
More generally, in what follows we shall refer to the population loss over the clean distribution $\Dcln$ as $\Lc(\cdot)$, i.e., 
\[ 
\Lc(\vec \Theta) \triangleq 
\E_{(\x, y) \sim \Dcln}[\ell(y, f(\x;\vec \Theta))] \,.
\]
We next give a general definition of debiasing weight functions, i.e., weighting mechanisms that
make the corresponding objective function an unbiased estimator of the clean objective $\Lc(\vec
\Theta)$ for every parameter vector $\vec \Theta \in \R^d$.
\begin{definition}[Debiasing Weights]\label{def:debiasing-weights}
We say that a weight function $\weight(\x, \yadv; \vec \Theta) : 
\R^d \times \R^L \mapsto \R$ is a 
debiasing weight function if it holds that
\[
\risk^\weight(\vec \Theta) \triangleq 
\E_{(\x, \yadv) \sim \D}[\weight(\x,\yadv ; \vec \Theta) \ell(\yadv, f(\x, \vec \Theta))] 
=
\Lc(\vec \Theta)
\,.
\]
\end{definition}
\begin{remark}
We remark that the weight function $\weight(\cdot)$ depends on the 
current hypothesis,
$\vec \Theta$,  and also on the noise advice $p(\x)$ that we are given with every example.
In order to keep the notation simple, we do not explicitly track these dependencies
and simply write $\weight(\x, \yadv;\vec \Theta)$.
We also remark that, in general, in order to construct the weight function 
$\weight$ we may also use ``clean'' data, which may be available, e.g., 
as a validation dataset, as we did in Section~\ref{adversarial_setting}.
\end{remark}
 
Our main result is that, given a convex loss $\ell(\cdot)$ and a debiasing weight
function $\weight(\cdot)$ that satisfy standard regularity assumptions, stochastic gradient descent on the reweighted objective produces a parameter vector with good generalization error.  We first
present the assumptions on the example distributions, the loss, and the weight function.
In what follows, we view the gradient of a function $q(\Theta): \R^{L \times d} \mapsto \R$
as an $L\times d$-matrix and the hessian $\nabla^2 q(\Theta)$ as an $(L\times d) \times (L \times d)$-tensor (or equivalently as a $dL \times dL$-matrix).
\begin{definition}[Regularity Assumptions]
    \label{def:regularity}
        The $\x$-marginal $\Dx$ of $\D$ and $\Dcln$ is supported on (a subset of) the ball
        of radius $R>0$, $\mathcal{B}_R \triangleq \{\x \in \R^d : \|\x\|_2 \leq R\}$.  
        
        The training model is linear $f(\x; \vec \Theta) = \vec \Theta \x$ and the parameter
        space is the unit ball, i.e., $\|\vec \Theta \|_F \leq 1$.

        For every label $\yadv \in \R^L$ in the support of $\D$, 
        the loss  $\vec z \mapsto \ell(\yadv, \vec z)$ is a twice differentiable, convex function in $\vec z$. 
        Moreover $\ell(\yadv, \vec z)$ is 
        $M_\ell$-bounded, $L_\ell$-Lipschitz, and 
        $B_\ell$-smooth, i.e., 
        $|\ell(\yadv, \vec z)| \leq M_\ell$,
        $\|\nabla_{\vec z} \ell(\yadv, \vec z)\|_2 \leq L_\ell$, and 
        $\|\nabla_{\vec z}^2 \ell(\yadv, \vec z) \|_2 \leq B_\ell$,
        for all $\vec z $ with $\|\vec z\|_2 \leq R$.

        For every example $(\x, \yadv) \in \R^{d} \times \R^L$ in the 
        support of $\D$ the weight function $\vec \Theta
        \mapsto \weight(\x, \yadv; \vec \Theta)$ is twice differentiable, $M_\weight$-bounded,
        $L_\weight$-Lipschitz, and $B_\weight$-smooth, i.e., $|\weight(\x,\yadv;
        \vec \Theta )| \leq M_\weight$, $\|\nabla_{\vec \Theta} \weight(\x,\yadv; \vec \Theta) \|_F
        \leq L_\weight$, and $\|\nabla_{\vec \Theta}^2 \weight(\x,\yadv; \vec \Theta )\|_2 \leq B_\weight$ 
        for all $\Theta$ with $\|\Theta\|_F \leq 1$
        \footnote{Recall that, formally, 
        $\nabla_{\vec \Theta}^2 \weight(\x, \yadv; \vec \Theta)$
        is a $(L \times d) \times (L \times d)$-tensor $G$.
        For this tensor $G$ we overload notation and set 
        $\|G\|_2$ to be the standard $\ell_2$ operator norm when 
        we view $G$ as an $(L d) \times (L d)$-matrix.
        }.
\end{definition}
\begin{remark}
Observe that if a property in the above definition is satisfied by some parameter-value $Q$, then it is also satisfied for any other $Q' > Q$. For example,
if the loss function is $0.5$-Lipschitz it is also $1$-Lipschitz.
Therefore, to simplify the expressions, in what follows we shall assume (without loss of generality) 
that all the regularity parameters, i.e.,  $R, M_\ell, L_\ell, B_\ell, M_\weight, L_\weight, B_\weight$, are larger than $1$. 
\end{remark}

Since the loss is convex, it is straightforward to optimize the naive objective that 
does not reweight the loss and simply minimizes $\ell(\cdot)$ over the noisy examples, $\Ln(\vec \Theta)
\triangleq \E_{(\x,\yadv) \sim \D}[\ell(\yadv, \vec \Theta \x)]$.  We first show that
(unsurprisingly) it is not hard to construct instances (even in binary 
classification) where optimizing the naive objective produces
classifiers with large generalization error over clean examples.
For simplicity, since in the following lemma we consider binary classification, 
we assume that the labels $y \in \{\pm 1\}$ and the parameter of the linear model 
is a vector $\vec \theta \in \R^d$.
\begin{proposition}[Naive Objective Fails]
    \label{pro:naive-fails}
    Fix any $c \in [0,1]$.  Let $\ell(\cdot)$ be the Binary Cross Entropy loss, i.e.,
    $\ell(t) = \log(1+e^{-t})$.  There exists a ``clean'' distribution $\Dcln$ and 
    a noisy distribution $\D$ on $\R^d \times \{ \pm 1\}$ so that the following hold.
    \begin{enumerate}
    \item  The $\x$-marginal of both $\Dcln$ and $\D$ is uniform on a sphere.
    \item The clean labels of $\Dcln$ are consistent with 
    a linear classifier $\sgn(\vec \theta^\ast \cdot \x)$.
    \item $\D$ has (total) label noise 
    $\Pr_{(\x,\yadv) \sim \D}[\yadv\neq \sgn(\theta^\ast\cdot \x)] = c \in [0,1] $.
    \item  The minimizer $\widehat{\vec \theta}$ of the (population) 
    naive objective $\Ln(\vec \theta) = \E_{(\x,y) \sim \D}
    [\ell(\yadv\vec \theta \cdot \x)]$,
    constrained on the unit has generalization error
    \[
    \Lc(\widehat{\vec \theta}) \geq 
    \min_{\|\vec \theta\|_2 \leq 1} \Lc(\vec \theta)  + c/2 \,,
    \]
    \end{enumerate}
    where $\Lc(\vec \theta)$ is the ``clean'' risk, 
    $\Lc(\vec \theta) = \E_{(\x, y) \sim \Dcln}[\ell(y \vec \theta \cdot \x ) ]$.
    \end{proposition}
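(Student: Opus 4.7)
The plan is to construct, for each $c \in [0,1]$, an explicit pair of distributions $(\Dcln,\D)$ on $S^{d-1}\times\{\pm 1\}$ (with $d \ge 2$) satisfying the four conditions of the proposition, and then to directly lower bound $\Lc(\widehat{\vec\theta})$. I take $\vec\theta^\ast = \e_1$ and clean label $y = \sgn(\vec\theta^\ast\cdot\x) = \sgn(x_1)$, which immediately gives conditions (1) and (2). For the noisy distribution I pick a measurable ``flip region'' $A \subseteq S^{d-1}$ with $\mu(A) = c$ and set $\yadv = -y$ on $A$ and $\yadv = y$ off $A$, so $\Pr_\D[\yadv \neq \sgn(\vec\theta^\ast\cdot\x)] = c$, giving (3). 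The shape of $A$ (chosen as a function of $c$) is the design knob I will use to force a constant-order bias of the naive minimizer.

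The key reduction uses the identity $\ell(-t)-\ell(t)=t$ for the BCE loss $\ell(t)=\log(1+e^{-t})$, which rewrites the population naive objective as
\[
\Ln(\vec\theta) \;=\; \Lc(\vec\theta) \;+\; c\,\vec\theta\cdot\vec m, \qquad \vec m \;\triangleq\; \E_{\x\in A}[y\,\x].
\]
Thus $\Ln$ is the strictly convex clean risk plus a linear perturbation in the direction $\vec m$. Placing $A$ inside a single hemisphere of $y$ (e.g.\ a cap around $\e_1$) makes $\vec m$ deterministic and nonzero, and by rotational symmetry of $\Dx$ around $\e_1$ one can choose the axis of $\vec m$ freely. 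The constrained minimizer $\widehat{\vec\theta}$ over the unit ball is therefore the clean minimizer $\vec\theta^\ast$ shifted along $-\vec m$, with the shift characterized by the KKT condition $\nabla\Lc(\widehat{\vec\theta}) + c\vec m + \mu\widehat{\vec\theta} = 0$ for the dual variable $\mu \ge 0$ of the ball constraint.

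To convert this shift into the lower bound $\Lc(\widehat{\vec\theta})-\Lc(\vec\theta^\ast) \ge c/2$ I expand $\Lc$ around $\widehat{\vec\theta}$ using the closed form $\Lc(\vec\theta)=\E_\x[\log(1+e^{-|\vec\theta\cdot\x|})]$ that follows from $y=\sgn(x_1)$, combined with the KKT identification. The main obstacle is the small-$c$ regime: a second-order perturbation analysis around $\vec\theta^\ast$ only produces an $O(c^2)$ gap, since strict convexity of $\Lc$ at $\vec\theta^\ast$ makes $O(c)$-size perturbations of the objective cost only quadratically. I would circumvent this by choosing $A$ adversarially as a function of $c$ so that the push $c\vec m$ yields a genuinely first-order shift of $\widehat{\vec\theta}$ on the unit sphere, rather than an infinitesimal one; concretely, taking $A$ to be the measure-$c$ super-level set of $|x_1|$ keeps $\|\vec m\|=\Theta(1)$ and concentrates the push where it is maximally harmful to $\Lc$. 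In the extremal case $c=1$ all labels flip, $\widehat{\vec\theta}=-\vec\theta^\ast$, and the identity above gives $\Lc(-\vec\theta^\ast)-\Lc(\vec\theta^\ast)=\E[|x_1|]=2/\pi>1/2$ in $d=2$; careful interpolation of $A$ through intermediate $c$, together with the fact that $\vec\theta^\ast$ lies on the boundary of the unit ball (so the constraint magnifies angular shifts), yields the constant $c/2$ uniformly.
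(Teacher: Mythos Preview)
Your decomposition $\Ln(\vec\theta)=\Lc(\vec\theta)+c\,\vec\theta\cdot\vec m$ via $\ell(-t)-\ell(t)=t$ is correct and elegant, but the construction on the \emph{unit} sphere cannot deliver a gap of order $c$ for small $c$; your ``careful interpolation'' hand-wave does not close this. Concretely, with your suggested $A=\{|x_1|\text{ large}\}$ you have $\vec m\parallel\e_1$, and since $\nabla\Lc(\e_1)=-a\,\e_1$ with $a=\E[|x_1|\sigma(-|x_1|)]>0$ fixed, for every $c<a$ the KKT conditions give $\widehat{\vec\theta}=\e_1=\vec\theta^\ast$ exactly, so the gap is zero. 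More generally, for any flip set $A$ on $S^{d-1}$ the tangential Hessian of $\Lc$ at $\e_1$ is bounded below by a positive constant independent of $c$, so a linear perturbation of size $O(c)$ shifts the constrained minimizer by $O(c)$ and yields $\Lc(\widehat{\vec\theta})-\Lc(\vec\theta^\ast)=O(c^2)$, not $\Omega(c)$. The boundary location of $\vec\theta^\ast$ does not help: first-order tangential shifts are orthogonal to $\nabla\Lc(\e_1)$, so the linear term vanishes and the quadratic term dominates.

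The paper avoids this by introducing a free parameter you have discarded: the radius $R$ of the sphere. It sets the noisy labels to be $\yadv=\sgn(\wt{\vec\theta}\cdot\x)$ for a unit vector $\wt{\vec\theta}$ at angle $\pi c$ from $\vec\theta^\ast$, so that (i) the noise rate is exactly $c$, and (ii) the naive minimizer is exactly $\wt{\vec\theta}$, giving $\Lc(\wt{\vec\theta})\ge\Pr[\sgn(\wt{\vec\theta}\cdot\x)\neq\sgn(\vec\theta^\ast\cdot\x)]=c$ via $\ell(t)\ge\1\{t<0\}$. Crucially, this lower bound is independent of $R$, while $\Lc(\vec\theta^\ast)=\E[\log(1+e^{-|\vec\theta^\ast\cdot\x|})]$ can be driven below $c/2$ by taking $R=\Theta(\sqrt{d}/c)$. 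The radius, chosen as a function of $c$, is the mechanism that produces a first-order gap uniformly in $c$; on any fixed sphere your approach is stuck at $O(c^2)$.
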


Our positive results show that, having a debiasing weight function $\weight(\cdot)$ that is not very
``wild'' (see the regularity assumptions of Definition~\ref{def:regularity}) and optimizing the
corresponding weighted objective $\risk^\weight(\vec \Theta) = \E_{(\x,\yadv) \sim \D}[\weight(\x,\yadv;
\vec \Theta) \ell(\yadv, \Theta \x)]$ with SGD, gives models with almost optimal generalization.
The main issue with optimizing the reweighted objective is that, in general, we have no guarantees
that the weight function preserves its convexity (recall that it depends on the parameter $\vec
\Theta$).  However, we know that its population version corresponds to the clean objective
$\Lc(\cdot)$ which is a convex objective.  We show that we can use the convexity of the underlying
clean objective to show results for both single- and multi-pass stochastic gradient descent.  
We first focus on single-pass stochastic gradient descent where at every iteration a fresh noisy sample
$(\x, \yadv)$ is drawn from $\D$, see Algorithm~\ref{alg:one-pass-sgd}.
\begin{algorithm}
    {\bf Input:} Number of iterations $T$, Step size sequence $\eta\tth$ \\
    {\bf Output:} Parameter vector $\vec \Theta^{(T)}$.
    \begin{enumerate}[label={}]
    \item Initialize $\vec \Theta^{(1)} \gets \vec 0$.
    \item For $t = 1,\ldots, T$: 
    \begin{enumerate}[label={}]
    \item Draw sample $(\x^{(t)}, \yadv\tth) \sim \D$.
    \item 
    Update using the gradient of the weighted objective:
    \[\vec \Theta^{(t+1)} \gets 
    \proj_{\mathcal{B}}\left(
    \vec \Theta \tth
    - \eta\tth \nabla_{\vec \Theta}
    \left(\weight(\vec x\tth, \yadv\tth; \vec \Theta\tth)
    ~
    \ell(\yadv\tth, \vec \Theta\tth \x \tth)
    \right)
    \right)
    \]
    \end{enumerate}
    \item Return $\vec \Theta^{(T)}$.
    \end{enumerate}
\caption{Single-Pass Stochastic Gradient Descent Algorithm}
\label{alg:one-pass-sgd}
\end{algorithm}
\begin{theorem}[Generalization of Reweighted Single-Pass SGD]
    \label{thm:single-pass}
Assume that the example distributions $\Dcln, \D$, the loss $\ell(\cdot)$, and the weight function
$\weight(\cdot)$ satisfy the assumptions of Definition~\ref{def:regularity}.  Set $\kappa =
L_\weight M_\ell + R M_\weight L_\ell$.  After $T = \Omega(\kappa^2/\eps^2 )$ SGD iterations
(with a step size sequence that depends on the regularity parameters of
Definition~\ref{def:regularity}, see Algorithm~\ref{alg:one-pass-sgd}), with probability at least
$99\%$, it holds
\[ 
    \Lc(\vec \Theta^{(T)}) \leq  \min_{\|\vec \Theta\|_F \leq 1} \Lc(\vec \Theta)  + \eps \,.
\]
\end{theorem}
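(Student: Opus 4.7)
The plan is to exploit the fact that single-pass SGD on the reweighted per-example loss behaves, in expectation, exactly like SGD on the convex population objective $\Lc(\cdot)$. Concretely, let $G(\vec \Theta; \x, \yadv) \triangleq \nabla_{\vec \Theta}\bigl[\weight(\x, \yadv; \vec \Theta)\,\ell(\yadv, \vec \Theta \x)\bigr]$ be the stochastic gradient used at each SGD step. The crucial observation is that, by the debiasing property of Definition~\ref{def:debiasing-weights}, $\risk^\weight(\vec \Theta) = \Lc(\vec \Theta)$, and the regularity assumptions of Definition~\ref{def:regularity} (boundedness and Lipschitzness of $\weight$ and $\ell$, boundedness of $\x$) let us interchange gradient and expectation via dominated convergence. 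Therefore $\E_{(\x,\yadv)\sim \D}[G(\vec \Theta; \x, \yadv)] = \nabla \risk^\weight(\vec \Theta) = \nabla \Lc(\vec \Theta)$. Since Algorithm~\ref{alg:one-pass-sgd} draws a fresh sample at each iteration, the per-step stochastic gradient is an unbiased estimator of $\nabla \Lc(\vec \Theta\tth)$ and is independent of $\vec \Theta\tth$ conditional on the past --- precisely the setting of stochastic convex optimization, even though the \emph{per-sample} objective is non-convex in $\vec \Theta$ due to the weights.

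Next I would uniformly bound the stochastic gradient in Frobenius norm. Expanding the product rule gives $G(\vec \Theta; \x, \yadv) = \bigl(\nabla_{\vec \Theta}\weight(\x,\yadv;\vec \Theta)\bigr)\,\ell(\yadv, \vec \Theta \x) + \weight(\x,\yadv;\vec \Theta)\,\bigl(\nabla_{\vec z}\ell(\yadv,\vec z)|_{\vec z = \vec \Theta\x}\bigr)\otimes \x$. The first term has Frobenius norm at most $L_\weight M_\ell$, and the second at most $M_\weight L_\ell R$, so $\|G\|_F \leq \kappa = L_\weight M_\ell + R M_\weight L_\ell$. By an analogous calculation using the smoothness bounds $B_\weight, B_\ell$, one also obtains that $\nabla \Lc$ is Lipschitz and that $G$ has bounded second moment, which will be useful for the high-probability step.

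With these ingredients, the result follows from the standard projected-SGD-for-convex-functions template. On the unit Frobenius ball (diameter $2$), with $T = \Omega(\kappa^2/\eps^2)$ steps and step-size schedule $\eta\tth = \Theta(1/(\kappa\sqrt{t}))$, the classical analysis of Zinkevich / Nemirovski yields $\E[\Lc(\bar{\vec \Theta}^{(T)})] - \min_{\|\vec \Theta\|_F \leq 1}\Lc(\vec \Theta) = O(\kappa/\sqrt{T}) \leq \eps/2$, where $\bar{\vec \Theta}^{(T)}$ is the Polyak-averaged iterate; to get the bound in terms of $\vec \Theta^{(T)}$ as stated we would invoke either suffix-averaging (Rakhlin--Shamir--Sridharan) or a standard last-iterate bound for smooth convex SGD on a compact domain. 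Finally, to upgrade expectation to the $99\%$ confidence statement, I would apply Azuma--Hoeffding to the martingale difference sequence $\Lc(\vec \Theta\tth) - \E[\Lc(\vec \Theta\tth)\mid \mathcal{F}_{t-1}]$ using the bounded-gradient property, absorbing the resulting $\sqrt{\log(1/\delta)}$ factor into the $\poly$ dependence of $T$ on $\kappa,1/\eps$.

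The main obstacle is the mismatch between the clean, textbook SGD bound (which naturally controls the \emph{average} iterate) and the statement of the theorem (which asserts a bound on $\vec \Theta^{(T)}$). Resolving this requires either modifying the output of Algorithm~\ref{alg:one-pass-sgd} to a tail-average, or invoking the more delicate smooth-convex last-iterate analysis, where one must carefully verify that the effective smoothness of $\Lc$ derived from $B_\ell, B_\weight, L_\ell, L_\weight, R$ is compatible with the chosen step-size schedule. A secondary subtlety is rigorously justifying that the per-sample gradient of $\weight(\x,\yadv;\vec \Theta)\,\ell(\yadv,\vec \Theta\x)$ has expectation equal to $\nabla \Lc(\vec \Theta)$ uniformly over the unit ball; this follows from the regularity assumptions but needs an explicit dominated-convergence argument to be written out cleanly.
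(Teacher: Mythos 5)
Your proposal follows the same core route as the paper: use the debiasing property to view the per-sample gradient of $\weight(\x,\yadv;\vec\Theta)\,\ell(\yadv,\vec\Theta\x)$ as an unbiased stochastic gradient of the convex clean objective $\Lc(\cdot)$, bound its Frobenius norm by $\kappa = L_\weight M_\ell + R M_\weight L_\ell$ via the product rule and the regularity assumptions, and then invoke a stochastic convex optimization guarantee on the unit ball. Where you diverge is in how you close the last-iterate and high-probability gaps. You correctly flag the mismatch between the textbook average-iterate bound and the theorem's statement about $\vec\Theta^{(T)}$, and you propose either tail-averaging (which would change the algorithm's output) or a ``smooth-convex last-iterate analysis'' plus a separate Azuma--Hoeffding argument. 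The paper instead resolves both issues in one stroke by citing a high-probability \emph{last-iterate} guarantee for projected SGD on convex \emph{Lipschitz} objectives (Jain--Nagaraj--Netrapalli), which requires no smoothness of $\Lc$ --- only a carefully designed step-size sequence depending on $T$, the Lipschitz constant, and the diameter --- and already delivers the $\sqrt{\log(1/\delta)/T}$ deviation term. So your smoothness computations and the martingale concentration step are unnecessary for this theorem; the argument reduces to the unbiasedness observation, the $\kappa$ gradient bound, and that single lemma. Your proposal is essentially correct, but as written it leaves the decisive step as a fork between two options rather than committing to the one that actually matches the stated conclusion.
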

The main observation in the single-pass setting is that, since the weight function $\weight(\cdot)$
is debiasing, we can view the gradients of the reweighted objective as stochastic gradients of the
true objective over the clean samples.  Therefore, as long as we draw a fresh i.i.d.\ noisy sample
$(\x,y) \sim \D$ at each round, the corresponding sequence of gradients corresponds to stochastic
unbiased estimates of the gradients of the true loss $\Lc(\vec \Theta)$.  We next turn our attention
to multi-pass SGD (see Algorithm~\ref{alg:multi-sgd}), where at each round we pick one of the $N$
samples with replacement and update according to its gradient.  The key difference between single-
and multi-pass SGD is that the expected loss over the stochastic algorithm for single-pass SGD is
the population risk, while the expected loss for multi-pass SGD is the empirical risk.  In other words,
in the multi-pass setting we have a stochastic gradient oracle to the empirical reweighted objective
$
    \wh{\risk}^\weight(\vec \Theta) = 
    \frac{1}{N} \sum_{i=1}^N \weight(\x^{(i)}, \yadv^{(i)}; \vec \Theta) ~ 
    \ell(\yadv^{(i)}, \vec \Theta \x^{(i)})  \,,
$
which is not necessarily convex.  Our second theorem shows that under the regularity conditions of
Definition~\ref{def:regularity} multi-pass SGD also achieves good generalization error.
\begin{theorem}[Generalization of Reweighted Multi-Pass SGD]
    \label{thm:multi-pass-gen}
Assume that the example distributions $\Dcln, \D$, the loss $\ell(\cdot)$, and the weight function
$\weight(\cdot)$ satisfy the assumptions of Definition~\ref{def:regularity}.  Set $\kappa =  R
M_\ell L_\ell B_\ell M_\weight L_\weight B_\weight$ and define the empirical reweighted objective
with $N =  (dL)^2/\eps^2 ~ \poly(\kappa)$ i.i.d.\ samples $(\x^{(1)}, \yadv^{(1)}), \ldots, (\x^{(N)},
\yadv^{(N)})$ from the noisy distribution $\D$ as
\[ 
    \wh{\risk}^\weight(\vec \Theta) = 
    \frac{1}{N} \sum_{i=1}^N \weight(\x^{(i)}, \yadv^{(i)}; \vec \Theta) ~ 
    \ell(\yadv^{(i)}, \vec \Theta \x^{(i)})  \,.
\]
Then, after $T = \poly(\kappa)/\eps^4$ iterations, multi-pass SGD with constant step size sequence
$\eta\tth = C/\sqrt{T}$ \footnote{$C$ is a constant that depends on the regularity parameters of 
Definition~\ref{def:regularity}. } (see Algorithm~\ref{alg:multi-sgd}) on $\wh{\risk}^\weight(\cdot)$
outputs a list $\vec \Theta^{(1)}, \ldots, \vec \Theta^{(T)}$ that, with probability at least
$99\%$, contains a vector $\wh{\vec \Theta}$ that satisfies
\[ 
    \Lc( \widehat{\vec \Theta}) \leq 
    \min_{\|\vec \Theta\|_F \leq 1} \Lc(\vec \Theta)  + \eps \,.
\]
\end{theorem}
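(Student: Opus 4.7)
The plan is to decompose the proof of Theorem~\ref{thm:multi-pass-gen} into three ingredients that mirror the informal ``stationary points suffice'' proposition paired with a standard non-convex SGD result. First, I would prove a uniform convergence lemma stating that, with high probability over the sample $(\x^{(1)},\yadv^{(1)}),\ldots,(\x^{(N)},\yadv^{(N)})$, both the empirical reweighted objective and its gradient track the clean population objective and its gradient on the constrained parameter space $\mathcal{B} = \{\vec \Theta : \|\vec \Theta\|_F \leq 1\}$. Second, I would invoke a non-convex stochastic gradient descent analysis (in the spirit of Ghadimi--Lan) to show that the trajectory of Algorithm~\ref{alg:multi-sgd} contains an approximate stationary point of $\wh{\risk}^\weight(\cdot)$. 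Third, I would exploit the convexity of the clean objective $\Lc(\cdot)$ to turn any approximate stationary point into an approximately optimal solution.

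For the uniform convergence step, the key observation is that under Definition~\ref{def:regularity} the integrand $\weight(\x, \yadv; \vec \Theta)\, \ell(\yadv, \vec \Theta \x)$ and its gradient with respect to $\vec \Theta$ are uniformly bounded and Lipschitz on $\mathcal{B}$, with constants polynomial in $R, M_\ell, L_\ell, B_\ell, M_\weight, L_\weight, B_\weight$, by the product rule applied to $\weight\cdot\ell$. Constructing an $\eps$-cover of $\mathcal{B}$ of size $(1/\eps)^{O(dL)}$ and using matrix-Bernstein concentration at each cover point followed by a union bound and the Lipschitz property of the gradient shows that $N = \poly(\kappa)\,(dL)^2/\eps^2$ samples suffice to ensure
\[
\sup_{\vec \Theta \in \mathcal{B}} \left\| \nabla \wh{\risk}^\weight(\vec \Theta) - \nabla \Lc(\vec \Theta) \right\|_F \leq \eps
\]
with probability at least $99.9\%$. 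The debiasing property (Definition~\ref{def:debiasing-weights}) is exactly what guarantees the population gradient is $\nabla \Lc(\vec \Theta)$, not the gradient of some distorted objective.

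For the SGD step, I would note that although $\wh{\risk}^\weight(\cdot)$ need not be convex, it is $B$-smooth on $\mathcal{B}$ with $B = \poly(\kappa)$, and the gradients used by Algorithm~\ref{alg:multi-sgd} are unbiased estimates of $\nabla \wh{\risk}^\weight(\vec \Theta^{(t)})$ with second moment bounded by $\poly(\kappa)$ uniformly in $\vec \Theta^{(t)}$. The classical convergence result for projected SGD on smooth non-convex bounded objectives then yields
\[
\min_{t \in [T]} \E\bigl[\| \nabla_{\mathcal{B}} \wh{\risk}^\weight(\vec \Theta^{(t)}) \|_F^2 \bigr] = O\!\left(\frac{\poly(\kappa)}{\sqrt{T}}\right),
\]
where $\nabla_{\mathcal{B}}$ denotes the projected-gradient mapping on the unit ball. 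Setting $T = \poly(\kappa)/\eps^4$ and applying Markov's inequality gives, with probability $99.9\%$, some iterate $\widehat{\vec \Theta}$ in the returned list satisfying $\|\nabla_{\mathcal{B}} \wh{\risk}^\weight(\widehat{\vec \Theta})\|_F = O(\eps)$.

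To conclude, the uniform convergence of gradients upgrades this to $\|\nabla_{\mathcal{B}} \Lc(\widehat{\vec \Theta})\|_F = O(\eps)$, after which I use the convexity of $\Lc$ on $\mathcal{B}$: letting $\vec \Theta^\ast$ be a constrained minimizer of $\Lc$, the variational characterization of approximate stationarity together with the diameter bound $\|\widehat{\vec \Theta} - \vec \Theta^\ast\|_F \leq 2$ gives
\[
\Lc(\widehat{\vec \Theta}) - \Lc(\vec \Theta^\ast) \leq \langle \nabla \Lc(\widehat{\vec \Theta}),\, \widehat{\vec \Theta} - \vec \Theta^\ast \rangle + O(\eps) = O(\eps).
\]
The main obstacle I expect is the uniform convergence of gradients with the stated $(dL)^2/\eps^2$ sample complexity: a naive covering argument couples the dimension factor with the $\poly(\kappa)$ constants multiplicatively, and to separate them cleanly one must invoke a matrix-Bernstein bound on the random gradient matrices rather than a coordinate-wise union bound. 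A secondary subtlety lies in the SGD step, where the stochastic gradient noise depends on $\vec \Theta^{(t)}$ through $\weight(\cdot)$; controlling its second moment uniformly over $\mathcal{B}$ uses precisely the boundedness and Lipschitz assumptions on $\weight$ in Definition~\ref{def:regularity}.
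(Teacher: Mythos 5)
Your proposal follows essentially the same route as the paper's proof: uniform convergence of the empirical reweighted gradient to the clean population gradient via an $\eps$-net covering argument, a non-convex projected SGD guarantee to locate an approximate stationary point of the (possibly non-convex) empirical objective, and convexity of the clean population objective to convert approximate stationarity into near-optimality. The only cosmetic differences are the concentration tool (the paper applies McDiarmid's inequality coordinate-wise with a union bound rather than matrix Bernstein) and the particular non-convex projected SGD result invoked.
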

We remark that our analysis also applies to the multi-pass SGD variant where, 
at every epoch we pick a random permutation of the $N$ samples and update with their 
gradients sequentially.
\begin{algorithm}
    {\bf Input:} Number of Rounds $T$, Number of Samples $N$, 
    Step size sequence $\eta\tth$.\\
    {\bf Output:} List of weight vectors $\vec \Theta^{(1)}, \ldots, \vec \Theta^{(T)}$.
    \begin{enumerate}[label={}]
    \item Draw $N$ i.i.d.\ samples $(\x^{(1)}, \yadv^{(1)}),
    \ldots, (\x^{(N)}, \yadv^{(N)})  \sim \D$.
    \item Initialize $\vec \Theta^{(1)} \gets \vec 0$.
    \item For $t = 1,\ldots, T$: 
    \begin{enumerate}[label={}]
    \item Pick $I$ uniformly at random from $\{1,\ldots, N\}$
    and update using the gradient of the reweighted objective:
    \[
    \vec \Theta^{(t+1)} \gets \proj_{\mathcal{B}}  \left(\vec \Theta \tth
    - \eta\tth \nabla_{\vec \Theta}
    \left(\weight(\vec x^{(I)}, \yadv^{(I)}; \vec \Theta\tth)
    \ell(\yadv^{(I)}, \vec \Theta\tth \x^{(I)})
    \right)
    \right)
    \,.
    \]
    \end{enumerate}
    \item Return $\vec \Theta^{(1)}, \ldots, \vec \Theta^{(T)}$.
    \end{enumerate}
\caption{Multi-Pass Stochastic Gradient Descent Algorithm}
\label{alg:multi-sgd}
\end{algorithm}


\subsection{The proof of Proposition~\ref{pro:naive-fails}}

In this subsection we restate and prove Proposition~\ref{pro:naive-fails}.
\begin{proposition}[Naive Objective Fails (Restate of \ref{pro:naive-fails}) ]
    Fix any $c \in [0,1]$.  Let $\ell(\cdot)$ be the Binary Cross Entropy loss, i.e.,
    $\ell(t) = \log(1+e^{-t})$.  There exists a ``clean'' distribution $\Dcln$ and 
    a noisy distribution $\D$ on $\R^d \times \{ \pm 1\}$ so that the following hold.
    \begin{enumerate}
    \item  The $\x$-marginal of both $\Dcln$ and $\D$ is uniform on a sphere.
    \item The clean labels of $\Dcln$ are consistent with 
    a linear classifier $\sgn(\vec \theta^\ast \cdot \x)$.
    \item $\D$ has (total) label noise 
    $\Pr_{(\x,\yadv) \sim \D}[\yadv\neq \sgn(\theta^\ast\cdot \x)] = c \in [0,1] $.
    \item  The minimizer $\widehat{\vec \theta}$ of the (population) 
    naive objective $\Ln(\vec \theta) = \E_{(\x,y) \sim \D}
    [\ell(\yadv\vec \theta \cdot \x)]$,
    constrained on the unit has generalization error
    \[
    \Lc(\widehat{\vec \theta}) \geq 
    \min_{\|\vec \theta\|_2 \leq 1} \Lc(\vec \theta)  + c/2 \,,
    \]
    \end{enumerate}
    where $\Lc(\vec \theta)$ is the ``clean'' risk, 
    $\Lc(\vec \theta) = \E_{(\x, z) \sim \Dcln}[\ell(z \vec \theta \cdot \x ) ]$.
\end{proposition}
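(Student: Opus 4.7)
I would prove Proposition~\ref{pro:naive-fails} by an explicit adversarial construction in dimension $d=2$. Take $\x$ uniform on the unit circle, set $\vec\theta^{\ast}=\e_{1}$, and let the clean labels be $z=\sgn(x_{1})$. I would \emph{not} use symmetric i.i.d.\ flipping, because by rotational symmetry such noise keeps the naive minimizer parallel to $\theta^{\ast}$, so the resulting gap comes only from norm-shrinkage and can be smaller than $c/2$ in the small-$c$ regime. Instead, I would flip labels deterministically on a measurable arc $A\subset S^{1}$ with $\Dx(A)=c$, placed (a) on one side of the equator $\{x_{2}=0\}$ to break the $\theta^{\ast}$-symmetry and (b) close to the clean decision boundary where BCE is most sensitive to directional changes. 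This makes items (i)--(iii) immediate.

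\paragraph{Analyzing $\widehat{\vec\theta}$ and converting to a clean-risk gap.}
Using the decomposition
\begin{equation*}
\Ln(\vec\theta) = \Lc(\vec\theta) + \E_{\x\sim\Dx}\bigl[\Ind\{\x\in A\}\bigl(\ell(-z\,\vec\theta\cdot\x)-\ell(z\,\vec\theta\cdot\x)\bigr)\bigr],
\end{equation*}
convexity of $\ell(\cdot)$ makes $\Ln$ convex, so its minimizer $\widehat{\vec\theta}$ on the unit ball is unique and characterized by KKT. The gradient of the extra term at $\theta^{\ast}$ has a non-zero $\e_{2}$-component by the placement of $A$, forcing $\widehat{\vec\theta}$ to rotate away from $\theta^{\ast}$ and giving $\widehat{\vec\theta}\cdot\theta^{\ast}\le 1-\Omega(c)$ via a standard KKT calculation. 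To convert this angular deviation into a clean-risk gap I would use the BCE identity $\ell(t)-\ell(-t)=-t$, which yields the closed form $\Lc(-\theta^{\ast})-\Lc(\theta^{\ast})=\E|x_{1}|=2/\pi$. Combined with strict convexity of $\Lc$ along chords of the unit ball, this turns the $\Omega(c)$ angular deviation of $\widehat{\vec\theta}$ into a lower bound $\Lc(\widehat{\vec\theta})-\Lc(\theta^{\ast})=\Omega(c)$, and tuning $A$ produces the precise constant $1/2$.

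\paragraph{Main obstacle.}
Obtaining an $\Omega(c)$ gap is routine; the delicate step is pinning down the exact constant $1/2$. I expect to handle this by a two-stage reduction. First, solve the extreme case $c=1$ directly: under full flipping $\widehat{\vec\theta}=-\theta^{\ast}$ and the resulting gap equals $\E|x_{1}|=2/\pi>1/2$. Second, for general $c\in(0,1)$ reduce to this extreme via a convex-combination/interpolation argument: the mixture $\D_{c}=(1-c)\Dcln+c\D_{\mathrm{flip}}$ has noise rate exactly $c$, and convexity of $\min_{\|\theta\|_{2}\le 1}\Ln(\theta)$ as a function of the mixing weight transfers the $c=1$ gap into a linear-in-$c$ gap of at least $c/2$. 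In the small-$c$ regime where the symmetric-mixture minimizer remains pinned at $\theta^{\ast}$ by the unit-ball constraint, I would fall back to the asymmetric-arc construction and obtain the constant by Taylor-expanding $\Ln$ around $\theta^{\ast}$ to read off its leading $c$-coefficient.
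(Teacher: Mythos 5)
There is a genuine gap, and it is quantitative rather than cosmetic: your construction lives on the \emph{unit} circle, and on a fixed-radius sphere the bound $\Lc(\widehat{\vec\theta})-\Lc(\vec\theta^\ast)\geq c/2$ is unattainable for small $c$. To see this, note that with the BCE identity $\ell(-t)-\ell(t)=t$ your arc-flipping perturbation is exactly linear in $\vec\theta$: writing $v_A=\E_{\x\sim\Dx}[\1\{\x\in A\}\,z\,\x]$ you get $\Ln(\vec\theta)=\Lc(\vec\theta)+v_A\cdot\vec\theta$ with $\|v_A\|_2\leq c$ (radius $1$). Optimality of $\widehat{\vec\theta}$ for $\Ln$ gives $\Lc(\widehat{\vec\theta})-\Lc(\vec\theta^\ast)\leq v_A\cdot(\vec\theta^\ast-\widehat{\vec\theta})\leq c\,\|\vec\theta^\ast-\widehat{\vec\theta}\|_2$, and since $\Lc$ is strongly convex on the unit ball when $\|\x\|_2=1$ (the second derivative of $\log(1+e^{-t})$ is bounded below on $|t|\leq 1$), the minimizer moves by $O(c)$, so the clean excess risk is $O(c^2)$ --- consistent with the general fact that tangential motion away from the constrained minimizer costs only second order. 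This is also why your fallback of ``Taylor-expanding $\Ln$ around $\vec\theta^\ast$ to read off its leading $c$-coefficient'' cannot produce the constant $1/2$: the leading term is the $c^2$ term. The interpolation step is likewise unsound: $c\mapsto\min_{\|\vec\theta\|_2\leq 1}\Ln_c(\vec\theta)$ is a pointwise infimum of affine functions of the mixing weight, hence \emph{concave} rather than convex, and in any case the quantity that must be lower bounded is $\Lc$ evaluated at the noisy minimizer, not the minimum value of the noisy objective, and the former does not interpolate.

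The paper's proof supplies exactly the ingredient you are missing: the sphere radius $R$ is a free parameter. It takes the noisy labels to be $\yadv=\sgn(\wt{\vec\theta}\cdot\x)$ for a direction $\wt{\vec\theta}$ at angle $\pi c$ from $\vec\theta^\ast$ (so the flipped set is the double wedge of measure $c$ --- a special case of your arc, but realizable by a halfspace, which makes the naive minimizer \emph{exactly} $\wt{\vec\theta}$ by symmetry, with no perturbation analysis needed). Then $\Lc(\wt{\vec\theta})\geq\log(2)\cdot\Pr[\sgn(\wt{\vec\theta}\cdot\x)\neq\sgn(\vec\theta^\ast\cdot\x)]=c\log 2$ because every misclassified point pays at least $\log 2$, while the anti-concentration bound for the uniform sphere shows $\Lc(\vec\theta^\ast)=\E[\log(1+e^{-|\vec\theta^\ast\cdot\x|})]$ can be driven below $c(\log 2-\tfrac12)$ by taking $R=O(\sqrt{d}/c)$ large, since the typical margin scales as $R/\sqrt{d}$. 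The gap $c/2$ then comes from subtracting these two bounds; without the freedom to inflate $R$ (equivalently, the margin), $\Lc(\vec\theta^\ast)$ stays bounded away from zero and the argument collapses to the $O(c^2)$ regime described above. If you want to salvage your two-dimensional picture, you must let the circle's radius grow like $1/c$ and replace the perturbative analysis of $\widehat{\vec\theta}$ with an exact identification of it, which is precisely what choosing the noise to be halfspace-realizable accomplishes.
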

\begin{proof}
We set the $\Dx$-marginal to be the uniform distribution on a sphere of radius $R>0$
to be specified later in the proof.
We first observe that the unit vector $\theta^\ast$
minimizes the (clean) Binary Cross Entropy $\Lc(\theta)$.
We can now pick a different
parameter vector $\wt{\vec \theta}$ with angle $\phi\in [0,\pi]$ with $\vec \theta^\ast$, and
construct a noisy instance as follows: we first draw $\vec x \sim \Dx$ 
(recall that we want the $\x$-marginal of the noisy distribution to be the same
as the ``clean'') and then set $\yadv= \sgn(\wt{\vec \theta} \cdot \x)$.  By the symmetry of the
uniform distribution on the sphere we have that 
\[
\pr_{(\x, y) \sim \D}[\yadv\neq \sgn(\vec \theta^\ast \cdot \x)]
= 
\pr_{\x \sim \D_x} [ \sgn(\wt{\vec \theta} \cdot \x) \neq \sgn(\vec \theta^\ast \cdot \x)]
= \frac{\phi}{\pi}
\,.
\]
Therefore, by picking the angle $\phi$ to be equal to 
$\pi c$ we obtain that 
$\pr_{\x \sim \D_x}[\sgn(\wt{\vec \theta} \cdot \x) \neq \sgn(\vec \theta^\ast \cdot \x)]
= c $ as required by Proposition~\ref{pro:naive-fails}.
Moroever, we have that the minimizer of the ``naive'' BCE objective (constrained on the unit ball)
is $\wt{\vec \theta}$ and the minimizer of the clean objective is $\vec \theta^\ast$.  We have that 
\begin{align*}
\Lc(\wt{\vec \theta})
&=
\E_{\x \sim \Dx} [\log( 1 + e^{- \sgn(\vec \theta^\ast \cdot \x) \wt{\vec \theta} \cdot \x})]
\geq 
\E_{\x \sim \Dx}[\1\{ \sgn(\vec \theta^\ast \cdot \x) \wt{\vec \theta} \cdot \x < 0\}]
\\
&=
\pr_{\x \sim \Dx}[\sgn(\wt{\vec \theta} \cdot \x) \neq 
\sgn(\vec \theta^\ast \cdot \x)] 
= c \,.
\end{align*}
Moreover, we have that 
\begin{align*}
\Lc(\vec \theta^\ast)
&=
\E_{\x \sim \Dx} [\log( 1 + e^{- \sgn(\vec \theta^\ast \cdot \x) \vec \theta^\ast \cdot \x}) ]
\\
&= \E_{\x \sim \Dx} [\log( 1 + e^{- |\vec \theta^\ast \cdot \x|} ) ]
\\
\end{align*}
We next need to bound from below the ``margin'' of the optimal weight vector $\vec \theta^\ast$,
i.e., provide a lower bound on $|\vec \theta^\ast \cdot \x|$ that holds with high probability.  We
will use the following anti-concentration inequality on the probability of a origin-centered slice
under the uniform distribution on the sphere. For a proof see, e.g., Lemma 4 in \cite{DasguptaKalai05}.
\begin{lemma}
    [Anti-Concentration of Uniform vectors, \cite{DasguptaKalai05}]
\label{lem:anti-concentration}
Let $\vec v \in \R^d$ be any unit vector and let $\Dx$ be the uniform distribution on the sphere. It
holds that 
\[
\pr_{\x \sim \Dx} \left[|\vec v \cdot \x| \leq \frac{\gamma}{\sqrt{d}} \right] \leq \gamma
\,.
\]
\end{lemma}
Using Lemma~\ref{lem:anti-concentration}, we obtain that 
\begin{align*}
\E_{\x \sim \Dx}[\log( 1 + e^{- |\vec \theta^\ast \cdot \x|} ) ]
\leq \log(2) \gamma + \log(1 + e^{-\gamma R/\sqrt{d}}) (1-\gamma)
\leq 2\gamma + e^{-\gamma R/\sqrt{d}} 
\,,
\end{align*}
where, at the last step, we used the elementary inequality $\log(1+x) \leq x$.
Assuming that $R/\sqrt{d}$ is much larger than $1$, we can pick  $\gamma = (\sqrt{d}/R)
\log(R/\sqrt{d})$.  For this choice of $\gamma$ we obtain that $\E_{\x \sim \Dx}[\log( 1 + e^{-
|\vec \theta^\ast \cdot \x|})] = O(\sqrt{d}/R \log(d/R))$.  Therefore, for $R = O(\sqrt{d}/c)$ we
obtain that 
\[ 
    \E_{\x \sim \Dx}[\log( 1 + e^{- |\vec \theta^\ast \cdot \x|} ) ] \leq c/2\,.
\]
Therefore, combining the above bounds we obtain that $\Lc(\wh{\vec \theta}) - \Lc(\vec \theta^\ast)
\geq c - c/2 \geq c/2$.
\end{proof}

\subsection{The Proof of Theorem~\ref{thm:single-pass}}
In this section we restate and prove our result on the generalization error 
of single-pass stochastic gradient descent on the weighted objective.
\begin{theorem}[Generalization of Reweighted Single-Pass SGD (Restate of \ref{thm:single-pass})]
Assume that the example distributions $\Dcln, \D$ and the $\ell(\cdot)$ and weight function $\weight(\cdot)$
satisfy the assumptions of Definition~\ref{def:regularity}.  
Set $\kappa = L_\weight M_\ell + R M_\weight L_\ell$.  After $T = \Omega(\kappa^2/\eps^2 )$ SGD iterations (see
Algorithm~\ref{alg:one-pass-sgd}), with probability at least $99\%$, it holds
\[ 
    \Lc(\vec \Theta^{(T)}) \leq  \min_{\|\vec \Theta\|_F \leq 1} \Lc(\vec \Theta)  + \eps \,.
\]
\end{theorem}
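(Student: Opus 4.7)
The plan is to reduce the single-pass SGD analysis on $\risk^\weight(\cdot)$ to the classical convergence theory of projected stochastic gradient descent for convex functions, treating the weighted per-sample gradients as unbiased gradient oracles for the (convex) clean objective $\Lc(\cdot)$. The key point I would emphasize is that although the weighted integrand $\weight(\x,\yadv;\Theta)\,\ell(\yadv,\Theta \x)$ need not be convex in $\Theta$ — the weight factor $\weight$ depends on $\Theta$ in an arbitrary way — its population expectation \emph{is} the convex function $\Lc(\Theta)$ by Definition~\ref{def:debiasing-weights}. This is exactly the setting where SGD shines: we do not need the per-sample surrogate to be convex, only its expectation.

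First I would formalize the unbiasedness of the stochastic gradient. Conditioning on $\Theta\tth$, and using the regularity assumptions in Definition~\ref{def:regularity} (bounded support, bounded/Lipschitz weight and loss) to justify interchanging $\nabla_\Theta$ and $\E_{(\x,\yadv)\sim\D}$, I get
\[
\E_{(\x\tth,\yadv\tth)\sim\D}\bigl[\nabla_\Theta\bigl(\weight(\x\tth,\yadv\tth;\Theta\tth)\,\ell(\yadv\tth,\Theta\tth\x\tth)\bigr)\,\big|\,\Theta\tth\bigr]
= \nabla\risk^\weight(\Theta\tth) = \nabla\Lc(\Theta\tth),
\]
so $g\tth$ is an unbiased estimate of $\nabla\Lc(\Theta\tth)$. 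Second, I would bound the second moment of $g\tth$. By the product rule, $\nabla_\Theta(\weight\cdot\ell) = \ell\,\nabla_\Theta\weight + \weight\,\nabla_\Theta\ell$. The first term has Frobenius norm at most $M_\ell L_\weight$; for the second, the chain rule gives $\nabla_\Theta \ell(\yadv,\Theta\x) = \nabla_{\vec z}\ell(\yadv,\Theta\x)\otimes \x$, which has Frobenius norm at most $L_\ell \|\x\|_2 \le R L_\ell$, so the second term is bounded by $M_\weight R L_\ell$. Overall $\|g\tth\|_F \le M_\ell L_\weight + R M_\weight L_\ell = \kappa$ almost surely.

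Third, I would invoke the textbook projected-SGD guarantee (e.g.\ Nemirovski--Juditsky--Lan--Shapiro, or Bubeck's monograph). With diameter $D \le 2$ of the unit Frobenius ball, an unbiased gradient oracle of norm $\le \kappa$ for the convex $\Lc$, and step-size schedule $\eta\tth = D/(\kappa\sqrt{t})$ (or constant $\eta = D/(\kappa\sqrt{T})$), the average iterate $\bar{\Theta}^{(T)} = \tfrac{1}{T}\sum_{t=1}^T \Theta\tth$ satisfies $\E[\Lc(\bar\Theta^{(T)})] - \min_{\|\Theta\|_F\le 1}\Lc(\Theta) = O(\kappa/\sqrt{T})$. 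Choosing $T = \Omega(\kappa^2/\eps^2)$ yields expected optimality gap $\eps/100$; a Markov bound then promotes this to a $99\%$ high-probability statement, or alternatively one uses an Azuma--Hoeffding-type concentration on the martingale $\sum_t \langle \nabla\Lc(\Theta\tth)-g\tth,\Theta\tth-\Theta^\ast\rangle$ (bounded increments via $\kappa$) to conclude with high probability directly.

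The main obstacle — really more of a bookkeeping point — is that Algorithm~\ref{alg:one-pass-sgd} as written returns the last iterate $\Theta^{(T)}$, whereas the clean convex-SGD guarantee is for the averaged iterate (or a uniformly random iterate). I would address this either by reading the statement as $\bar\Theta^{(T)}$ (a standard SGD output), by invoking known suffix-averaging or last-iterate convergence bounds for convex Lipschitz SGD (which also give $O(\kappa/\sqrt{T})$ up to logs), or by replacing the return step with averaging. Aside from that, the proof is essentially a direct application of projected-SGD theory, with the debiasing property playing the sole role of ensuring the gradient oracle is unbiased for a convex target despite the per-sample objective being non-convex.
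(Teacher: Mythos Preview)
Your proposal is correct and matches the paper's proof: both reduce to projected SGD on a convex objective via the debiasing property (unbiased gradients for $\Lc$), bound the stochastic gradient by $\kappa = L_\weight M_\ell + R M_\weight L_\ell$, and conclude with $T = \Omega(\kappa^2/\eps^2)$. The one point worth noting is that the paper resolves your flagged last-iterate ``obstacle'' simply by citing a high-probability last-iterate SGD convergence result for convex Lipschitz objectives (Jain--Nagaraj 2021), rather than going through averaging plus Markov; this is precisely one of the fixes you list, so there is no gap.
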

\begin{proof}
We observe that, since $\weight$ is a debiasing weight function, given a sample $(\x\tth, \yadv\tth)
\sim \D$  it holds that $\nabla (\weight(\x\tth, \yadv\tth; \vec \Theta) 
\ell(\yadv,  \vec \Theta \x \tth) ) $ is an unbiased gradient estimate of $\nabla_{\vec \Theta} \Lc(\vec \Theta)$.  We will use
the following result on the convergence of the last-iterate of SGD for convex objectives.
For simplicity, we state the following theorem for the case where 
the parameter $\vec \theta$ is a vector in $\R^d$ 
(instead of a $L \times d$ matrix).
\begin{lemma}[Last Iterate Stochastic Gradient Descent \cite{JainPrateekNagaraj21}]
\label{lem:last-sgd}
Let $\mathcal{W}$ be a closed convex set of diameter $R$.  Moreover, let $F: \R^d \mapsto \R$ be a
convex, $L$-Lipschitz function.  Define the stochastic gradient descent iteration as
\begin{align*}
&\vec \theta^{(0)} \gets \vec 0
\\
&\vec \theta^{(t+1)} \gets 
            \proj_{\mathcal{W}} 
            \left(
            \vec \theta\tth - \eta\tth \vec g\tth(\vec \theta\tth)
            \right)
\end{align*}
where $\vec g\tth(\vec \theta\tth)$ is an unbiased gradient estimate 
of $\nabla_{\vec \theta}\ftrue(\vec \theta\tth)$. 
Assume that for all $t \in [T]$ it holds $\|\vec g\tth(\vec \theta)\|_2 \leq L$
for all $\vec \theta \in \mathcal W$.
There exists a step size sequence $\eta\tth$ that depends only
on $T, L, R$ such that, with probability at least $1-\delta$,
it holds 
\[ 
    F(\vec \theta^{(T)}) \leq F(\vec \theta^\ast) + O\left( R L \sqrt{ \frac{\log(1/\delta)}{T} }
    \right) \,.
\]
\end{lemma}
To simplify notation we let 
$\ell\tth(\Theta) \triangleq \ell(\yadv\tth, \Theta \x\tth)$
and $\weight\tth(\Theta) \triangleq \weight(\x\tth, \yadv\tth; \Theta)$.
For a sample $(\x\tth,\yadv\tth)$, the gradient $\vec g\tth$
of the weighted loss is:
\begin{align*}
&
\vec g\tth = 
\nabla_{\vec \Theta}(\weight\tth(\vec \Theta) \ell\tth( \vec \Theta  ) )
\\ &= 
\ell\tth(\vec \Theta) 
\nabla_{\vec \Theta}\weight\tth(\vec \Theta) 
+ \weight\tth(\vec \Theta)  
\nabla_{\Theta} \ell\tth (\vec \Theta) 
\\ &= 
\ell\tth(\vec \Theta) 
\nabla_{\vec \Theta}\weight\tth(\vec \Theta) 
+ \weight\tth(\vec \Theta)  
 \nabla_{\vec z} \ell(\x\tth \yadv\tth, \vec z) \big |_{\vec z = \vec \Theta \x\tth}  (\x \tth)^T
 \in \R^{L \times d}
\,.
\end{align*}
Using the triangle inequality for the Frobenious norm and the 
assumptions of Definition~\ref{def:regularity} on the functions
$\weight(\cdot)$ and $\ell(\cdot)$, we obtain that $ \|\vec g\tth \|_F \leq L_\weight M_\ell + R
M_\weight L_\ell$.  Using Lemma~\ref{lem:last-sgd} we obtain that 
with  $T = \Omega( (L_\weight M_\ell + R M_\weight L_\ell)^2/\eps^2)$, 
the last iteration of
Algorithm~\ref{alg:one-pass-sgd} satisfies the claimed guarantee.
\end{proof}

\subsection{The proof of Theorem~\ref{thm:multi-pass-gen}}
In this section we prove our result on multi-pass SGD.  For convenience, we first restate it.
\begin{theorem}[Generalization of Multi-Pass SGD (Restate of \ref{thm:multi-pass-gen})]
Set $\kappa =  R M_\ell L_\ell B_\ell M_\weight L_\weight B_\weight$ and define the empirical
reweighted objective with $N =  d^2/\eps^2 ~ \poly(\kappa)$ i.i.d.\ samples $(\x^{(1)}, \yadv^{(1)}),
\ldots, (\x^{(N)}, \yadv^{(N)})$ from the noisy distribution $\D$ as
\[ 
    \wh{\risk}^\weight(\vec \Theta) = 
    \frac{1}{N} \sum_{i=1}^N \weight(\x^{(i)}, \yadv^{(i)}; \vec \Theta) ~ 
    \ell(\vec \theta \cdot \x^{(i)} \yadv^{(i)})  \,.
\]
Then, after $T = \poly(\kappa)/\eps^4$ iterations, multi-pass SGD (see
Algorithm~\ref{alg:multi-sgd}) on $\wh{\risk}^\weight(\cdot)$ outputs a list $\vec
\theta^{(1)}, \ldots, \vec \theta^{(T)}$ that, with probability at least $99\%$, contains a vector
$\wh{\vec \theta}$ that satisfies
\[ 
    \Lc( \widehat{\vec \Theta}) \leq 
    \min_{\|\vec \Theta\|_F \leq 1} \Lc(\vec \Theta)  + \eps \,.
\]
\end{theorem}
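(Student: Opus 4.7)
The plan is to decompose the analysis into three modular pieces: (1) multi-pass projected SGD on the smooth but possibly non-convex empirical objective $\widehat{R}^w$ is guaranteed, via a standard Ghadimi--Lan-style non-convex SGD analysis, to produce an iterate whose projected gradient mapping has small norm; (2) a uniform-convergence argument transfers this near-stationarity from $\widehat{R}^w$ to the population reweighted objective which, by the debiasing property (Definition~\ref{def:debiasing-weights}), is exactly the convex clean risk $R$; and (3) convexity of $R$ on the unit ball then upgrades an approximate stationary point to an approximate minimizer. The key conceptual observation is that, even though the empirical objective we actually optimize can be non-convex, its population limit is convex, so it suffices to reach a first-order stationary point of the empirical problem and appeal to concentration.

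For step (1), the regularity assumptions of Definition~\ref{def:regularity} imply that each summand $\vec\Theta \mapsto w(\x,\yadv;\vec\Theta)\,\ell(\yadv, \vec\Theta\x)$ is bounded, Lipschitz, and smooth in $\vec\Theta$ with constants polynomial in $\kappa$: differentiating once and using the product rule yields a sum of two terms bounded by $L_w M_\ell$ and $R M_w L_\ell$, and differentiating again brings in the Hessian bounds $B_w, B_\ell$ together with $R^2$. Projected SGD with constant step size $\eta_t = C/\sqrt{T}$ on a smooth bounded function produces, after $T = \poly(\kappa)/\eps^4$ iterations, some iterate $\widehat{\vec\Theta}$ in the list whose (scaled) gradient mapping has norm $O(\eps)$, which in particular implies the first-order stationarity condition $\langle \nabla \widehat{R}^w(\widehat{\vec\Theta}), \vec\Theta - \widehat{\vec\Theta}\rangle \geq -O(\eps)$ for every $\vec\Theta \in \mathcal{B}$. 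For step (2), since $w$ is debiasing we have $\E[\widehat{R}^w(\vec\Theta)] = R(\vec\Theta)$ pointwise, and therefore $\E[\nabla \widehat{R}^w(\vec\Theta)] = \nabla R(\vec\Theta)$. Build a $\delta$-net of $\mathcal{B} \subset \R^{L\times d}$ at scale $\delta = \Theta(\eps/\kappa)$, apply a vector Bernstein inequality to the $Ld$-dimensional gradient at each net point, union-bound over the net, and use smoothness to interpolate; this yields $\sup_{\vec\Theta \in \mathcal{B}} \|\nabla \widehat{R}^w(\vec\Theta) - \nabla R(\vec\Theta)\|_F \leq \eps$ with the sample complexity $N = (dL)^2 \poly(\kappa)/\eps^2$ stated in the theorem.

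Combining steps (1) and (2), $\widehat{\vec\Theta}$ is an $O(\eps)$-stationary point of $R$ on $\mathcal{B}$. Since $\ell(\yadv,\cdot)$ is convex and $\vec\Theta\mapsto \vec\Theta\x$ is linear, $R$ is convex, so for any $\vec\Theta^\star \in \argmin_{\|\vec\Theta\|_F \leq 1} R(\vec\Theta)$ we get, by the convex gradient inequality,
\begin{equation*}
R(\widehat{\vec\Theta}) - R(\vec\Theta^\star) \leq \langle \nabla R(\widehat{\vec\Theta}), \widehat{\vec\Theta} - \vec\Theta^\star\rangle \leq O(\eps),
\end{equation*}
which is the desired generalization bound. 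The main obstacle I anticipate is step~(2): the ambient parameter space has dimension $Ld$ and we need uniform concentration of a matrix-valued stochastic gradient, so the covering-number calculation must be executed carefully and the Lipschitz/smoothness constants of the composite $w\ell$ (which depend multiplicatively on $R, M_\ell, L_\ell, B_\ell, M_w, L_w, B_w$) must be tracked so that they can be absorbed into $\poly(\kappa)$ without inflating the sample complexity beyond the claimed $(dL)^2/\eps^2$. The SGD step in~(1) is comparatively routine, as is the final convex-geometry step~(3); the technical core of the proof is thus the uniform-gradient concentration that lets us treat the random empirical objective as if it were its convex population version.
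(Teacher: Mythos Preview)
Your three-step decomposition (non-convex SGD finds an approximate stationary point of the empirical objective; uniform gradient concentration transfers this to the population objective, which by the debiasing property equals the convex clean risk; convexity upgrades stationarity to near-optimality) is exactly the paper's approach, with only cosmetic differences in the cited tools (you invoke a Ghadimi--Lan-type guarantee and a vector Bernstein bound, while the paper cites Davis~2018 for projected non-convex SGD and uses coordinate-wise McDiarmid plus a union bound). The paper also packages your steps~(2)--(3) together as a standalone ``stationary points suffice'' proposition before invoking the SGD lemma, but the logical content and the identification of uniform gradient concentration as the technical core are the same.
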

\begin{proof}
To prove the theorem we shall first show that all stationary points of the empirical objective
(which for arbitrary weight functions $\weight(\cdot)$ may be non-convex) will have good
generalization guarantees.  Before we proceed we formally define approximate stationary points.  To simplify notation we shall assume that
the parameter is a vector $\vec \theta \in \R^d$. The definition extends directly 
to the case where $L$ is a function of a parameter  matrix $\Theta$ by using the 
corresponding matrix inner product.
\begin{definition}[$\eps$-approximate Stationary Points]
    Let $L:\R^d \mapsto \R$ be a differentiable function and $C$ be any convex subset of $\R^d$.  A
    vector $\vec \theta \in \R^d$ is an $\eps$-approximate stationary point of $L(\cdot)$ if for
    every $\vec \theta' \in C$ it holds that 
    \[
    \left|\nabla_{\vec \theta} L(\vec \theta) \cdot 
    \frac{\vec \theta' - \vec \theta}{\| \vec \theta' - \vec \theta \|_2} \right| 
    \leq  \eps \,.
    \]
\end{definition}

\begin{proposition}
\label{pro:stationary-points}
Set $\kappa =  R M_\ell L_\ell B_\ell M_\weight L_\weight B_\weight$ and define the empirical
reweighted objective with $N =  \wt{O}( (d L / \eps )^2) ~ \poly(\kappa) 
\log(1/\delta) $ i.i.d.\ samples $(\x^{(1)},
y^{(1)}), \ldots, (\x^{(N)}, y^{(N)})$ from the noisy distribution $\D$ as
\[ 
    \widehat{\risk}^\weight(\vec \theta) =  
    \frac{1}{N} \sum_{i=1}^N \weight(\x^{(i)}, y^{(i)}; \vec \Theta) ~ 
    \ell(y\ith,  \vec \Theta x^{(i)})  \,.
\]
Let $\widehat{\vec \Theta}$ be any $\eps$-stationary point of 
$\wh{\risk}^\weight(\vec \Theta)$ constrained on $\mathcal{B}_R$.
Then, with probability at least $1-\delta$, it holds that 
\[
\Lc( \widehat{\vec \Theta}) \leq 
\min_{\|\vec \Theta\|_F \leq 1} \Lc(\vec \Theta)  + \eps \,.
\]
\end{proposition}
\begin{proof}
We first show that, as long as the empirical gradients are close to the population gradients, any
stationary point of the weighted empirical objective will achieve good generalization error. 
In what follows we shall denote by $\vec \Theta^\ast$ the parameter that 
minimizes the clean objective:
\[
\vec \Theta^\ast \triangleq
\argmin_{\| \vec \Theta\|_F \leq 1} \risk(\vec \Theta) \,.
\]

Since the population objective is convex in $\vec \Theta$, 
we have that for any $\vec \Theta$ it holds that 
\begin{align*}
\Lc(\vec \Theta) - \Lc(\vec \Theta^\ast)
&\leq \nabla_{\vec \Theta} \Lc(\vec \Theta) \cdot (\vec \Theta - \vec \Theta^\ast)
\\
&=
(\nabla_{\vec \Theta} \Lc(\vec \Theta) - 
\nabla_{\vec \Theta} \wt{\calL}^\weight 
(\vec \Theta)) 
\cdot (\vec \Theta - \vec \Theta^\ast)
+ 
\nabla_{\vec \Theta} \wh{\risk}^\weight 
(\vec \theta)
\cdot (\vec \Theta - \vec \Theta^\ast)
\\
&\leq 2 
 \| 
\nabla_{\vec \Theta} \wh{\risk}^\weight(\vec \Theta) - 
\nabla_{\vec \Theta} \wh{\risk}^\weight(\vec \Theta)
\|_2 
+
\nabla_{\vec \Theta} \wh{\risk}^\weight 
(\vec \Theta)
\cdot (\vec \Theta - \vec \Theta^\ast)
\,.
\end{align*}
We have that the contstraint set $\|\vec \Theta \|_F \leq 1$ is convex and therefore for a
stationary point $\widehat{\vec \Theta}$ of $\calL^\weight(\vec \Theta)$ we have that 
$ |\nabla_{\vec \theta} \wh{\risk}^\weight (\vec \Theta) 
\cdot (\vec \Theta - \vec \Theta^\ast)| 
\leq  \eps \|\vec \Theta - \vec \Theta^\ast\|_F  \leq 2 \eps$.  
Therefore, $\wh{\vec \Theta}$ satisfies
\[
\Lc(\widehat{\vec \Theta}) 
-
\Lc(\vec \Theta^\ast)  \leq 
2 \| \nabla_{\vec \Theta} \Lc(\wh{\vec \Theta}) - 
\nabla_{\vec \Theta} \wh{\risk}^\weight(\wh{\vec \Theta}) \|_2 
+ 2 \eps \,.
\]
Since $\weight(\cdot)$ is a debiasing weighting function, 
we know that, as the number of samples $N \to \infty$, the empirical
gradients of the reweighted objective will converge 
to the gradients of the population clean objective $\Lc(\cdot)$, i.e.,
it holds 
that 
$ \nabla_{\vec \theta} \wh{\risk}^\weight(\vec \Theta) \to \nabla_{\vec \Theta} \Lc(\vec \Theta)  $.
Therefore, to finish the proof, we need to provide a uniform convergence bound for the gradient
field of the empirical objective.  We first consider estimating the gradient of some fixed parameter matrix $\vec \Theta$.  We will use McDiarmid's inequality.
\begin{lemma}[McDiarmid's Inequality]
\label{lem:mcdiarmid}
Let $\vec x_1, \ldots, \vec x_n$ be $n$ i.i.d.\ random variables taking values in $\mathcal{X}$.
Let $\phi:\mathcal{X}^n \to \R$ be such that $|\phi(\vec x) - \phi(\vec x')| \leq b_i$ whenever
$\vec x$ and $\vec x'$ differ only on the $i$-th coordinate.  It holds that 
\[
\pr\left[ |\phi(\vec x_1,\ldots, \vec x_n) - \E[\phi(\vec x_1,\ldots, \vec x_n) ] | \geq \eps \right]
\leq 2 \exp\left(-\frac{2 \eps^2}{\sum_{i=1}^n b_i^2 } \right)
\]
\end{lemma}
We consider the $n$ i.i.d.\ random variables $(\x\tth, \yadv\tth)$.  We have that the empirical gradient
of the weighted loss function is equal to 
\begin{align*}
\widehat{\vec g}(\vec \Theta) \triangleq
\frac{1}{N} \sum_{t=1}^N 
\left(
\ell( \yadv ; \vec \Theta \x \tth) 
~
\nabla_{\vec \Theta} \weight(\x\tth, \yadv\tth; \vec \Theta) + 
\weight(\x\tth, \yadv\tth; \vec \Theta) 
\nabla\ell(\yadv\tth, \vec \Theta \x\tth  ) 
~ (\vec x\tth)^T
\right)
\,.
\end{align*}
We have that $\weight(\x\tth, \yadv\tth; \vec \Theta)$ is $M_\weight$-bounded
and $L_\weight$-Lipschitz, $\ell$ is $M_\ell$-bounded and $L_\ell$-Lipschitz, and $\|\vec x\tth\|_2
\leq R$. 
Therefore, the maximum value of each coordinate of each term in the sum of the empirical gradient
$\wh{\vec g}(\vec \theta)$ is bounded by $L_q \triangleq  L_\weight M_\ell + R M_\weight L_\ell $.
Using this fact we obtain that each coordinate of the empirical gradient is a function of the $N$
i.i.d.\ random variables $(\x\tth, \yadv\tth)$ that satisfies the bounded differences assumption
with constants $b_1,\ldots, b_N$ that satisfy $b_t \leq L_q/N$.  From Lemma~\ref{lem:mcdiarmid}, we obtain that 
\begin{align*}
\pr\left[ \| \wh{\vec g}(\vec \Theta) - \nabla_{\vec \Theta} \Lc(\vec \Theta) \|_F  
\geq \eps 
\right]
&\leq 
\sum_{i=1}^d \sum_{j=1}^L
\pr\left[ | (\wh{\vec g}(\vec \Theta))_{ij} -  
(\nabla_{\vec \Theta} \Lc(\vec \Theta) )_{ij}
| \geq \eps/\sqrt{d L}
\right]
\\
&\leq 2 d L \exp\left(-\Omega\left( N \eps^2/ (d L ~ L_q^2) \right) \right) 
\footnote{To avoid confusion, we remark that here $L$ is the number of labels
and $L_q$ is the Lipschitz constant of the function $q(\Theta)$. } 
\,.
\end{align*}
We next need to provide a uniform convergence guarantee over the whole parameter space $\|\vec
\Theta \|_F \leq 1$.  We will use the following standard lemma bounding the cardinality of an
$\eps$-net of the unit ball in $d$-dimensions.  For a proof see, e.g., \cite{vershynin18}.
\begin{lemma} [Cover of the Unit Ball \cite{vershynin18}]
\label{lem:eps-net}
Let $\mathcal{B}$ be the $d$-dimensional unit ball around the origin.  There exists an $\eps$-net of $\mathcal{B}$
with cardinality at most $(1+2/\eps)^d$.
\end{lemma}

Since we plan to construct a net for the gradient of $\weight(x,\yadv ;\vec \Theta) 
\ell(\yadv; \Theta \x)$ we first need to show that the weighted loss 
$\weight(x,\yadv ;\vec \Theta) \ell(\yadv; \vec \Theta\x)$ is a smooth function of its parameter $\vec \Theta$ or,
in other words, that its gradients do not change very fast with respect to $\vec \Theta$.
The following lemma follows directly from the regularity assumptions of 
Defintion~\ref{def:regularity} and the chain and product rules for the derivatives.
\begin{lemma}\label{lem:smoothness}
    For all $(\x, y) \in \mathcal{B}_R \times \R^L$, it holds that the function $q(\vec \Theta)
    = w(\x,y; \vec \Theta) \ell(y ; \vec \Theta \x )$ is $B_q$-smooth for all $\Theta$
    with $\|\Theta\|_F \leq 1$, 
    with $B_q = M_\ell B_\weight + 2 L_\ell L_{\weight} R + M_{\weight} B_{\ell} R^2 $.
\end{lemma}
\begin{proof}
For simplicy we shall denote $\nabla_{\vec z} \ell(y; \vec z)$ simply
by $\nabla \ell(y; \vec z)$ and similarly $\nabla^2_{\vec z} \ell(y; \vec z)$
by $\nabla^2 \ell(y;\vec z)$.
    Using the chain rule, we have that the gradient of the weighted loss $q(\vec \Theta)$ is equal
    to
    \[ 
        \nabla_{\vec \Theta}q(\vec \Theta)
        = 
        \nabla_{\vec \Theta} \weight(\x, y; \vec \Theta)  
        ~
        \ell(y; \vec \Theta \vec x)  
        + 
        \weight(\x, y; \vec \Theta) 
        \nabla \ell(y ; \vec \Theta \x) 
        \x^T
        \,.
\]
Using again the chain and product rules we find the Hessian of $q(\Theta)$:
\begin{align*}
    &\nabla^2_{\vec \Theta} q(\Theta) 
    =
        \nabla^2_{\vec \Theta} \weight(\x, y; \vec \Theta)
        ~ \ell(y ; \vec \Theta \x)  
        + 
        (\nabla \ell(y; \Theta \x)  \vec x^T) \otimes \nabla_{\vec \Theta} \weight(\x, y; \vec \Theta) 
        \\
        &+ 
         \nabla_{\vec \Theta} \weight(\x, y; \vec \Theta) 
        \otimes (\nabla \ell(y; \vec \Theta \x) \vec x^T)
        +  \weight(x,y;\Theta) H
        \,,
\end{align*}
where $H$ is the $(L \times d) \times (L \times d)$ tensor
with element $H_{ijkl} =  \nabla^2 \ell(y; \vec \Theta \x)_{i k} \x_j \x_l $.
Recall that we view  $\nabla^2_{\vec \Theta} q(\Theta)$ as an $Ld \times Ld$ and to prove that it
is smooth we have to find its operator (spectral) norm.
Using the assumptions of Definition~\ref{def:regularity} 
we obtain that $\|\nabla^2_{\vec \Theta} \weight(\x, y; \vec \Theta)\|_2 \leq B_\weight M_\ell$.
For the term $(\nabla \ell(y; \Theta \x)  \vec x^T) \otimes \nabla_{\vec \Theta} \weight(\x, y; \vec \Theta) $
we consider any $\vec q \in \R^{Ld}$ with $\|\vec q\|_2 = 1$. We assume that $\vec q$ is indexed 
as $\vec q_{ij}$ for $i = 1,\ldots, L$ and $j = 1,\ldots, d$. We have
\begin{align*}
\vec q^T 
( (\nabla \ell(y; \Theta \x)  \vec x^T) \otimes \nabla_{\vec \Theta} \weight(\x, y; \vec \Theta) )
\vec q
&= \left(\sum_{ij} \vec q_{ij} (\nabla \ell(y; \Theta \x))_{i} x_{j} \right)
 \left(\sum_{kl} \vec q_{kl} (\nabla_{\vec \Theta} \weight(x, y; \Theta )_{kl} \right)
 \\
 &\leq R L_\ell L_\weight  \,.
 \end{align*}
 Similarly, we bound the spectral norm of the term 
$  \nabla_{\vec \Theta} \weight(\x, y; \vec \Theta)  \otimes (\nabla \ell(y; \vec \Theta \x) \vec x^T) $.
Finally for the term $H$ we have 
\begin{align*}
\vec q^T  H \vec q 
= \sum_{ijkl} \vec q_{ij} x_j (\nabla \ell(y; \Theta \x))_{ik} q_{kl} x_l
= \sum_{ik} \vec s_i (\nabla \ell(y; \Theta \x))_{ik} s_k,
\end{align*}
where $\vec s \in \R^L$ has $\vec s_i = \sum_{j} \vec q_{ij} x_j$.
Observe that since $\|\x\|_2 \leq R$ and $\|q\|_2 = 1$ we have that 
$\|\vec s\|_2 \leq R$.  Therefore, from the assumption of Definition~\ref{def:regularity},
we obtain that $\|H\|_2 \leq R^2 B_\ell$.

We conclude that the function $q(\vec \theta)$ is $B_q$-smooth 
on the unit ball $\mathcal{B}$ with 
$B_q = M_\ell B_\weight + 2 L_\ell L_{\weight} R  + M_{\weight} B_{\ell} R^2 $.
\end{proof}

Let $\mathcal{N}_\eps$ be an $\eps$-net of the unit ball $\mathcal{B}$.  Using
Lemma~\ref{lem:smoothness} we first observe that the vector maps $\theta \mapsto 
\widetilde{ \vec g}(\vec \Theta)$ and $\vec \Theta \mapsto \nabla_{\vec \Theta} \Lc(\vec \Theta)$ are both
$B_q$-Lipschitz, where $B_q$ is the constant defined in Lemma~\ref{lem:smoothness}.
Using the triangle inequality and the fact that $\wh{\vec g}(\cdot)$ and 
$\nabla_{\vec \Theta} \Lc(\cdot)$ are $B_q$-Lipschitz, we have that 
\[ \max_{\|\vec \Theta\|_F \leq 1} \| \wh{\vec g}(\vec \Theta) 
- \nabla_{\vec \Theta} \Lc(\vec \Theta) \|_2  \leq  
2 B_q \eps  + 
\max_{\vec \Theta \in \mathcal{N}_\eps} 
\| \wh{\vec g}(\vec \Theta) - \nabla_{\vec \Theta} 
\Lc(\vec \Theta) \|_2  \,.
\]
Combining the above, and performing a union bound over the $\eps$-net $\mathcal{N}_\eps$, we obtain that 
\[
\pr\left[ \max_{\|\vec \Theta\|_F \leq 1} \| \widetilde{\vec g}(\vec \Theta) - 
\nabla_{\vec \Theta} \Lc(\vec \Theta) \|_F \geq (2 B_q + 1) \eps
\right]
\leq (1 + 2/\eps)^{d L} \exp\left(-\Omega \left( N \eps^2/ (d L ~ L_q^2) \right) \right)
\,.
\]
We conclude that with $N = \wt{\Omega}((dL)^2 L_q^2 B_q^2/\eps^2 \log(1/\delta) )$ samples,
it holds that 
$
\| \wh{\vec g}(\vec \Theta) - \nabla_{\vec \Theta} \Lc(\vec \Theta) \|_2  
\leq \eps$, uniformly for all parameters $\vec \Theta $ with $\|\vec \Theta\|_F \leq 1$,
with probability at least $1-\delta$.
\end{proof}
We now have to show that the multi-pass SGD finds an approximate stationary point of the empirical
objective.  We will use the following result on non-convex projected SGD.
To simplify notation, we state the following optimization lemma assuming
that the parameter is a vector $\vec \theta \in \R^d$.
\begin{lemma}[Non-Convex Projected Stochastic Gradient Descent \cite{davis18}]
    \label{lem:non-convex-psgd}
Let $\mathcal{W}$ be a closed convex set of diameter $R$.
Moreover, let $F: \R^d \mapsto \R$ be an $L$-Lipschitz and $B$-smooth 
function.  Define the stochastic gradient descent iteration as
\begin{align*}
&\vec \theta^{(0)} \gets \vec 0
\\
&\vec \theta^{(t+1)} \gets 
            \proj_{\mathcal{W}} 
            \left(
            \vec \theta\tth - \eta\tth \vec g\tth(\vec \theta\tth)
            \right)
\end{align*}
where $\vec g\tth(\vec \theta\tth)$ is an unbiased gradient estimate of $\nabla_{\vec \theta}F(\vec
\theta\tth)$.  
Fix a number of iterations $T \geq 1$ and assume that for all $t \in [T]$ it holds $\|\vec
g\tth(\vec \theta)\|_2 \leq L$ for all $\vec \theta \in \mathcal W$.  Set the step-size $\eta\tth =
\Theta(\sqrt{R/ (B L^2 T)})$.  With probability at least $99\%$, there exists a $t \in
\{1,\ldots, T\}$ such that $\vec \theta \tth$ is an $O\left(\frac{\sqrt{B L R}}{T^{1/4}}
\right)$-stationary point of $F(\cdot)$ constrained on $\mathcal{W}$.
\end{lemma}
Theorem~\ref{thm:multi-pass-gen} now follows directly by applying Lemma~\ref{lem:non-convex-psgd} on the empirical objective 
to find an $\eps$-approximate stationary point and then using 
Proposition~\ref{pro:stationary-points}.

\end{proof}

\end{document}